\documentclass[sigconf, authorversion]{acmart}
\AtBeginDocument{%
  }

\copyrightyear{2026}
\acmYear{2026}
\setcopyright{cc}
\setcctype{by-nc-nd}
\acmConference[KDD '26]{Proceedings of the 32nd ACM SIGKDD Conference on Knowledge Discovery and Data Mining V.2}{August 09--13, 2026}{Jeju Island, Republic of Korea}
\acmBooktitle{Proceedings of the 32nd ACM SIGKDD Conference on Knowledge Discovery and Data Mining V.2 (KDD '26), August 09--13, 2026, Jeju Island, Republic of Korea}
\acmDOI{10.1145/3770855.3817766}
\acmISBN{979-8-4007-2259-2/2026/08}

\usepackage{amsfonts}       
\usepackage{nicefrac}       

\usepackage{amsmath}
\usepackage{mathtools}
\usepackage{amsthm}

\theoremstyle{plain}
\newtheorem{theorem}{Theorem}

\newtheorem{lemma}[theorem]{Lemma}

\theoremstyle{definition}

\newtheorem{remark}{Remark}

\usepackage{bm}

\usepackage{algorithm}
\usepackage[noend]{algorithmic}

\newcommand{\tabref}[1]{Table~\ref{#1}}
\newcommand{\figref}[1]{Figure~\ref{#1}}

\newcommand{\lemref}[1]{Lemma~\ref{#1}}
\newcommand{\thmref}[1]{Theorem~\ref{#1}}

\newcommand{\secref}[1]{Section~\ref{#1}}
\newcommand{\appref}[1]{Appendix~\ref{#1}}
\newcommand{\algoref}[1]{Algorithm~\ref{#1}}

\DeclareMathOperator*{\argmax}{argmax}
\DeclareMathOperator*{\argmin}{argmin}

\newcommand{\R}[1]{\mathbb{R}^{#1}}
\newcommand{\N}[1]{\mathbb{Z}^{#1}_{>0}}

\newcommand{\ind}{\mathbf{1}}

\begin{document}

\title{Agile Online Model Selection: Resolving Adaptation Lag via Safeguarded Large Learning Rates}

\author{Kei Takemura}
\orcid{0000-0002-7984-1202}
\email{kei\_takemura@nec.com}
\affiliation{%
  \institution{NEC Corporation}
  \city{Tokyo}
  \country{Japan}
}

\author{Ryuta Matsuno}
\orcid{0000-0002-4543-2128}
\email{ryuta-matsuno@nec.com}
\affiliation{%
  \institution{NEC Corporation}
  \city{Tokyo}
  \country{Japan}
}

\author{Keita Sakuma}
\orcid{0009-0008-1425-4972}
\email{keita.skm@nec.com}
\affiliation{%
  \institution{NEC Corporation}
  \city{Tokyo}
  \country{Japan}
}



\begin{abstract}
Maintaining predictive accuracy in non-stationary environments requires online model selection to adapt autonomously to unknown distribution shifts. However, existing tuning-free algorithms face a fundamental trade-off between robustness and agility. Specifically, to ensure dynamic regret bounds, they must restrict learning rates to small constants (e.g., $O(1)$). This restriction inevitably causes significant adaptation lag during abrupt changes. To resolve this, we propose a novel optimistic online mirror descent that utilizes safeguarded large learning rates up to $\Theta(T)$, where $T$ is the number of rounds. Our key technical contribution is a post-hoc penalty mechanism that dynamically monitors unstable updates and excludes learning rates incurring excessive regret, eliminating the need for restrictive a priori constraints. We show that the cumulative penalty remains $O(\log T)$, allowing our algorithm to match near-optimal worst-case guarantees while achieving superior rates in benign cases. Empirical evaluations on three synthetic and eleven diverse real-world datasets demonstrate that our approach reduces the adaptation lag from hundreds of rounds to a few rounds, consistently outperforming tuning-free baselines.
\end{abstract}

\begin{CCSXML}
<ccs2012>
   <concept>
       <concept_id>10003752.10010070.10010071.10011194</concept_id>
       <concept_desc>Theory of computation~Regret bounds</concept_desc>
       <concept_significance>500</concept_significance>
       </concept>
   <concept>
       <concept_id>10003752.10010070.10010071.10010079</concept_id>
       <concept_desc>Theory of computation~Online learning theory</concept_desc>
       <concept_significance>500</concept_significance>
       </concept>
 </ccs2012>
\end{CCSXML}

\ccsdesc[500]{Theory of computation~Regret bounds}
\ccsdesc[500]{Theory of computation~Online learning theory}

\keywords{Online Learning, Online Model Selection, Optimistic Online Mirror Descent, Regret Bound}

\maketitle


\section{Introduction}\label{sec:introduction}

Modern stream data analysis requires accurate predictive models over long periods in environments where statistical properties change unpredictably~\citep{domingos2000,gama2010knowledge,gomes2019,krawczyk2017}.
In such non-stationary environments, distribution shift, encompassing phenomena such as covariate shift and concept drift, significantly degrades the predictive accuracy of static models~\citep{gama2014survey,ditzler2015learning}.
For instance, factors such as sensor degradation or abrupt changes in user preferences frequently cause distribution shifts~\citep{gama2014survey}.
Therefore, ensuring the ability to autonomously track distribution shifts is essential for reliable real-world applications.

In Online Model Selection (OMS)~\citep{kolter2007dwm,kolter2005addexp,street2001sea,wang03awe,deckert11bwe,brzezinski14aue2,gomes2017adaptive,oza2001online},\footnote{
The literature describes this problem setting using terms such as online ensemble learning~\citep{oza2001online}, streaming ensemble~\citep{street2001sea}, and additive expert ensemble~\citep{kolter2005addexp}.
We refer to this setting as OMS to emphasize the adaptive weighting mechanism in non-stationary environments.
} a learner aims to maximize predictive performance by dynamically identifying the optimal model from a set of candidate models.
The learner operates multiple predictive models with different properties throughout a repeated process.
In each round, the learner evaluates the accuracy that individual models have demonstrated on recent data.
Then, it assigns larger weights to models that perform better under current data trends.
Through these dynamic weight assignments, OMS algorithms seek to preserve predictive performance in non-stationary environments.
This adaptability makes OMS a crucial framework for enhancing the robustness of predictive performance against complex and fluctuating data streams.

Conventional OMS algorithms often require frequent hyperparameter tuning~\citep{elwell11learnppnse,kolter2005addexp,kolter2007dwm,Lu2017,zhao18condor,Matsubara2025}.
For example, Dynamic Weighted Majority (DWM)~\citep{kolter2007dwm} requires setting the decay rate (learning rate), which is a parameter determining how much the weight of each model is attenuated according to the loss.
The predictive performance of these algorithms strongly depends on the hyperparameters.
When the frequency or rate of distribution shifts is unknown, one must continue adjusting these hyperparameters through trial and error.
In real-world environments where optimal hyperparameters change over time, this tuning effort makes the automation of predictive model operations impractical.

Existing algorithms with theoretical guarantees have enabled the automatic adjustment of parameters in OMS~\citep{chen21impossible,luo2015achieving,jun17a,koolen15second,derooij14flipflop}.
Among them, Multi-scale Multiplicative-weight with Correction (MsMwC)~\citep{chen21impossible} achieves a state-of-the-art theoretical guarantee based on the Optimistic Online Mirror Descent (OOMD) framework.
This algorithm dynamically selects an appropriate learning rate from a multi-scale grid of learning rates based on data.
This mechanism eliminates the need for hyperparameter tuning, allowing the learner to track distribution shifts autonomously.

However, the theoretical analysis of OOMD inherently creates an unavoidable barrier to adaptation agility.
The existing analysis relies heavily on approximating the exponential weight updates with a quadratic function, which is a standard technique widely employed in the analysis of OOMD.
Intuitively, to ensure the validity of this approximation, the algorithm must strictly satisfy the condition $\eta \|\ell_t\|_{\infty} \le O(1)$,
where $\eta$ denotes the learning rate and $\ell_t$ represents the losses of candidate models observed at round $t$.
If an algorithm utilizes a large learning rate that violates this condition, the approximation error becomes uncontrollable, rendering the derivation of a meaningful theoretical guarantee impossible.
Consequently, previous studies are forced to restrict learning rates to conservative values ($O(1)$) to preserve theoretical consistency.

This theoretical requirement for conservative learning rates creates a significant adaptation lag in practice, particularly during abrupt distribution shifts.
While the algorithm should ideally shift weights rapidly to a new optimal model when a drift occurs, the $O(1)$ constraint limits the weight fluctuation per round to a constant factor.
As a result, increasing the weight of the best model from its lower bound ($O(1/T)$) to a dominant level ($\Omega(1)$) requires $\Omega(\log T)$ rounds,
where $T$ is the number of weight updates in OMS.
Resolving this delay while maintaining the robustness guaranteed by the theoretical analysis remains a critical challenge in OMS.

To resolve the trade-off between robustness and agility, we propose a novel OOMD that utilizes safeguarded large learning rates.
While existing tuning-free algorithms restrict learning rates to $O(1)$ to ensure the stability of weight updates, our approach employs a broad geometric grid of learning rates scaling up to $\Theta(T)$.
This expanded search space allows the algorithm to shift weights to the optimal model within a few rounds when distribution shifts occur,
thereby eliminating the significant adaptation lag observed in existing algorithms.

This aggressive adaptation is reconciled with theoretical robustness through a penalty-based exclusion rule that manages the risks associated with large learning rates.
Specifically, the algorithm detects instability when the product of the learning rate and the loss exceeds the theoretical threshold required for the approximation.
If the total penalty increases and unstable updates have significantly affected the predictive performance (i.e., regret), the utilization of the corresponding large learning rate is terminated.
We formally show that this exclusion rule is indispensable: without this safeguard, large learning rates lead to linear regret even in static environments, implying a failure to converge to the optimal model.

Notably, our safeguard mechanism enables the proposed algorithm to match existing near-optimal regret bounds in the worst case while achieving superior rates in benign cases.
Our analysis shows that the cumulative penalty remains bounded by a negligible $O(\log T)$ term, preserving the worst-case performance guarantee.
Furthermore, when the loss vector can be predicted with high accuracy, the algorithm safely utilizes optimal learning rates larger than constant order.
This capability facilitates agile tracking not only for abrupt distribution shifts but also for incremental shifts, as demonstrated by our analysis.
Consequently, the proposed algorithm drastically reduces the adaptation lag and effectively handles continuous distribution shifts without compromising robustness.
These results show that the safeguard mechanism successfully integrates theoretical robustness with the agility necessary for practical data streams.

Empirical evaluations on synthetic datasets quantitatively verify the resolution of adaptation lag.
In the abrupt drift scenario, the proposed algorithm completes the adaptation to the optimal model within a few rounds, whereas MsMwC requires hundreds of rounds to shift the weights.
This agile adaptation reduces the cumulative loss by approximately 80\% compared to MsMwC.
Moreover, our algorithm maintains superior performance even under incremental changes and data corruption, consistently outperforming MsMwC and other baselines.
These results demonstrate that the safeguard mechanism successfully reconciles agility in abrupt changes with robustness against noise.

Furthermore, experiments on diverse real-world data streams confirm the versatility of our approach.
The proposed algorithm achieved the lowest cumulative loss among all tuning-free algorithms across all eleven datasets tested, ranging from classification to regression tasks.
In addition to its predictive performance, the algorithm maintains computational efficiency suitable for large-scale applications.
While the safeguard mechanism expands the search space of learning rates, the average runtime remains approximately twice that of MsMwC, consistent with our theoretical analysis.
In terms of scalability, empirical results confirm that the computation time scales linearly with the number of experts.
Collectively, these results demonstrate that our approach successfully resolves the fundamental trade-off between theoretical robustness and practical agility.

\section{Preliminaries}\label{sec:preliminaries}

We begin by establishing the mathematical notation used throughout this paper.
Let $[K] := \{1, 2, \dots, K\}$ denote the set of $K$ experts.
For any vector $x \in \mathbb{R}^K$, we denote its $i$-th coordinate by $x(i)$.
We define the $(K-1)$-dimensional probability simplex as $\Delta_K := \{p \in \mathbb{R}^K \mid p(i) \ge 0, \sum_{i \in [K]} p(i) = 1\}$, which represents the space of the learner's prediction distributions.
For a convex function $\psi$, the Bregman divergence between two points $x, y$ is defined as $D_{\psi}(y, x) := \psi(y) - \psi(x) - \langle \nabla\psi(x), y - x \rangle$.
In this paper, we consider $\log\log T$ factors as constants as in previous studies~\citep{gaillard14second,luo2015achieving}.
Thus, $O$-notation hides the constants and the $\log\log T$ factors.

We address the OMS problem, where a learner sequentially updates a distribution over a dynamic set of candidate models.
To ensure theoretical clarity in the main text, we focus on a simplified setting where the number of experts (candidate models) $K$ remains fixed and the losses lie within $[0, 1]$.
This formulation is known as \textit{tracking the best expert}~\citep{Herbster2001tracking,cesa2006prediction,wei16tracking}.
We provide the generalized algorithm and analysis for the general setting in \appref{app:problem_setting}.
The learning protocol proceeds as follows: at each round $t \in [T]$, the learner first receives a prediction vector $m_t \in [0, 1]^K$.\footnote{
    While classical tracking the best expert problems do not include a prediction vector, we introduce $m_t$ to clarify optimistic online learning algorithms in recent studies~\citep{wei16tracking,chen21impossible}. Algorithms for the classical setting can be applied by setting $m_t = 0$.
}
Next, the learner chooses a distribution $p_t \in \Delta_K$.
Then, the learner observes the loss vector $\ell_t \in [0, 1]^K$ and suffers a loss $\langle \ell_t, p_t \rangle$.
The learner's goal is to minimize the dynamic regret defined below.

To evaluate the algorithm's performance in non-stationary environments where the optimal expert changes over time, we adopt \textit{dynamic regret}.
Comparing performance against a fixed comparator $u$ implicitly assumes a stationary environment.
However, in non-stationary environments, the best expert shifts over time.
Therefore, we define a sequence of comparators $u_{1:T} = (u_1, \dots, u_T)$ for each round $t \in [T]$ and formulate the difference from the learner's cumulative loss as follows:
\begin{equation*}
    R_T(u_{1:T}) = \sum_{t=1}^T \langle \ell_t, p_t - u_t \rangle.
\end{equation*}
We quantify the degree of non-stationarity in the comparator sequence using the \textit{path-length} $V(u_{1:T}) = \sum_{t \in [T]} \|u_t - u_{t-1}\|_1$, with the convention $u_0(i) = 0$ for all $i \in [K]$.
This metric allows us to evaluate how quickly the learner adapts to unknown changes.
Consequently, dynamic regret analysis serves as a critical measure to demonstrate both the theoretical robustness and the practical agility of the proposed algorithm.

The existing algorithm that achieves a state-of-the-art dynamic regret bound is MsMwC~\citep{chen21impossible}.
This algorithm incorporates a multi-scale learning rate grid and a second-order correction term into the OOMD framework.
Specifically, it executes the following two-step update at each round:\footnote{Strictly speaking, the algorithm achieving the dynamic regret bound employs a two-layer structure of MsMwC. For clarity, we describe the single-layer update rule here.}
\begin{align*}
    w_t &= \arg\min_{w \in \Omega_t} \langle m_t, w \rangle + D_{\psi}(w, w_t'), \\
    w_{t+1}' &= \arg\min_{w \in \Omega_t} \langle \ell_t + a_t, w \rangle + D_{\psi}(w, w_t'),
\end{align*}
where $\Omega_t \subseteq \Delta_K$ is the decision set, and $a_t$ is the correction term defined by $a_{t}(i) = 32\eta_{t}(i)(\ell_{t}(i) - m_{t}(i))^2$.
The algorithm uses a negative entropy function with a learning rate vector $\eta$ as the regularizer:
$\psi(w) = \sum_{i=1}^K \frac{1}{\eta(i)} w(i) \ln w(i)$.
This multi-scale design enables tuning-free online learning that autonomously adapts to unknown data statistics.

However, the existing analysis for MsMwC relies on a stringent constraint regarding the product of the learning rate and the prediction error for the loss vector.
To maintain analytical consistency, the algorithm requires the condition $32\eta(i)|\ell_{t}(i) - m_{t}(i)| \le 1$ to hold for all rounds.
When the prediction error $|\ell_{t}(i) - m_{t}(i)|$ is large, this constraint forces the learning rate $\eta_{t}(i)$ to be $O(1)$.
Consequently, MsMwC restricts the multiplicative weight update to a constant factor range per round.
While this ensures stability, it makes immediate weight shifting to a new best expert impossible when distribution shifts occur.
Specifically, increasing a weight from the lower bound ($O(1/T)$) to a dominant level $\Omega(1)$ under this constraint requires $\Omega(\log T)$ rounds.
Thus, the conservative learning rate constraint imposed by the existing analysis is the direct cause of the significant \textit{adaptation lag} observed in practice.


\section{Proposed Algorithm}\label{sec:proposed}

\begin{algorithm}[tb]
    \caption{OOMD with Safeguarded Large Learning Rates}
    \label{alg:proposed}
    \begin{algorithmic}[1]
        \REQUIRE Horizon $T \in \N{}$.
        \STATE Set $M = \lceil \log_2 T^2 \rceil$ and $\varepsilon = \frac{1}{KMT^3}$.
        \STATE Define learning rates $\eta(j) = \frac{2^j}{16T}$ for all $j \in [M]$.
        \STATE Initialize weights $w'_1(i, j) \propto \eta(j)^2$ such that $\sum_{i,j} w'_1(i,j) = 1$.
        \STATE Initialize penalties $L_1(j) = 0$ for all $j \in [M]$.
        \FOR{$t = 1, 2, \dots, T$}
            \STATE Receive prediction vector $m_t \in \R{K}$.
            \STATE Let $\tilde{m}_t(i,j) = m_t(i)$ for all $i \in [K]$ and $j \in [M]$.
            \STATE Construct active set $\mathcal{J}_t = \{ j \in [M] \mid L_t(j) \le 1 \}$.
            \STATE Compute $w_t \in \argmin_{w \in \Omega_t} \langle \tilde{m}_t, w \rangle + D_{\psi}(w, w'_t)$, where $\Omega_t = \{ w \in \Delta_{K \times M} \mid w(i,j) \ge \varepsilon\,(\forall j \in \mathcal{J}_t),\ w(i,j) = \varepsilon\,(\forall j \notin \mathcal{J}_t) \}$.
            \STATE Compute final prediction $p_t(i) \propto \sum_{j \in \mathcal{J}_t} w_t(i,j)$.
            \STATE Play $p_t$ and observe loss vector $\ell_t \in [0,1]^{K}$.
            \STATE Let $\tilde{\ell}_t(i,j) = \ell_t(i)$ for all $i \in [K]$ and $j \in [M]$.
            \STATE Compute $w'_{t+1} \in \argmin_{w \in \Omega_t} \langle \tilde{\ell}_t + a_t, w \rangle + D_{\psi}(w, w'_t)$, with correction $a_t(i,j) = 32\eta(j)(\ell_t(i) - m_t(i))^2$.
            \STATE Update penalties $L_{t+1}(j)$ according to \eqref{eq:penalty_update}.
        \ENDFOR
  \end{algorithmic}
\end{algorithm}

We propose a novel algorithm, \textit{OOMD with Safeguarded Large Learning Rates}, designed to resolve adaptation lag by safely leveraging large learning rates.
Our algorithm builds upon the OOMD framework, which utilizes loss predictions to reduce regret.
While our algorithm shares the OOMD foundation with MsMwC~\citep{chen21impossible}, it differs from MsMwC in three key aspects, each addressing a specific limitation that prevents existing analyses from exploiting large learning rates.
First, regarding the architecture, \citet{chen21impossible} employed a two-layer hierarchical structure, where a master algorithm manages multiple base algorithms running with different learning rates.
In contrast, we redefine the decision space as a product space of experts indexed by $i$ and learning rates indexed by $j$, treating each pair $(i, j)$ as a single virtual expert, and execute OOMD weight updates directly on the expanded probability simplex $\Delta_{K \times M}$, where $M = \lceil \log_2 T^2 \rceil$.
As we will discuss in Remark~\ref{rem:single-layer}, this single-layer design is theoretically indispensable for utilizing large learning rates.
Second, regarding the range of learning rates, MsMwC restricts learning rates to $O(1)$ via the a priori stability constraint $32 \eta(i) |\ell_t(i) - m_t(i)| \le 1$, whereas our algorithm employs a broad geometric grid scaling up to $\Theta(T)$ to enable rapid weight redistribution under distribution shifts.
Third, regarding the stability control, while MsMwC enforces stability through the aforementioned a priori restriction, we introduce a post-hoc penalty mechanism that monitors unstable updates and excludes learning rates only after they incur excessive regret.
The complete procedure is outlined in Algorithm~\ref{alg:proposed}.

The regret guarantees of \algoref{alg:proposed} are summarized as follows.
In the worst case, our algorithm matches the state-of-the-art switching regret bound of MsMwC up to constant factors (Theorem~\ref{thm:switching_regret}), despite utilizing learning rates up to $\Theta(T)$.
In benign environments where the cumulative prediction error is small, our algorithm achieves a strictly tighter regret bound (Theorem~\ref{thm:switching_regret_benign}), eliminating the $O(S \log T)$ overhead incurred by the $O(1)$ learning rate restriction of MsMwC.
Furthermore, to the best of our knowledge, our algorithm provides the first path-length regret bound that adapts to the cumulative squared prediction error (Theorem~\ref{thm:path_length}).
We also prove that the safeguard mechanism is indispensable: without it, large learning rates lead to linear regret even in static environments (Theorem~\ref{thm:safeguard_necessity}).
We defer the formal statements of these results to Section~\ref{sec:regret}.

We now elaborate on the algorithmic components corresponding to the three differences from MsMwC outlined above.
The learning rate grid is defined geometrically as $\eta(j) = \frac{2^j}{16T}$ for $j \in [M]$, providing $M$ candidates that span from $\Theta(1/T)$ up to $\Theta(T)$.
This extensive range allows the proposed algorithm to autonomously explore and select the optimal adaptation agility corresponding to the unknown drift dynamics, enabling rapid weight redistribution under non-stationary data streams.

To manage the risk associated with large learning rates, our algorithm introduces a post-hoc penalty term $L$ that monitors and accumulates the impact of unstable updates.
For each learning rate level $j$, the learner detects instability when the product of the learning rate and the prediction error exceeds a certain threshold.
Specifically, we update the penalty $L_{t+1}(j)$ as follows:
\begin{equation}\label{eq:penalty_update}
    L_{t+1}(j) = L_t(j) + \sum_{i \in [K]} \ind[32 \eta(j)|r_t(i)| > 1] w_t(i, j) r_t(i),
\end{equation}
where $\ind[\cdot]$ is the indicator function and $r_t(i) = \ell_t(i) - m_t(i)$ denotes the prediction error.
Note that the penalty term could be negative.
The penalty term serves as a safeguard, precisely recording the additional regret incurred by large learning rates and ensuring the regret bound remains within the same order as existing guarantees.
Crucially, this post-hoc management mechanism eliminates the need for the restrictive a priori constraints on learning rates required by previous algorithms, thereby enabling aggressive weight updates in non-stationary environments.

Based on this penalty, the proposed algorithm dynamically excludes learning rates that exceed a safety threshold from the active set $\mathcal{J}_t$.
At the beginning of each round, the learner evaluates the cumulative penalty $L_t(j)$ and constructs the set $\mathcal{J}_t$ using only the indices that satisfy the following condition:
\begin{equation*}
    \mathcal{J}_t = \{j \in [M] \mid L_t(j) \le 1\}.
\end{equation*}
Learning rates with a cumulative penalty exceeding 1 are considered to have caused an unacceptable increase in the regret bound and are removed from the update process.
This exclusion rule limits the regret growth caused by large learning rates to a negligible level.
Consequently, our algorithm successfully reconciles the practical need for agility with state-of-the-art theoretical guarantees.

The computational efficiency of the proposed algorithm relies on the optimization step for the weight update in lines 9 and 13.
While this problem lacks a closed-form solution due to the truncated simplex $\Omega_t$, the Karush-Kuhn-Tucker (KKT) conditions reduce the problem to finding a single scalar Lagrange multiplier associated with the simplex constraint.
Since the sum of the weights is a monotonically increasing function of this multiplier, we can determine its optimal value via a binary search.
This reduction transforms the potentially complex high-dimensional optimization into a simple root-finding problem, enabling the algorithm to handle the weight redistribution efficiently even with the expanded search space.

This optimization strategy ensures a per-round time complexity of $O(K \log^2 T)$, maintaining scalability even with the expanded parameter space.
Specifically, the binary search requires $O(\log T)$ iterations to achieve the precision sufficient to prevent regret degradation.
Within each iteration, evaluating the sum of weights involves scanning the $K \times M$ virtual experts, which costs $O(KM)$ operations.
Since the number of learning rate layers is logarithmic in the horizon (i.e., $M = O(\log T)$), the total cost amounts to $O(K \log^2 T)$.
This polylogarithmic dependence on $T$ indicates that the computational overhead is negligible compared to the linear dependence on $K$, making the algorithm suitable for real-world applications.


\section{Regret Analysis}\label{sec:regret}

In this section we provide our regret bounds of the proposed algorithm.
We defer the missing proofs to \appref{app:proof}.

We begin by establishing a fundamental lemma that bounds the regret over any time interval.
This lemma decomposes the regret into standard terms minimized by an appropriate learning rate and a cumulative penalty term corresponding to unstable updates.

\begin{lemma}
    \label{lem:general_bound}
    Let $r_t(i) = \ell_t(i) - m_t(i)$ and $q_t(j) = \sum_{i \in [K]} w'_t(i,j)$.
    Under the assumptions in \secref{sec:preliminaries}, \algoref{alg:proposed} satisfies the following bound for any interval $\{t_1, \dots, t_2\} \subseteq [T]$, any comparator $u \in \Delta_K$, and any learning rate index $j^* \in \bigcap_{t = t_1}^{t_2} \mathcal{J}_t$:
    \begin{align*}
        &\quad \sum_{t = t_1}^{t_2} \langle \ell_t, p_t - u \rangle \\
        &\le \frac{4\log(KMT)}{\eta(j^*)}
        + 32\eta(j^*) \sum_{t = t_1}^{t_2} \sum_{i \in [K]} u(i) r_t(i)^2 \\
        &\quad + \sum_{t = t_1}^{t_2}\sum_{j \in [M]} \frac{q_{t_1}(j) - q_{t_2+1}(j)}{\eta(j)}
        + \sum_{t = t_1}^{t_2} \sum_{j \in [M]} \Delta L_t(j) \\
        &\quad + O\left(\frac{\log(KMT) + \max_{t \in [T]} \|r_t\|_\infty}{T^2}\right),
    \end{align*}
    where $\Delta L_t(j) = \sum_{i \in [K]} \ind[32\eta(j)|r_t(i)| > 1] w_t(i,j) r_t(i)$ represents the instantaneous penalty incurred by unstable updates.
\end{lemma}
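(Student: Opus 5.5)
We follow the standard OOMD analysis with a truncated-simplex comparator, and treat the large-learning-rate rounds separately.

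\textbf{Step 1: reduction to the expanded simplex.} Given $u \in \Delta_K$ and $j^*$, we build a comparator $\tilde u \in \Omega_t$ for all $t \in \{t_1,\dots,t_2\}$. It places mass $(1-\kappa)u(i)$ on $(i,j^*)$ and $\varepsilon$ on every other coordinate, where $\kappa = O(KM\varepsilon) = O(1/T^2)$ makes $\tilde u$ a distribution. This is feasible because $j^* \in \mathcal{J}_t$ throughout the interval. Since $p_t$ is the normalized restriction of $w_t$ to the active set, and the inactive coordinates carry only mass $\varepsilon$, we have $\langle \ell_t, p_t\rangle \le \langle \tilde\ell_t, w_t\rangle + O(KM\varepsilon)$. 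Similarly $\langle \ell_t, u\rangle \ge \langle \tilde \ell_t, \tilde u\rangle - O(KM\varepsilon)$. These discrepancies give the $O(\cdot/T^2)$ term, with the factor $\|r_t\|_\infty$ appearing where $m_t$ enters the optimistic decomposition.

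\textbf{Step 2: per-round OOMD inequality.} We use the standard two-step optimality conditions for lines 9 and 13. The first-order conditions on the common set $\Omega_t$ give
\[
\langle \tilde\ell_t + a_t, w_t - \tilde u\rangle \le D_\psi(\tilde u,w'_t) - D_\psi(\tilde u,w'_{t+1}) + \langle \tilde\ell_t + a_t - \tilde m_t, w_t - w'_{t+1}\rangle - D_\psi(w'_{t+1},w_t) - D_\psi(w_t,w'_t).
\]
We then bound the stability term coordinatewise.

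\emph{Stable coordinates} are those with $32\eta(j)|r_t(i)|\le1$. For these, the usual exponential/quadratic approximation gives a contribution of at most $16\eta(j) w_t(i,j) r_t(i)^2$, which is absorbed by $-\langle a_t, w_t\rangle$. This leaves $+\langle a_t,\tilde u\rangle \approx 32\eta(j^*)\sum_i u(i) r_t(i)^2$.

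\emph{Unstable coordinates} are those with $32\eta(j)|r_t(i)|>1$. Here we do not use the approximation. Instead we bound the stability term crudely by the first-order quantity $w_t(i,j) r_t(i)$, dropping the negative Bregman terms and $a_t \ge 0$ where signs permit. This yields exactly $\Delta L_t(j)$.

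I expect this coordinate split to be the main obstacle. The Bregman terms do not decompose cleanly across coordinates, because of the shared Lagrange multiplier and the truncation at $\varepsilon$. The plan is to work with the explicit KKT form $w(i,j)\propto w'_t(i,j)\exp(-\eta(j)(\cdot + \lambda))$, clipped at $\varepsilon$. We then show that in the unstable case $\langle r_t, w_t - w'_{t+1}\rangle$ restricted to that coordinate is at most $w_t(i,j) r_t(i)$ plus nonpositive terms. The sign of $r_t(i)$ matters: when $r_t(i)<0$ the weight grows, and we rely on $-D_\psi(w'_{t+1},w_t)$ to control it.

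\textbf{Step 3: telescoping.} Summing over the interval, $D_\psi(\tilde u,w'_{t_1}) - D_\psi(\tilde u,w'_{t_2+1})$ splits into two parts. The $j^*$ part is bounded by $\frac{1}{\eta(j^*)}\sum_i u(i)\log\frac{u(i)}{w'_{t_1}(i,j^*)}$, and since every weight is at least $\varepsilon$ this is at most $4\log(KMT)/\eta(j^*)$. The remaining linear terms $\sum_j \frac{1}{\eta(j)}(w'(i,j)-\tilde u(i,j))$ telescope into the $\sum_j (q_{t_1}(j)-q_{t_2+1}(j))/\eta(j)$ term. The $\varepsilon\log\varepsilon$ contributions of the non-$j^*$ coordinates are $O(KM\varepsilon \cdot T\log(KMT)) = O(\log(KMT)/T^2)$, using $1/\eta(j)\le 16T$.

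Combining Steps 1–3 gives the stated bound.
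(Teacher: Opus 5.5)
Your Steps 1 and 3 match the paper's proof, apart from one slip. Putting only $(1-\kappa)u(i)$ on $(i,j^*)$ violates the constraint $w(i,j^*)\ge\varepsilon$ of $\Omega_t$ whenever $u(i)<\varepsilon$. The paper uses $(1-T^{-3})u(i)+\varepsilon$ there, and the resulting $\varepsilon$-mass inside $\langle a_t,\bar u\rangle$ is the source of the $\max_t\|r_t\|_\infty$-dependent lower-order term.

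The genuine gap is the unstable case of Step 2. You flag it as the main obstacle but leave it unresolved, and the route you sketch would not close it. First, there is no coupling problem that requires KKT conditions. Because $\psi$ is separable, $D_\psi(x,y)=\sum_{i,j}\frac{1}{\eta(j)}\bigl(x(i,j)\log\frac{x(i,j)}{y(i,j)}-x(i,j)+y(i,j)\bigr)$ with every summand nonnegative. With $v_t=\tilde\ell_t-\tilde m_t+a_t$, you can therefore relax $w'_{t+1}$ to the whole nonnegative orthant, as the paper does:
\begin{align*}
\langle v_t,w_t-w'_{t+1}\rangle-D_\psi(w'_{t+1},w_t)
&\le\max_{w\in\mathbb{R}^{K\times M}_{+}}\bigl\{\langle v_t,w_t-w\rangle-D_\psi(w,w_t)\bigr\}\\
&=\sum_{i,j}\frac{w_t(i,j)}{\eta(j)}\bigl(\eta(j)v_t(i,j)-1+e^{-\eta(j)v_t(i,j)}\bigr).
\end{align*}
The shared multiplier and the truncation only determine which point $w'_{t+1}$ is realized, and this maximum dominates all of them. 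Your stable case already relies on this relaxation implicitly.

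Second, in the unstable case you must keep the full $v_t(i,j)$, not just $r_t(i)$. The condition $32\eta(j)|r_t(i)|>1$ gives
$v_t(i,j)=r_t(i)+32\eta(j)r_t(i)^2\ge|r_t(i)|\,(32\eta(j)|r_t(i)|-1)>0$, whatever the sign of $r_t(i)$. So that coordinate's term is at most $w_t(i,j)v_t(i,j)=w_t(i,j)r_t(i)+a_t(i,j)w_t(i,j)$. The second piece cancels against $-\langle a_t,w_t\rangle$, at unstable coordinates as well as stable ones, leaving exactly $\Delta L_t(j)$.

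Your plan instead drops $-a_t(i,j)w'_{t+1}(i,j)$ ``where signs permit'' and tries to control $-r_t(i)w'_{t+1}(i,j)$ for $r_t(i)<0$ with $-D_\psi(w'_{t+1},w_t)$. That breaks down. The Bregman term carries the factor $1/\eta(j)$, which is as small as $\Theta(1/T)$. Indeed, $\sup_{x\ge0}\{-rx-\frac{1}{\eta}(x\log\frac{x}{w}-x+w)\}=\frac{w}{\eta}(e^{\eta|r|}-1)$, which is exponentially large in $\eta|r|$.

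Your intuition that the weight grows when $r_t(i)<0$ is exactly what the correction $a_t$ rules out in this regime. The effective loss $v_t(i,j)$ is positive, so only the nonpositive terms $-v_t(i,j)w'_{t+1}(i,j)$ and $-D_\psi$ remain, and both can be dropped.
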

\begin{proof}[Proof Sketch]
    First, we bound the regret in the original decision space by the regret in the expanded decision space:
    \begin{align*}
        \sum_{t = t_1}^{t_2} \langle \ell_t, p_t - u \rangle 
        &\le \sum_{t = t_1}^{t_2} \langle \tilde{\ell}_t, w_t - \bar{u} \rangle + O(T^{-3}),
    \end{align*}
    where $\bar{u} \in \Omega_t$ denotes a virtual comparator in the expanded decision space.
    Here, we defer the definition of $\bar{u}$ to the formal proof for clarity.
    Applying a standard OOMD analysis (e.g., Lemma 15 of \citet{chen21impossible}) yields
    \begin{align*}
        \sum_{t = t_1}^{t_2} \langle \tilde{\ell}_t, w_t - \bar{u} \rangle
        &\le \sum_{t = t_1}^{t_2} (D_{\psi}(\bar{u}, w'_t) - D_{\psi}(\bar{u}, w'_{t+1}))
        + \sum_{t = t_1}^{t_2} A_t,
    \end{align*}
    where $A_t = \langle \tilde{\ell}_t - \tilde{m}_t + a_t, w_t - w'_{t+1} \rangle - D_{\psi}(w'_{t+1}, w_t) - \langle a_t, w_t - \bar{u} \rangle$.
    We will discuss the terms on the right-hand side separately.

    The first term is bounded via straightforward algebraic manipulation as follows:
    \begin{align*}
        &\quad \sum_{t = t_1}^{t_2} (D_{\psi}(\bar{u}, w'_t) - D_{\psi}(\bar{u}, w'_{t+1})) \\
        &\le \frac{4\log(KMT)}{\eta(j^*)}
        + \sum_{j \in [M]} \frac{q_{t_1}(j) - q_{t_2+1}(j)}{\eta(j)}
        + O(\log(KMT)/T^2).
    \end{align*}

    The second term involves our novel analysis.
    We fix $t \in [T]$ arbitrarily.
    Let $v_t = \tilde{\ell}_t - \tilde{m}_t + a_t$ and
    \begin{align*}
        w^* \in \argmax_{w \in \R{K \times M}_+} \langle v_t, w_t - w \rangle - D_{\psi}(w, w_t),
    \end{align*}
    where $\R{K \times M}_+ = \{ x \in \R{K \times M} \mid \forall (i,j) \in [K] \times [M],\ x(i,j) \ge 0 \}$.
    Then, we have
    \begin{align*}
        &\quad A_t + \langle a_t, w_t - \bar{u} \rangle \\
        &= \langle v_t, w_t - w'_{t+1} \rangle - D_{\psi}(w'_{t+1}, w_t) \\
        &\le \langle v_t, w_t - w^* \rangle - D_{\psi}(w^*, w_t) \\
        &= \langle \nabla\psi(w_t) - \nabla\psi(w^*), w_t - w^* \rangle - D_{\psi}(w^*, w_t) \\
        &= \sum_{i \in [K],\, j \in [M]} \frac{w_t(i,j)}{\eta(j)} \left( \eta(j) v_t(i,j) - 1 + e^{-\eta(j) v_t(i,j)} \right).
    \end{align*}
    We will fix $(i,j)$ arbitrarily and discuss two cases based on the magnitude of the update.

    \paragraph{Case 1: small learning rate ($32\eta(j)|r_t(i)| \le 1$)}
    In this regime, the existing analysis holds.
    Specifically, we have
    \begin{align*}
        \eta(j)|v_t(i,j)| \le (1 + 32\eta(j)|r_t(i)|)\eta(j)|r_t(i)| \le 1.
    \end{align*}
    Therefore, since $e^{-x} - 1 + x \le x^2$ for $x \ge -1$, we obtain
    \begin{align*}
        &\quad \frac{w_t(i,j)}{\eta(j)} \left( \eta(j) v_t(i,j) - 1 + e^{-\eta(j) v_t(i,j)} \right) \\
        &\le \eta(j) w_t(i,j) v_t(i,j)^2
        \le 4 \eta(j) w_t(i,j) r_t(i)^2.
    \end{align*}

    \paragraph{Case 2: large learning rate ($32\eta(j)|r_t(i)| > 1$)}
    In this case, the condition implies
    \begin{align*}
        v_t(i,j) \ge (32\eta(j)|r_t(i)| - 1)|r_t(i)| > 0.
    \end{align*}
    Thus, we have
    \begin{align*}
        &\quad \frac{w_t(i,j)}{\eta(j)} \left( \eta(j) v_t(i,j) - 1 + e^{-\eta(j) v_t(i,j)} \right) 
        \le \frac{w_t(i,j)}{\eta(j)} \eta(j) v_t(i,j) \\
        &= 32 \eta(j) w_t(i,j) r_t(i)^2 + w_t(i,j) r_t(i).
    \end{align*}

    Summing over all $(i,j)$ and combining these two cases, we obtain
    \begin{align*}
        A_t
        &\le \sum_{i \in [K],\, j \in [M]} (32 \eta(j) w_t(i,j) r_t(i)^2
        - \langle a_t, w_t - \bar{u} \rangle) \\
        &\quad + \sum_{i \in [K],\, j \in [M]} \ind[32\eta(j)|r_t(i)| > 1] w_t(i,j) r_t(i) \\
        &\le 32\eta(j^*) \sum_{i \in [K]} u(i)(\ell_t(i) - m_t(i))^2
        + \sum_{j \in [M]} \Delta L_t(j) \\
        &\quad + O(\|r_t\|_\infty / T^{2}).
    \end{align*}
\end{proof}

\begin{remark}[Necessity of single-layer architecture over \citet{chen21impossible}]\label{rem:single-layer}
Our proof of \lemref{lem:general_bound} shows the theoretical necessity of our single-layer architecture over the two-layer architecture employed in \citet{chen21impossible}.
The existing analysis for two-layer MsMwC relies heavily on a negative term derived from the correction $-\langle a_t, w_t - \bar{u} \rangle$ within the base algorithm's regret bound.
This term possesses a coefficient large enough to offset the regret incurred by the master algorithm, thereby establishing the desired regret bound.
However, as demonstrated in Case 2, the analysis for large learning rates does not yield this negative cancellation term.
Consequently, a two-layer structure would fail to achieve the same regret bound when a large learning rate is selected.
While the two-layer MsMwC can be converted into a single-layer architecture as discussed in \citet{chen21impossible},
such a conversion within the existing analysis remains restricted to small learning rates.
Our novel analysis overcomes this limitation, enabling the algorithm to safely leverage large learning rates while maintaining robust theoretical guarantees.
\end{remark}

The significance of \lemref{lem:general_bound} lies in the explicit inclusion of the cumulative penalty $L$ within the regret bound.
This property implies that the regret increase caused by unstable weight updates (due to large learning rates) can be managed post-hoc within the analysis.
This serves as the theoretical foundation for proving that our aggressive adaptation strategy does not compromise theoretical performance guarantees.

Based on this lemma, we first show that the proposed algorithm maintains a near-optimal regret bound in the worst case, where the optimal expert switches $S$ times.

\begin{theorem}[Switching Regret for Worst Case]
    \label{thm:switching_regret}
    Let $\mathcal{S} = \{\mathcal{I}_1, \dots, \mathcal{I}_S\}$ be any partition of $[T]$ and $u_1, \dots, u_S \in \Delta_K$ be any sequence of comparators.
    Let $Q_s = \sum_{t \in \mathcal{I}_s} \sum_{i \in [K]} u_s(i) r_t(i)^2$ denote the cumulative prediction error for the comparator in interval $s$.
    Assume the same as in \lemref{lem:general_bound}.
    Then, \algoref{alg:proposed} satisfies:
    \begin{equation}
        \sum_{s \in [S]} \sum_{t \in \mathcal{I}_s} \langle \ell_t, p_t - u_s \rangle
        \le O\left( S \log(KT) + \sum_{s \in [S]} \sqrt{\log(KT) Q_s} \right).
    \end{equation}
\end{theorem}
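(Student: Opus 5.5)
The plan is to apply \lemref{lem:general_bound} separately on each interval $\mathcal{I}_s$, with a comparator-specific learning rate index $j^*_s$, and then sum over $s \in [S]$. The $\sqrt{\log(KT)Q_s}$ rate comes from the first two terms of the lemma. The remaining two terms (the $q$-differences and the penalties) telescope across the partition, so they are paid once globally rather than once per interval. They contribute only $O(1/T)$ and $O(M) = O(\log T)$ respectively, and both are absorbed into $O(S\log(KT))$.

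\textbf{Step 1 (choice of $j^*_s$ and membership in the active sets).} The lemma requires $j^*_s \in \bigcap_{t \in \mathcal{I}_s}\mathcal{J}_t$, and this is the point I expect to be the main obstacle, since a tuned index could have been excluded. The way around it is to choose $j^*_s$ only among "small" learning rates, $\eta(j) \le 1/32$. Since $\ell_t, m_t \in [0,1]^K$, we have $|r_t(i)| \le 1$. Hence $32\eta(j)|r_t(i)| \le 1$ for every such $j$, so the indicator in \eqref{eq:penalty_update} is always zero, $L_t(j) = 0$ for all $t$, and $j \in \mathcal{J}_t$ throughout.

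Within this range I will pick $j^*_s$ with $\eta(j^*_s)$ within a factor $2$ of $\eta_s^\circ := \min\{1/32, \sqrt{\log(KMT)/Q_s}\}$. This is possible because the grid is geometric with ratio $2$ and starts at $\eta(1) = 1/(8T)$. Moreover $Q_s \le |\mathcal{I}_s| \le T$ gives $\eta_s^\circ \ge \min\{1/32, T^{-1/2}\} \ge 1/(8T)$, so $\eta_s^\circ$ does lie in the range covered by the grid. With this choice,
\begin{equation*}
\frac{4\log(KMT)}{\eta(j^*_s)} + 32\eta(j^*_s)Q_s = O\bigl(\log(KMT) + \sqrt{\log(KMT)\,Q_s}\bigr),
\end{equation*}
whether the cap $1/32$ is active or not. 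Finally, $\log(KMT) = O(\log(KT))$ because $M = O(\log T)$.

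\textbf{Step 2 (telescoping the $q$ term).} I read the $q$ term of \lemref{lem:general_bound} as the single telescoped quantity $\sum_j (q_{t_1}(j) - q_{t_2+1}(j))/\eta(j)$. Summing over consecutive intervals of the partition gives
\begin{equation*}
\sum_j \frac{q_1(j) - q_{T+1}(j)}{\eta(j)} \le \sum_j \frac{q_1(j)}{\eta(j)}.
\end{equation*}
The initialization gives $q_1(j) = \eta(j)^2 / \sum_{j'} \eta(j')^2$. Therefore
\begin{equation*}
\sum_j \frac{q_1(j)}{\eta(j)} = \frac{\sum_j \eta(j)}{\sum_j \eta(j)^2} \le \frac{2\eta(M)}{\eta(M)^2} = O(1/T).
\end{equation*}

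\textbf{Step 3 (bounding the total penalty).} Summing $\Delta L_t(j)$ over all intervals of the partition gives $\sum_j L_{T+1}(j)$. For each $j$, I will argue as follows.
\begin{itemize}
\item While $j$ is active, $L_t(j) \le 1$, and each increment is at most $\sum_i w_t(i,j)|r_t(i)| \le 1$. So at the first exclusion time, $L(j) \le 2$.
\item After exclusion, $w_t(i,j) = \varepsilon$, so the total remaining increment is at most $KT\varepsilon \le 1/(MT^2)$.
\end{itemize}
Hence $L_{T+1}(j) \le 3$. Negative increments only help, since we need only an upper bound. Summing over $j$ gives a total penalty of $O(M) = O(\log T)$.

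\textbf{Step 4 (collecting terms).} The lemma's error term $O((\log(KMT) + 1)/T^2)$ per interval, summed over $S \le T$ intervals, is $o(1)$. Adding Steps 1–3 yields
\begin{equation*}
\sum_s \sum_{t \in \mathcal{I}_s} \langle \ell_t, p_t - u_s\rangle \le O\Bigl(S\log(KT) + \sum_s \sqrt{\log(KT)Q_s}\Bigr) + O(\log T) + O(1/T),
\end{equation*}
and since $S \ge 1$, the last two terms are absorbed, which is the claimed bound.
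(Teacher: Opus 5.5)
Your proposal is correct and follows essentially the same route as the paper. You apply \lemref{lem:general_bound} on each interval with $\eta(j^*_s)$ capped at $1/32$, and your observation that such indices never trigger the indicator, so $L_t(j^*_s)=0$, justifies the membership claim that the paper only asserts. You then telescope the $q$-terms to $\sum_j q_1(j)/\eta(j)=O(1/T)$ and bound each $L_{T+1}(j)$ by a constant, exactly as the paper does.

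One small fix is needed in Step 3. Since $\mathcal{J}_t$ is recomputed every round and post-exclusion increments $\varepsilon r_t(i)$ can be negative, an excluded index can re-enter the active set. So argue from the \emph{last} round $t(j)$ with $j\in\mathcal{J}_t$, as the paper does, rather than from the first exclusion time; this still yields $L_{T+1}(j)\le 2+KT\varepsilon$.
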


This bound is derived by summing the result of Lemma~\ref{lem:general_bound} over $S$ intervals.
Crucially, our safeguard mechanism ensures that the total accumulated penalty $\sum_{t, j} \Delta L_t(j)$ remains bounded by $O(\log T)$ throughout the entire horizon.
This fact shows that allowing learning rates up to $\Theta(T)$ with our safeguard mechanism does not compromise the worst-case regret guarantees.

Furthermore, in \textit{benign cases} where the cumulative prediction error $Q_s$ is small, our aggressive approach achieves a superior regret bound compared to existing regret bounds.

\begin{theorem}[Switching Regret for Benign Case]
    \label{thm:switching_regret_benign}
    Under the same assumptions as \thmref{thm:switching_regret}, further assume that for each interval $s$, there exists an index $j^*_s$ such that
    $\eta(j^*_s) = \Theta\left( \sqrt{\frac{\log(KMT)}{Q_s}} \right)$
    and $j^*_s \in \mathcal{J}_t$ for all $t \in \mathcal{I}_s$.
    Then, \algoref{alg:proposed} satisfies:
    \begin{equation*}
        \sum_{s \in [S]} \sum_{t \in \mathcal{I}_s} \langle \ell_t, p_t - u_s \rangle
        \le O\left( \log T + \sum_{s \in [S]} \sqrt{\log(KT) Q_s} \right).
    \end{equation*}
\end{theorem}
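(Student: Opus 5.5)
We apply \lemref{lem:general_bound} on each interval $\mathcal{I}_s = \{t_1^s, \dots, t_2^s\}$ with comparator $u_s$ and learning rate index $j^*_s$. This is legitimate because, by assumption, $j^*_s \in \mathcal{J}_t$ for all $t \in \mathcal{I}_s$. Summing over $s \in [S]$ gives four groups of terms. We bound each group and check that none of them contributes an $S\log T$ overhead beyond what is absorbed into $\sum_s \sqrt{\log(KT) Q_s}$.

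\textbf{Leading terms.} The choice $\eta(j^*_s) = \Theta(\sqrt{\log(KMT)/Q_s})$ balances
\[
\frac{4\log(KMT)}{\eta(j^*_s)} + 32\eta(j^*_s) Q_s = O\bigl(\sqrt{\log(KMT) Q_s}\bigr).
\]
Since $M = O(\log T)$ and $\log\log T$ factors are treated as constants, this is $O(\sqrt{\log(KT) Q_s})$. Here lies the point where the benign case differs from \thmref{thm:switching_regret}. In the worst case, $Q_s$ may be so small that the balancing rate is not in the grid or not active, and the term $\log(KMT)/\eta$ is then paid at $\eta = O(1)$, giving $S\log T$. The hypothesis removes exactly this obstacle, because the grid extends to $\Theta(T)$.

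\textbf{Telescoping term.} I read the $q$-term of the lemma as $\sum_j (q_{t_1}(j) - q_{t_2+1}(j))/\eta(j)$, i.e.\ without the redundant outer sum. Summed over consecutive intervals, this telescopes to $\sum_j (q_1(j) - q_{T+1}(j))/\eta(j) \le \sum_j q_1(j)/\eta(j)$. With the initialization $w_1'(i,j) \propto \eta(j)^2$ we have $q_1(j) = \eta(j)^2/\sum_{j'}\eta(j')^2$. Hence $\sum_j q_1(j)/\eta(j) = \sum_j \eta(j)/\sum_{j'}\eta(j')^2 = O(1)$, because both sums are dominated by their largest geometric term, $\eta(M) = \Theta(T)$, and the ratio is $O(1/\eta(M))$. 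Even a cruder bound of $O(\log T)$ suffices.

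\textbf{Penalty term.} The summed penalties give $\sum_{t\in[T]}\sum_j \Delta L_t(j) = \sum_j L_{T+1}(j)$. I would bound each $L_{T+1}(j)$ by $O(1)$. While $j$ is active, $L_t(j) \le 1$, and a single increment is at most $\sum_i w_t(i,j)|r_t(i)| \le \max_t\|r_t\|_\infty \le 1$. So the penalty is at most $2$ at the moment $j$ is excluded. Once $j \notin \mathcal{J}_t$, we have $w_t(i,j) = \varepsilon$, so later increments total at most $KT\varepsilon = O(1/(MT^2))$. Exclusion is permanent because $L$ can only decrease through negative increments, and after exclusion these are of size $O(\varepsilon)$. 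Thus $\sum_j L_{T+1}(j) \le 2M + o(1) = O(\log T)$. This step is the main obstacle. It must be verified carefully that excluded rates truly carry only $\varepsilon$ mass and that a negative $L$ causes no trouble; negativity only helps the upper bound.

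\textbf{Conclusion.} The residual $O((\log(KMT)+1)/T^2)$ terms, summed over $S \le T$ intervals, contribute $o(1)$. Collecting the four groups yields the bound $O(\log T + \sum_s \sqrt{\log(KT)Q_s})$ claimed in \thmref{thm:switching_regret_benign}.
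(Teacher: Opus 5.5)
Your proposal is correct and follows the paper's route. The paper proves this theorem by reusing the proof of \thmref{thm:switching_regret}: apply \lemref{lem:general_bound} on each interval with $j^*_s$, telescope the $q$-terms to $\sum_j q_1(j)/\eta(j)=O(1/T)$, and bound each $L_{T+1}(j)$ by $2+O(KT\varepsilon)$. The only change is choosing $\eta(j^*_s)$ to balance the leading terms, exactly as you do.

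One small correction concerns permanence of exclusion, which is not guaranteed. Negative increments of size $O(\varepsilon)$ can push an $L_t(j)$ lying barely above $1$ back below $1$, reactivating $j$. Argue instead from the last round $t(j)$ at which $j$ is active, as the paper does. This gives the same bound $L_{T+1}(j)\le 2+KT\varepsilon$ without needing permanence.
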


We compare our bound with that of MsMwC~\citep{chen21impossible}, which achieves $O(S \log(KT) + \sum_{s} \sqrt{\log(KT) Q_s})$ in the same setting.
The first term $O(S \log T)$ arises from the $O(1)$ restriction on learning rates.
Specifically, in benign intervals where $Q_s$ is small, the optimal learning rate $\eta^*_s = \Theta(\sqrt{\log(KMT) / Q_s})$ exceeds the constant order, forcing MsMwC to incur an $O(\log T)$ overhead per interval.
Our algorithm permits learning rates up to $\Theta(T)$ and selects $\eta^*_s$ in benign intervals, eliminating this overhead while keeping the total accumulated penalty bounded by $O(\log T)$ globally.
Consequently, our bound is strictly tighter than that of MsMwC when $S$ is large and each $Q_s$ is small.

The proposed algorithm also exhibits high adaptability to continuous changes in the comparator, quantified by the path-length.
\begin{theorem}[Path-Length Regret Bound]
    \label{thm:path_length}
    Let the path-length of the comparator sequence be $V(u_{1:T}) = \sum_{t \in [T]} \|u_t - u_{t-1}\|_1$.
    Let $Q(u_{1:T}) = \sum_{t \in [T]} \sum_{i \in [K]} u_t(i) r_t(i)^2$ denote the cumulative squared prediction error of the comparator sequence.
    Assume the same as in \lemref{lem:general_bound}.
    Then, \algoref{alg:proposed} ensures:
    \begin{equation*}
        R_T(u_{1:T}) \le O\left( \sqrt{V(u_{1:T}) Q(u_{1:T}) \log(KT)} + V(u_{1:T}) \log(KT) \right).
    \end{equation*}
\end{theorem}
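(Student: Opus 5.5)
We reduce the path-length bound to the switching bound of \thmref{thm:switching_regret}, or more precisely to \lemref{lem:general_bound} applied on a partition of $[T]$ chosen in terms of $V = V(u_{1:T})$.

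\textbf{Partition.} Fix an integer $S \ge 1$ to be tuned later. Split $[T]$ into $S$ consecutive intervals $\mathcal{I}_1,\dots,\mathcal{I}_S$ such that the internal path-length $V_s = \sum_{t,t-1 \in \mathcal{I}_s}\|u_t-u_{t-1}\|_1$ of each interval is at most about $V/S$. This can be done greedily, cutting whenever the accumulated path-length exceeds $V/S$.

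\textbf{Regret decomposition.} On each interval we use a fixed comparator. The natural choice is $\bar u_s = u_{\tau_s}$, the comparator at the first round $\tau_s$ of $\mathcal{I}_s$. We then write
\[
\sum_{t\in\mathcal{I}_s}\langle \ell_t,p_t-u_t\rangle=\sum_{t\in\mathcal{I}_s}\langle \ell_t,p_t-\bar u_s\rangle+\sum_{t\in\mathcal{I}_s}\langle \ell_t,\bar u_s-u_t\rangle .
\]
The first sum is controlled by \lemref{lem:general_bound}. The second sum is the main obstacle, because the naive bound $|\mathcal{I}_s|V_s$ is far too large.

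\textbf{Handling the drift term.} Rather than using a fixed comparator, we would redo the telescoping step in the proof of \lemref{lem:general_bound} with a time-varying lifted comparator $\bar u_t$ placed on layer $j^*$. The OOMD inequality then yields $D_\psi(\bar u_t,w'_t)-D_\psi(\bar u_t,w'_{t+1})$. Rewriting the telescoping in the standard dynamic way produces an extra term $\sum_t \big(D_\psi(\bar u_{t+1},w'_{t+1})-D_\psi(\bar u_t,w'_{t+1})\big)$. Because every weight is at least $\varepsilon=1/(KMT^3)$, each increment is at most $\frac{1}{\eta(j^*)}\|u_{t+1}-u_t\|_1\log(1/\varepsilon)$. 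This gives the dynamic analogue of \lemref{lem:general_bound} over all of $[T]$:
\[
R_T(u_{1:T})\le \frac{O\big((V+1)\log(KMT)\big)}{\eta(j^*)}+32\eta(j^*)Q(u_{1:T})+\sum_{t,j}\Delta L_t(j)+\text{(layer-mass terms)}+o(1).
\]
The case analysis (Cases 1 and 2) is unchanged, since it never used that the comparator is fixed. The per-round $-\langle a_t,\bar u_t\rangle$ term still gives exactly $32\eta(j^*)\sum_i u_t(i)r_t(i)^2$.

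\textbf{Remaining terms and tuning.} The layer-mass terms telescope to $O(\log T)$, as in the proof of \thmref{thm:switching_regret}. The total penalty $\sum_{t,j}\Delta L_t(j)$ is $O(\log T)$: each layer's $L$ can exceed $1$ only by one step of size at most $1$ before that layer is deactivated, and there are $M$ layers. We then choose $j^*$ with $\eta(j^*)\approx\min\{\sqrt{V\log(KT)/Q},\,c\}$, where $c$ is a constant such that all small rates satisfy $32\eta|r_t|\le 1$ and are therefore never excluded. This guarantees $j^*\in\mathcal{J}_t$ for all $t$. The geometric grid ensures we lose only a factor $2$. This choice gives $O(\sqrt{VQ\log(KT)}+V\log(KT))$.

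\textbf{Main obstacle.} The delicate part is showing that the chosen layer stays active throughout. Capping $\eta(j^*)$ at a constant settles this; if the cap is not imposed, one would instead have to argue via the penalty budget that the benign large rate is never excluded.
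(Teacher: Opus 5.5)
Your proof is correct, but it takes a different route from the paper. The paper never telescopes with a time-varying comparator. Instead it does the following:
\begin{itemize}
\item It applies Lemma 7 of Luo and Schapire (\lemref{lem:decomposition}) to each coordinate sequence $(u_t(i))_t$, writing it as a positive combination $\sum_k a_{k,i}\ind[t\in U_{k,i}]$ of interval indicators.
\item It applies the static bound of \lemref{lem:general_bound} against the point mass $e_{i,j^*}$ on each $U_{k,i}$.
\item It regroups by time. The linear potentials $\sum_{x,y}w'_t(x,y)/\eta(y)$ receive coefficient $\sum_i(u_t(i)-u_{t-1}(i))=0$ for $2\le t\le T$. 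The terms $\log w'_t(i,j^*)$ receive coefficient $u_{t-1}(i)-u_t(i)$, which gives $V\log(1/\varepsilon)/\eta(j^*)$. The penalty and quadratic terms receive weight $\sum_i u_t(i)=1$ per round.
\end{itemize}

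Your direct dynamic telescoping avoids the decomposition lemma and is the more standard, self-contained argument. The cost is that you must control the comparator-drift increment yourself, including the change in the comparator's own entropy; the paper sidesteps this by using point masses.

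Your per-step claim does hold, but it deserves its one-line proof. Let $d=\bar u_{t+1}(\cdot,j^*)-\bar u_t(\cdot,j^*)$. The linear parts cancel since $\sum_i d_i=0$. By the mean value theorem the increment equals $\frac{1}{\eta(j^*)}\sum_i d_i\bigl(\log\xi_i-\log w'_{t+1}(i,j^*)\bigr)$. Both logarithms lie in $[\log\varepsilon,0]$, because $w'_{t+1}\in\Omega_t$. Hence the increment is at most $\|d\|_1\log(1/\varepsilon)/\eta(j^*)$.

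After regrouping, the paper's $\mathcal{S}_{\text{log}}$ is exactly the $-\sum_i d_i\log w'_{t+1}(i,j^*)$ part of your increments, so the two proofs are close relatives. Both pay $\log(1/\varepsilon)=O(\log(KT))$ per unit of path length thanks to the $\varepsilon$-floor. Both cap $\eta(j^*)$ at $1/32$ to force $j^*\in\bigcap_t\mathcal{J}_t$.

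Minor points:
\begin{itemize}
\item Your opening partition into $S$ intervals and the fixed-comparator split are never used and can be dropped.
\item The layer-mass terms are actually $O(1/T)$, not $O(\log T)$.
\item The additive $O(\log T)$ penalty and your factor $V+1$ are absorbed into $V\log(KT)$ because $V\ge 1$ under the convention $u_0=0$.
\end{itemize}
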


\begin{remark}[Terminology and comparison with prior path-length bounds]
Our bound is ``second-order'' in the sense of being adaptive to the squared prediction errors $Q(u_{1:T})$, following the convention of variance adaptive bounds in the expert problems~\citep{gaillard14second}.
We note that this is distinct from the second-order path variations recently studied in online convex optimization (OCO)~\citep{jacobsen2024equivalence,baby2023second}.
Moreover, these works address OCO over general convex sets and incur polynomial dependence on the dimension, whereas our algorithm exploits the simplex geometry to achieve a logarithmic dependence on the number of experts $K$.
We are not aware of prior path-length bounds for tracking the best expert that simultaneously achieve variance-adaptivity in the prediction error.
\end{remark}

This theorem suggests that our safeguard mechanism enables agile tracking not only for abrupt concept drifts but also for incremental changes, by maintaining access to large learning rates whenever they are safe to use.
Notably, analogous to \thmref{thm:switching_regret_benign}, this bound improves significantly in benign cases where large learning rates are active.
When the algorithm can safely select a large learning rate ($\eta = \Theta(\sqrt{V(u_{1:T}) \log(KT) / Q(u_{1:T})})$), the proposed algorithm achieves $O(\sqrt{V(u_{1:T}) Q(u_{1:T}) \log(KT)} + \log T)$ dynamic regret.
This indicates that our algorithm can achieve near-zero dynamic regret in favorable environments, resolving the adaptation lag even for continuously drifting distributions.

Finally, we demonstrate that our safeguard mechanism is indispensable for achieving these results.
We prove that naively introducing large learning rates without the penalty-based exclusion leads to linear regret even in static environments.

\begin{theorem}[Necessity of Safeguard]
    \label{thm:safeguard_necessity}
    Consider an algorithm that employs the same learning rate grid as \algoref{alg:proposed} but does not exclude learning rates based on the penalty $L$ (i.e., $\mathcal{J}_t = [M]$ always).
    There exists a sequence of losses and predictions such that this algorithm suffers $\Omega(T)$ regret, whereas \algoref{alg:proposed} achieves $O(\sqrt{T})$.
\end{theorem}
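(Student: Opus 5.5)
We prove Theorem~\ref{thm:safeguard_necessity} by an explicit construction in which confident but wrong predictions $m_t$ drive the large learning rates toward a bad expert, and the unsafeguarded algorithm never recovers. We take $K=2$ in a static environment: expert $1$ always has loss $0$ and expert $2$ always has loss $1$, so the comparator $u=e_1$ has zero cumulative loss. The regret of any algorithm is then $\sum_t p_t(2)$. We choose the predictions adversarially as $m_t=(1,0)$, the reverse of the truth, so that $r_t=(-1,1)$ and $|r_t(i)|=1$ for every $t$. Every level $j$ with $32\eta(j)>1$, i.e.\ $2^j>T/2$, is then ``unstable'' in every round. There are $\Theta(\log T)$ such levels, and they carry almost all of the initial mass because $w_1'(i,j)\propto\eta(j)^2$.

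\textbf{Step 1: large levels concentrate on expert $2$ in $w_t$.} For a large level $j$, the optimistic step computes $w_t$ from $w_t'$ using $\tilde m_t$. Without projection, $w_t(i,j)\propto w_t'(i,j)e^{-\eta(j)m_t(i)}$, which multiplies expert $1$ by $e^{-\eta(j)}$ with $\eta(j)=\Theta(T)$. Up to the normalization multiplier and the floor $\varepsilon$, this pushes $w_t(1,j)$ to $\varepsilon$ while expert $2$ keeps its mass.

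The observed loss $\tilde\ell_t+a_t$ adds $\ell_t+32\eta(j)r_t^2=(32\eta(j),\,1+32\eta(j))$. Both coordinates receive the huge correction, but the correction is almost the same for both experts. The $w'$ update therefore mostly shifts mass between \emph{levels} and only mildly between experts within a level; the within-level ratio changes by a factor $e^{-\eta(j)}$ in favour of expert $1$. We need to track how $w'$ evolves. The cleanest way to pin this down is to compute the KKT multiplier $\lambda_t$ explicitly, using $w(i,j)=\max\{\varepsilon,\ w'(i,j)\exp(-\eta(j)(x(i,j)-\lambda_t))\}$.

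\textbf{Step 2: the large levels keep nonnegligible total mass.} The large-correction layers lose mass in $w'$, but they are compared against small layers that also receive $\ell_t=(0,1)$. The multiplier $\lambda_t$ makes the total mass $1$. If a constant fraction of mass stays on large levels in $w_t$, then $p_t(2)\ge c$ and the regret is $\Omega(T)$.

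This is the main obstacle: the correction $a_t$ penalizes large levels by roughly $32\eta(j)$ per round, which suggests their mass may vanish. I would first try modifying the construction so that errors are small for the comparator while still triggering instability. For example, set $m_t=(0,1)$ on most rounds (perfect predictions, $r_t=0$, no correction) and intersperse the reversed $m_t=(1,0)$ on a constant fraction of rounds. On a reversed round, $w_t$ jumps to expert $2$ on the large levels and costs a constant. On a perfect round, there is no correction, so $w'$ is unaffected. If the large levels regain their mass during the benign rounds, the loss repeats and the regret is $\Omega(T)$. If this still fails, I would instead choose $m_t(2)=m_t(1)-\delta$ so that $r_t$ is tiny but still exceeds $1/(32\eta(j))$ for the top levels. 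The correction is then $O(\eta\delta^2)$ and small, while $\eta\delta$ is huge, so the optimistic step still flips.

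\textbf{Step 3: the safeguarded algorithm achieves $O(\sqrt T)$.} For Algorithm~\ref{alg:proposed} we check that level $j$ is excluded once $L_t(j)>1$. Each unstable round contributes $\Delta L_t(j)=w_t(2,j)r_t(2)+w_t(1,j)r_t(1)$, which is positive when $w_t$ concentrates on expert $2$. Hence every harmful level dies after $O(1)$ rounds of its own cumulative harm. We then apply Theorem~\ref{thm:switching_regret} with $S=1$ and $Q_1\le T$, which gives $O(\log(KT)+\sqrt{T\log(KT)})$; this is $O(\sqrt T)$ once $\log$ factors are suppressed as in the paper.
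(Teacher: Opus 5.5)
\textbf{The gap is Step~2. Your main construction does not produce linear regret at all.}

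Because $m_t=(1,0)$ gives $|r_t(1)|=1$ on the comparator itself, the correction $a_t(1,j)=32\eta(j)$ charges the optimal expert at every level. Already in the first $w'$-update the top levels are scaled by roughly $e^{-32\eta(j)^2}=e^{-\Theta(T^2)}$ and drop to the floor $\varepsilon$. From then on $w'_t$ (which never depends on $w_t$) behaves like MsMwC's level selection. Its mass sits on expert~$1$ at levels with $\eta(j)$ of order $\sqrt{\log T/t}$, the levels balancing $\log(1/w'_1(1,j))/\eta(j)$ against $32\eta(j)t$. These levels fall below $1/32$ after $O(\log T)$ rounds. The optimistic step then moves only a fraction of order $1-e^{-\eta(j)}\approx\sqrt{\log T/t}$ of the mass to expert~$2$. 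So $p_t(2)$ is of order $\min\{1,\sqrt{\log T/t}\}$, and the unsafeguarded algorithm suffers only $O(\sqrt{T\log T})$ regret: no separation.

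Your first fallback fails for the same reason. On the perfect rounds, expert~$1$ has zero loss at every level and the KKT multiplier is nonnegative, so no level of expert~$1$ loses mass. The top levels can therefore regain at most the (quickly negligible) mass of expert~$2$. Note also that $w'$ is still updated on those rounds; only the correction vanishes.

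Your second fallback rests on an impossible premise. With $\ell_t=(0,1)$ and $m_t(2)=m_t(1)-\delta$ you get $r_t(2)-r_t(1)=1+\delta$. Hence $\|r_t\|_\infty\ge 1/2$, and the correction can never be $O(\eta\delta^2)$.

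\textbf{The missing idea} is to put the entire prediction error on the \emph{suboptimal} expert and predict the comparator's loss exactly. The paper does this with $\ell_t=(1,1/2)$ and $m_t=(0,1/2)$. Then $a_t$ vanishes on the optimal expert. In the $w'$-step, expert~$1$ is charged $1+32\eta(j)$ and expert~$2$ only $1/2$, so the multiplier is at least $1/2$. Hence $w'_t(2,\hat j)\ge w'_1(2,\hat j)=\Omega(1)$ at the top level $\hat j$ for all $t$.

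In the optimistic step, $1\ge w_t(1,\hat j)\ge\varepsilon e^{\eta(\hat j)\lambda}$ forces $\lambda\le\log(1/\varepsilon)/\eta(\hat j)=O(\log T/T)<1/2$. So no expert-$2$ coordinate can increase, while $w_t(2,\hat j)$ collapses to $\varepsilon$. Conservation of mass then yields $\sum_j w_t(1,j)=\Omega(1)$, i.e., $\Omega(1)$ regret in every round.

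This is what your second fallback would actually need: an exact prediction on the good expert, with the large error on the bad one. It works because the correction then hits only the bad expert, not because the correction is small.

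Your Step~3, which invokes Theorem~\ref{thm:switching_regret} for Algorithm~\ref{alg:proposed}, coincides with the paper's argument and is fine.
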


\begin{proof}[Proof Sketch]
    Consider a setting with two experts where expert 1 has constant loss $1$ and the other has loss $1/2$.
    The prediction vector $m$ is misleading: $m(1) = 0$ and $m(2) = 1/2$.
    An algorithm without the safeguard using a learning rate $\eta = \Theta(T)$ will aggressively update weights based on the misleading prediction $m$ before observing the true loss.
    Due to the large learning rate, the weight on the optimal expert is driven to near zero in the optimistic update step.
    Consequently, the learner places $\Omega(1)$ probability on the suboptimal expert at every round, incurring linear regret.
    Our safeguard detects this instability (large learning rates and large prediction error) immediately and excludes such dangerous learning rates, preserving the regret bound.
\end{proof}

\begin{table*}[t]
\centering
\caption{
Cumulative losses for Rotated MNIST dataset under abrupt drift, incremental drift, and corruption scenarios (mean $\pm$ standard error). The best results are highlighted in bold.
The proposed algorithm consistently achieves the lowest cumulative loss across all drift types, including the abrupt drift scenario.
}
\label{tab:synthetic_results}
\begin{tabular}{lccccc}
\toprule
 & ATV & Squint & CBCE & MsMwC & Proposed \\
\midrule
Abrupt Drift & 114.77 $\pm$ 0.30 & 165.84 $\pm$ 0.48 & 47.17 $\pm$ 0.09 & 254.85 $\pm$ 0.84 & \textbf{44.48 $\pm$ 0.10} \\
Incremental Drift & 141.66 $\pm$ 0.06 & 206.04 $\pm$ 0.06 & 53.86 $\pm$ 0.09 & 282.12 $\pm$ 0.05 & \textbf{42.97 $\pm$ 0.09} \\
Corruption & 101.69 $\pm$ 0.09 & 137.77 $\pm$ 0.09 & 68.67 $\pm$ 0.11 & 158.09 $\pm$ 0.09 & \textbf{67.45 $\pm$ 0.12} \\
\bottomrule
\end{tabular}
\end{table*}

\begin{figure*}[tb]
    \centering
    \includegraphics[width=0.9\textwidth]{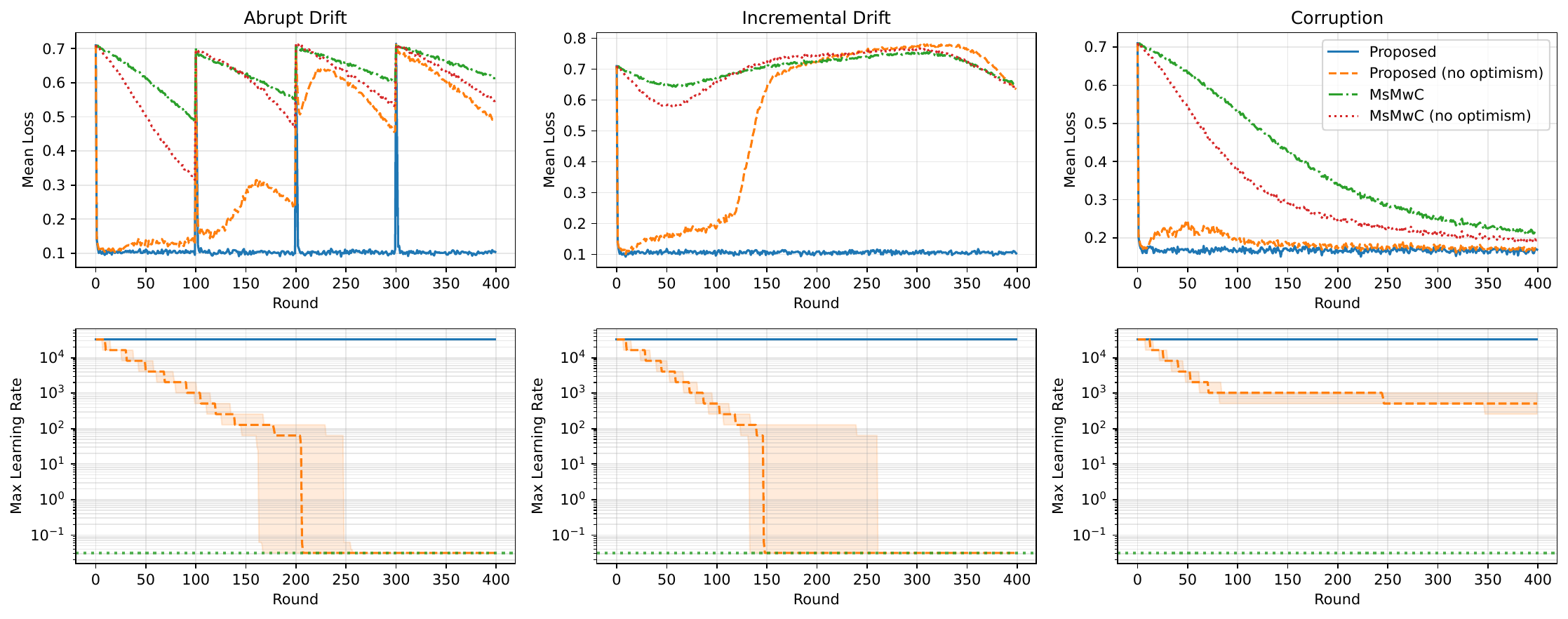}
    \Description{A grid of six line plots arranged in two rows and three columns over rounds 0 to 400. Columns correspond to abrupt drift, incremental drift, and corruption scenarios. The top row plots mean loss for four variants: Proposed, Proposed (no optimism), MsMwC, and MsMwC (no optimism). The Proposed variant stays near 0.1 throughout in all scenarios. Under abrupt drift its loss spikes briefly at each drift point near rounds 100, 200, and 300, then immediately returns to the baseline, whereas the no-optimism variants and MsMwC variants recover much more slowly or fail to recover. Under incremental drift the Proposed variant remains flat near 0.1 while all other variants drift upward toward 0.7. Under corruption the Proposed variant stays low while the others decay slowly from about 0.7 toward 0.2. The bottom row plots the median maximum learning rate on a log scale from 0.1 to 10000, with shaded 25th to 75th percentile bands. The Proposed variant maintains a constant large learning rate near 10000, while the no-optimism variant decays in stair-step fashion down to roughly 0.1 (or plateaus around 1000 under corruption).}
    \caption{Ablation study on the Rotated MNIST dataset under abrupt, incremental, and corruption drift scenarios. The top row shows the trajectory of the mean loss; standard deviations are omitted for visual clarity. The bottom row illustrates the median of the maximum learning rate in the active set, where the shaded regions indicate the 25th–75th percentiles.}
    \label{fig:ablation_study}
\end{figure*}

\begin{figure}[tb]
    \centering
    \includegraphics[width=0.4\textwidth]{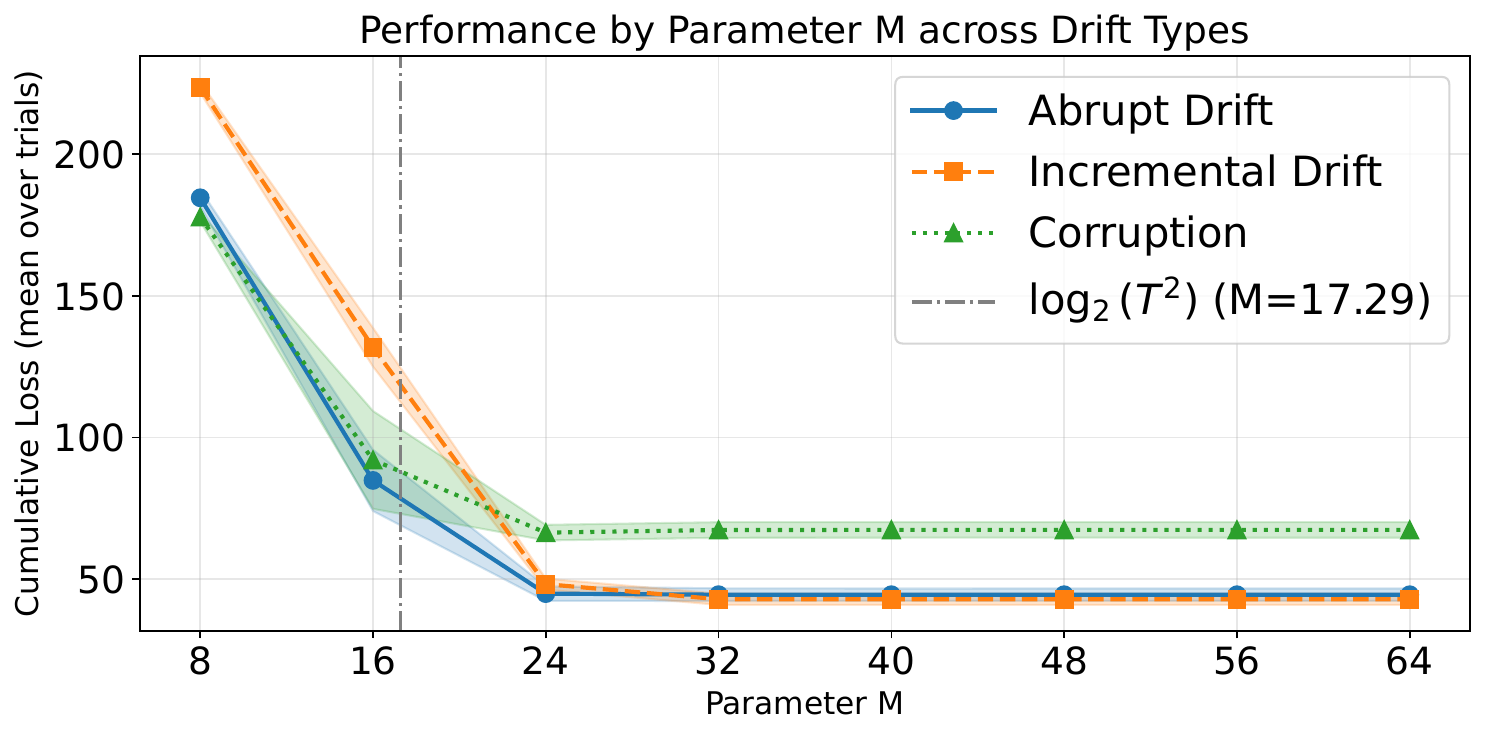}
    \Description{A line plot of cumulative loss (mean over trials) versus the parameter M (number of learning rate candidates), with M ranging from 8 to 64 on the horizontal axis. Three curves correspond to abrupt drift (blue circles), incremental drift (orange squares), and corruption (green triangles), with shaded standard deviation bands. A vertical grey dash-dotted line marks the theoretical lower bound at M = 17.29. All three curves drop sharply between M=8 and M=24, then plateau: abrupt and incremental settle near a cumulative loss of about 45, and corruption plateaus near 65. Performance does not degrade as M increases beyond the theoretical bound.}
    \caption{
    Sensitivity analysis of the cumulative loss with respect to the number $M$ of learning rate candidates across different drift scenarios. The vertical dash-dotted line indicates the theoretical requirement $M = \lceil \log_2 T^2 \rceil$. The shaded regions represent the standard deviation.
    The algorithm exhibits no performance degradation even when $M$ exceeds the theoretical requirement, demonstrating robustness to hyperparameter selection.
    }
    \label{fig:sensitivity}
\end{figure}

\begin{table*}[t]
\centering
\caption{
Normalized cumulative losses on diverse real-world datasets (mean ± standard error). Losses are normalized such that the average loss of the MsMwC baseline equals 100. The classification task and regression task are represented by (C) and (R), respectively. The best results are highlighted in bold.
The proposed algorithm outperforms or matches all theoretically guaranteed baselines across every dataset, demonstrating its superior versatility and robustness in practical, non-stationary environments.
}
\label{tab:real_world_results}
\begin{tabular}{lccccc}
\toprule
 & ATV & Squint & CBCE & MsMwC & Proposed \\
\midrule
Airlines (C) & 96.99 $\pm$ 0.46 & 99.64 $\pm$ 0.51 & 91.18 $\pm$ 0.39 & 100.00 $\pm$ 0.52 & \textbf{88.24 $\pm$ 0.33} \\
arXiv (C) & 96.30 $\pm$ 0.12 & 98.67 $\pm$ 0.11 & \textbf{92.55 $\pm$ 0.18} & 100.00 $\pm$ 0.14 & \textbf{92.55 $\pm$ 0.18} \\
Electricity (C) & 89.92 $\pm$ 0.92 & 97.04 $\pm$ 0.97 & 84.34 $\pm$ 0.90 & 100.00 $\pm$ 1.02 & \textbf{74.87 $\pm$ 0.80} \\
Forest (C) & 82.08 $\pm$ 0.82 & 94.68 $\pm$ 0.89 & 60.58 $\pm$ 0.57 & 100.00 $\pm$ 0.92 & \textbf{53.94 $\pm$ 0.54} \\
HuffPost (C) & 88.74 $\pm$ 0.13 & 97.46 $\pm$ 0.20 & 77.77 $\pm$ 0.19 & 100.00 $\pm$ 0.27 & \textbf{77.08 $\pm$ 0.20} \\
Powersupply (C) & 97.94 $\pm$ 0.11 & 99.69 $\pm$ 0.10 & 95.13 $\pm$ 0.11 & 100.00 $\pm$ 0.10 & \textbf{93.13 $\pm$ 0.11} \\
Rialto (C) & 82.86 $\pm$ 0.21 & 90.71 $\pm$ 0.21 & 70.14 $\pm$ 0.16 & 100.00 $\pm$ 0.23 & \textbf{61.62 $\pm$ 0.15} \\
Weather (C) & 95.87 $\pm$ 0.18 & 99.40 $\pm$ 0.26 & 93.45 $\pm$ 0.15 & 100.00 $\pm$ 0.28 & \textbf{93.19 $\pm$ 0.18} \\
Yearbook (C) & 45.58 $\pm$ 0.22 & 70.88 $\pm$ 0.44 & 29.21 $\pm$ 0.21 & 100.00 $\pm$ 0.69 & \textbf{26.05 $\pm$ 0.28} \\
Bikesharing (R) & 74.10 $\pm$ 0.79 & 84.08 $\pm$ 0.85 & 61.27 $\pm$ 0.64 & 100.00 $\pm$ 1.04 & \textbf{58.17 $\pm$ 0.79} \\
Temperature (R) & 83.95 $\pm$ 0.47 & 88.83 $\pm$ 0.33 & 77.75 $\pm$ 0.39 & 100.00 $\pm$ 0.87 & \textbf{75.77 $\pm$ 0.38} \\
\bottomrule
\end{tabular}
\end{table*}

\begin{table}[t]
\centering
\caption{
Average computation time per round (milliseconds) on the Rotated MNIST dataset (mean $\pm$ standard error).
The proposed algorithm operates within 30 ms, maintaining practical speed for online stream processing despite the computational overhead compared to MsMwC.
}
\label{tab:computation_time}
\begin{tabular}{lc}
\toprule
Algorithm & Time (ms) \\
\midrule
ATV & 10.73 $\pm$ 0.01 \\
Squint & 28.45 $\pm$ 0.03 \\
CBCE & 5.60 $\pm$ 0.01 \\
MsMwC & 14.04 $\pm$ 0.01 \\
Proposed & 29.77 $\pm$ 0.02 \\
\bottomrule
\end{tabular}
\end{table}

\begin{figure}[tb]
    \centering
    \includegraphics[width=0.4\textwidth]{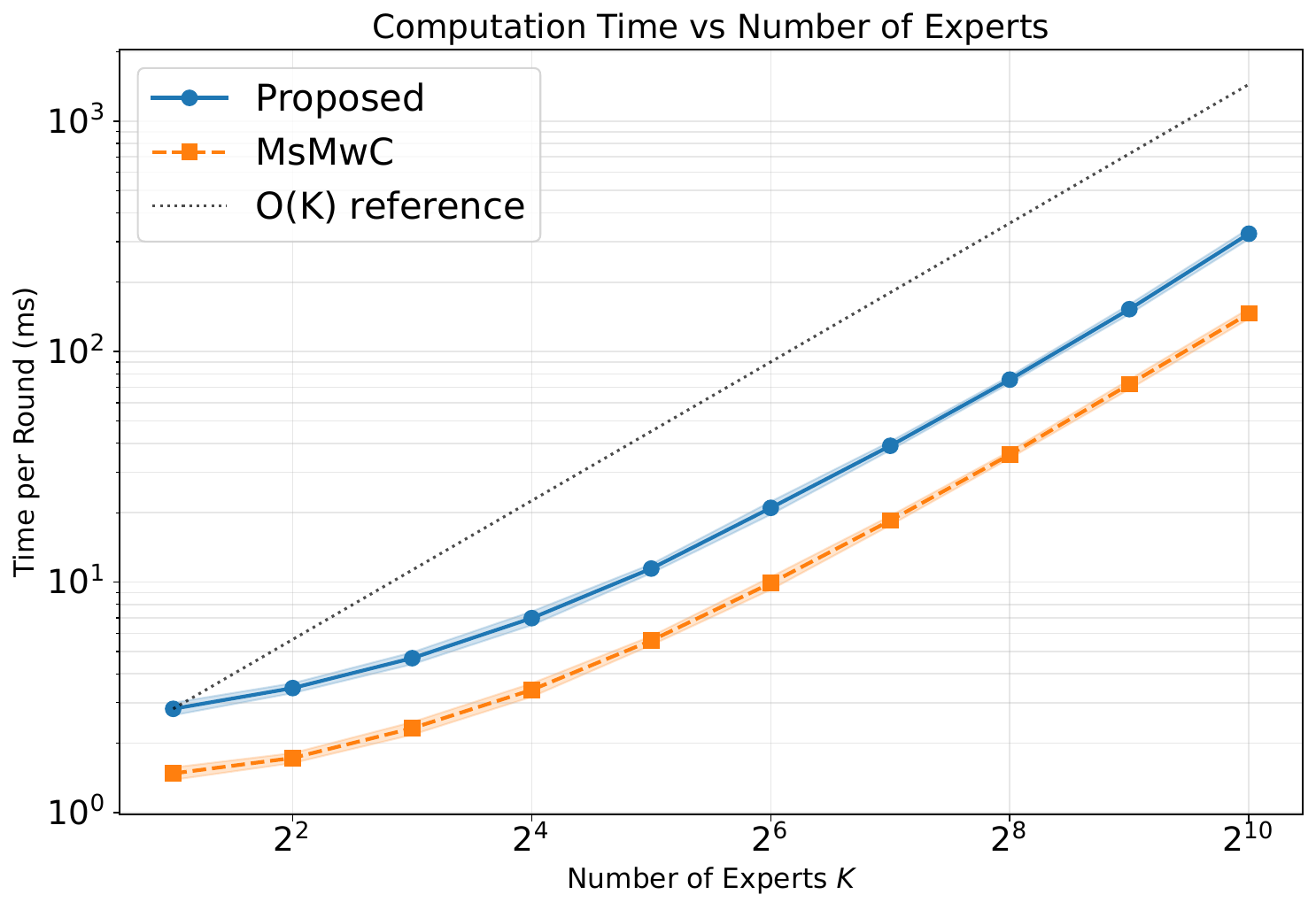}
    \Description{A log-log line plot of time per round in milliseconds versus the number of experts K, with K ranging from 2 to 1024. Two curves show the Proposed algorithm (blue circles) and MsMwC (orange squares), with thin shaded standard deviation bands. A grey dotted line indicates the O(K) linear reference. Both curves are roughly parallel to the O(K) reference at large K, with the Proposed algorithm consistently about two to three times slower than MsMwC in absolute terms. At K=2 the times are around 3 ms and 1.5 ms respectively; at K=1024 they reach about 300 ms and 150 ms.}
    \caption{
    Scalability of computation time with respect to the number $K$ of experts (log-log scale). The grey line represents the theoretical linear scaling reference $O(K)$. The shaded regions represent the standard deviation. The proposed algorithm scales linearly with $K$, exhibiting the same growth trend as MsMwC consistent with the $O(K)$ factor in $O(K \log^2 T)$.
    }
    \label{fig:time_scaling}
\end{figure}

\section{Numerical Experiments}\label{sec:experiments}

This section empirically evaluates the proposed algorithm to demonstrate its superior agility and robustness compared to existing tuning-free algorithms.
We first analyze the behavior of the safeguarded large learning rates under controlled synthetic distribution shifts to clarify the mechanism behind the resolved adaptation lag.
Subsequently, we validate the versatility of our approach across diverse real-world datasets.
Finally, we show that the computational overhead of the safeguard mechanism is negligible, ensuring scalability for real-world applications.

\subsection{Experimental Setup}

We evaluate the proposed algorithm against four tuning-free baselines.
The baselines comprise AdaNormalHedge.TV (ATV)~\citep{luo2015achieving}, Squint~\citep{koolen15second}, Coin betting for changing environments with AdaNormal potential (CBCE)~\citep{jun17a}, and MsMwC~\citep{chen21impossible}.\footnote{
To facilitate a clearer ablation study, we employ a single-layer version of MsMwC.
Since it can be transformed into a single-layer form without loss of theoretical guarantees, this setup allows us to isolate the effects of our aggressive learning rates.
}
These baselines are specifically selected because they share key characteristics with our approach: they handle an unknown total number of experts, require no hyperparameter tuning, and possess theoretical guarantees for dynamic regret.
All algorithms were implemented in Python and executed on a single CPU core to ensure a fair comparison of runtime.

To ensure a rigorous comparison, we utilize both controlled synthetic data and diverse real-world benchmarks.
For the synthetic evaluation, we employ the Rotated MNIST dataset~\citep{david2017mnist} to generate image streams containing three distinct types of distribution shifts: abrupt, incremental, and corruption.
This setup allows us to rigorously analyze the adaptation mechanism under controlled conditions.
To further validate the versatility of the proposed algorithm in practical applications, we utilize a suite of real-world datasets spanning various domains~\citep{ditzler2012incremental,yao2022wild,Hadi13,ikonomovska2011learning,harries1999splice2,blackard1999comparative,StreamMiningRepo,losing2016knn}.

Across all experiments, we maintain a consistent protocol to evaluate the online model selection performance.
We construct an environment where 100 distinct prediction models operate as experts for each dataset.
For the synthetic experiments, these experts consist of pre-trained models specializing in specific rotation angles.
In contrast, for the real-world datasets, we simulate a more practical scenario where the expert pool evolves over time.
Specifically, we sequentially introduce new models trained on recent data while phasing out degraded ones, thereby evaluating the algorithm's ability to manage a dynamic set of experts in non-stationary environments.
The detailed specifications of these datasets and hyperparameters are provided in \appref{app:experiments}.

We quantify predictive performance based on cumulative loss, which serves as a direct measure of the algorithm's adaptability.
For the synthetic experiments, we report the average cumulative loss and standard error to ensure statistical reliability.
For the real-world datasets, the cumulative losses are normalized such that the average cumulative loss of the MsMwC baseline equals 100 to facilitate a clear relative comparison across different domains.

\subsection{Analysis of Adaptation Mechanisms}

\subsubsection{Adaptation to Various Distribution Shifts}

The proposed algorithm consistently achieved the lowest average cumulative loss compared to all baselines across abrupt, incremental, and corruption drift scenarios.
As shown in \tabref{tab:synthetic_results}, our approach outperformed existing algorithms in every setting.
While MsMwC required hundreds of rounds to shift weights to the new optimal model due to its restricted learning rate ($O(1)$),
the proposed algorithm completed the adaptation within only a few rounds by utilizing large learning rates of scale $\Theta(T)$ (\figref{fig:ablation_study}, top row).
These results empirically demonstrate that our safeguard mechanism successfully resolves the trade-off between robustness and agility.

\subsubsection{Ablation Study and Parameter Sensitivity}

Our ablation study reveals that integrating the post-hoc penalty mechanism with optimistic prediction is indispensable for resolving adaptation lag.
We compared the behavior of the proposed algorithm with a variant lacking optimism (i.e., $m_t(i) = 0$).
As illustrated in the bottom row of \figref{fig:ablation_study}, in the configuration without optimism, the prediction error $\|\ell_t - m_t\|_\infty$ increased, causing the cumulative penalty to exceed the safety threshold early.
Consequently, the safeguard mechanism dynamically excluded large learning rates, forcing the algorithm into a slow adaptation regime similar to MsMwC.
In contrast, the full proposed algorithm maintained low cumulative penalties due to accurate optimistic predictions, thereby keeping large learning rates active for extended periods and enabling rapid weight redistribution.

Regarding the hyperparameter $M$, which defines the maximum size of the learning rate grid, our algorithm exhibits significant robustness to parameter selection.
We investigated the sensitivity of the cumulative loss by varying $M$ under the three drift settings (\figref{fig:sensitivity}).
When $M$ was set smaller than the theoretical requirement ($\lceil \log_2 T^2 \rceil$), the algorithm failed to include the optimal learning rate in its search range, resulting in increased loss.
However, we observed no performance degradation even when $M$ was set significantly larger than the required value.
This observation aligns with our theoretical analysis, which indicates that the dependence of $M$ on the regret bound is linear (i.e., $O(\log T)$).
Therefore, practitioners can safely set a sufficiently large $M$ to ensure stable performance without precise prior knowledge of the horizon $T$.

\subsection{Performance on Real-World Benchmarks}

The proposed algorithm demonstrated superior adaptability to real-world distribution drifts, achieving the lowest cumulative loss across all datasets tested.
\tabref{tab:real_world_results} presents the normalized average cumulative loss for diverse domains, covering both classification and regression tasks.
Our algorithm exhibited lower average cumulative loss than the baselines in every dataset, surpassing theoretically guaranteed algorithms.
Notably, this advantage holds true even in domains with complex noise patterns, such as electricity demand forecasting and flight delay prediction.

This consistent performance advantage stems from the algorithm's ability to balance agility and robustness in unpredictable environments.
Real-world data streams often contain a mixture of abrupt changes and noisy fluctuations.
By leveraging the safeguarded large learning rates, our algorithm can aggressively adapt to these changes without being destabilized by noise.
Consequently, the experimental results indicate that the proposed algorithm effectively resolves the trade-off between theoretical robustness and practical agility across a wide range of real-world domains.

\subsection{Computational Efficiency and Scalability}

Finally, we confirm that the proposed algorithm maintains high computational efficiency suitable for real-world applications, despite the sophisticated safeguard mechanism.
As shown in \tabref{tab:computation_time}, the average execution time on the synthetic datasets is $29.77 \pm 0.02$ ms per round.
Although this runtime is approximately twice that of MsMwC, the increase is directly attributable to the expanded search space of learning rates and the penalty mechanism.
Crucially, the absolute execution time remains under 30 ms, which is sufficiently fast to handle high-frequency data streams without causing latency bottlenecks.
Therefore, the proposed algorithm successfully balances the computational overhead required for enhanced agility with the low-latency execution essential for online processing.

Furthermore, the proposed algorithm demonstrates linear scalability with respect to the number of experts, validating our theoretical complexity analysis.
\figref{fig:time_scaling} illustrates the relationship between the number of experts $K$ and the average runtime per round, with the number of learning rate candidates $M$ fixed.
The results indicate that the runtime scales linearly with $K$, consistent with the theoretical time complexity of $O(K \log^2 T)$.
Notably, the ratio of the execution time between the proposed algorithm and MsMwC remains constant across all scales.
This observation confirms that the overhead of the safeguard mechanism is a constant factor and does not compromise the algorithm's scalability even in large-scale settings with extensive expert pools.


\section{Conclusion}

In this paper, we established a novel design paradigm that enables the utilization of large learning rates through post-hoc management.
The proposed algorithm, OOMD with safeguarded large learning rates, dynamically monitors unstable updates by introducing a cumulative penalty term $L$.
This mechanism overcomes the long-standing theoretical limitation that restricts learning rates to $O(1)$, allowing learning rates up to a scale of $\Theta(T)$ while maintaining state-of-the-art dynamic regret bounds.
Empirical evaluations demonstrate that the proposed algorithm resolves the adaptation lag by drastically reducing the number of rounds required to adapt to abrupt concept drifts.
Furthermore, our algorithm achieves lower cumulative loss compared to existing baselines even under incremental changes and corrupted distributions.

\bibliographystyle{ACM-Reference-Format}
\bibliography{references}

\appendix


\section{Generalization for Real-World Environments}\label{app:problem_setting}

In \secref{sec:preliminaries}, we defined the Tracking the Best Expert problem under the assumptions of a fixed number of experts and bounded losses ($\ell_t \in [0, 1]$) for theoretical clarity.
However, real-world stream data analysis necessitates adaptability to dynamic expert pools and unknown loss scales.
This appendix details the generalized problem setting and the corresponding extension of our algorithm used in the numerical experiments.

\subsection{Realistic Problem Setting}
Real-world environments often require the learner to incorporate new predictive models trained on recent data or to discard obsolete models.
To address this, we extend the formulation to handle a time-varying set of experts.
Let $K_t$ denote the set of indices of available experts at round $t$.
The size of this set, $|K_t|$, varies over time.
Consequently, the learner maintains a probability distribution $p_t$ over the dynamic simplex $\Delta_{|K_t|}$.

Furthermore, the losses of predictive models may exceed the unit interval.
Thus, we remove the assumption $\ell_t, m_t \in [0, 1]^K$.

\subsection{Loss Clipping for Unknown Loss Range}

\begin{algorithm}[tb]
    \caption{Loss Clipping Technique}
    \label{alg:loss_clipping}
    \begin{algorithmic}[1]
        \REQUIRE Algorithm $\mathcal{A}$, initial scale $B_0$, scaling rate $R$.
        \STATE Set $\tilde{B} = B_0$.
        \STATE Initialize $\mathcal{A}$ for loss range $[0, \tilde{B}]$.
        \FOR{$t = 1, 2, \dots, T$}
            \STATE Receive prediction vector $m_t$ and feed it to $\mathcal{A}$.
            \STATE Obtain decision $p_t$ from $\mathcal{A}$.
            \STATE Play $p_t$ and observe loss vector $\ell_t$.
            \STATE Let $B_t = \max_{s \in [t]} \|\ell_s - m_s\|_\infty$ and $\bar{\ell}_t = m_t + \frac{B_{t-1}}{B_t}(\ell_t - m_t)$.
            \STATE Feed $\bar{\ell}_t$ to $\mathcal{A}$ for weight update.
            \IF{$B_t > \tilde{B}$}
                \STATE Set $\tilde{B} = \max(R\tilde{B}, B_t)$.
                \STATE Initialize $\mathcal{A}$ for loss range $[0, \tilde{B}]$.
            \ENDIF
        \ENDFOR
  \end{algorithmic}
\end{algorithm}

\algoref{alg:loss_clipping} outlines the Loss Clipping technique designed to handle unknown loss ranges.
This wrapper mechanism ensures that the underlying algorithm $\mathcal{A}$ always receives loss vectors whose prediction errors $\|\ell_t - m_t\|_\infty$ are strictly bounded by the current scale $\tilde{B}$.
For algorithms that do not utilize optimistic predictions, we simply set the prediction vector $m_t$ to the zero vector.

This technique extends the theoretical guarantees of algorithms designed for bounded losses to arbitrary loss ranges.
When the underlying algorithm $\mathcal{A}$ requires a fixed range (e.g., $[0, 1]$), we normalize the input prediction and loss vectors by the current scale $\tilde{B}$ before feeding them to $\mathcal{A}$.
Although the dynamic rescaling involves algorithm restarts and the use of the surrogate loss $\bar{\ell}_t$, the theoretical analysis guarantees that the resulting overhead in the cumulative regret is negligible~\citep{chen21impossible,Cutkosky19}.

In our numerical experiments, we configured the hyperparameters based on the theoretical capabilities of each algorithm.
For the baselines other than MsMwC and the proposed algorithm, we set the initial scale $B_0 = 1$ and the scaling rate $R = 2$,
as these baselines assume a bounded loss range of $[0, 1]$.
In contrast, MsMwC and the proposed algorithm theoretically accommodate losses within the range $[0, T]$ without modification.
Therefore, we set $B_0 = T$ and $R = T$ for these two algorithms, consistent with the algorithm design specified in Theorem 8 of \citet{chen21impossible}.

\subsection{Generalized Proposed Algorithm}\label{app:general_alg}

\begin{algorithm}[tb]
    \caption{OOMD with Safeguarded Large Learning Rates for Unknown Total Number of Experts}
    \label{alg:proposed_general}
    \begin{algorithmic}[1]
        \REQUIRE Horizon $T \in \N{}$, Initial threshold $U_1$.
        \STATE Set $M = \lceil \log_2 T^2 \rceil$.
        \STATE Define learning rates $\eta(j) = \frac{2^j}{16T}$ for all $j \in [M]$.
        \STATE Initialize weights $w'_1(i, j) = \eta(j)^2$.
        \STATE Initialize penalties $L_1(j) = 0$ for all $j \in [M]$.
        \FOR{$t = 1, 2, \dots, T$}
            \STATE Receive set of experts $K_t$ and prediction vector $m_t \in \R{|K_t|}$.
            \STATE Let $\tilde{m}_t(i,j) = m_t(i)$ for all $i \in K_t$ and $j \in [M]$.
            \STATE Define $w'_t(i,j) = \eta(j)^2$ for all $i \in K_t \setminus \bigcup_{s \in [t-1]} K_s$.
            \STATE Let $W_t = \sum_{i \in K_t} \sum_{j \in [M]} w'_t(i,j)$ and $p'_t(i,j) = w'_t(i,j) / W_t$.
            \STATE Construct active set $\mathcal{J}_t = \{ j \in [M] \mid L_t(j) \le U_t \}$.
            \STATE Let $\psi_t(w) = \sum_{i \in K_t,\,j \in [M]} \frac{1}{\eta(j)}w(i,j)\log w(i,j)$ and $\varepsilon_t = \frac{1}{64 W_t T^2}$.
            \STATE Compute $\tilde{p}_t \in \argmin_{p \in \Omega_t} \langle \tilde{m}_t, p \rangle + D_{\psi}(p, p'_t)$, where $\Omega_t = \{ p \in \Delta_{|K_t| \times M} \mid p(i,j) \ge \varepsilon_t\,(\forall j \in \mathcal{J}_t),\ p(i,j) = \varepsilon_t\,(\forall j \notin \mathcal{J}_t) \}$.
            \STATE Compute final prediction $p_t(i) \propto \sum_{j \in \mathcal{J}_t} \tilde{p}_t(i,j)$.
            \STATE Play $p_t$ and observe loss vector $\ell_t \in \R{|K_t|}$.
            \STATE Let $\tilde{\ell}_t(i,j) = \ell_t(i)$ for all $i \in K_t$ and $j \in [M]$.
            \STATE Compute $\tilde{p}'_{t+1} \in \argmin_{p \in \Omega_t} \langle \tilde{\ell}_t + a_t, p \rangle + D_{\psi}(p, p'_t)$, with correction $a_t(i,j) = 32\eta(j)(\ell_t(i) - m_t(i))^2$.
            \STATE Update weights $w'_{t+1}(i,j)$ according to \eqref{eq:scaling_weight_update}
            \STATE Update penalties $L_{t+1}(j)$ according to \eqref{eq:penalty_update_general}.
            \STATE Update threshold as $U_{t+1} = \max(U_t, \|\ell_t - m_t\|_\infty)$.
        \ENDFOR
  \end{algorithmic}
\end{algorithm}

\algoref{alg:proposed_general} presents the extension of the proposed algorithm to the generalized setting where the set of experts and the loss scale change over time.
This algorithm adapts the weight update rule and the safeguard mechanism to handle the dynamic nature of the problem.

The algorithm restricts the weight update to the active set of experts $K_t$.
In each round, the algorithm first calculates the total weight $W_t$ of the currently available experts.
It then constructs a temporary probability distribution $p'_t$ by normalizing these weights.
After executing the standard OOMD update within this subspace to obtain $\tilde{p}'_{t+1}$, the algorithm scales the result back by $W_t$ as follows:
\begin{align}\label{eq:scaling_weight_update}
    w'_{t+1}(i,j) &= \left\{
    \begin{array}{ll}
        W_t \tilde{p}'_{t+1}(i,j) & \mathrm{if}\ i \in K_t \\
        w'_t(i,j) & \mathrm{otherwise}
    \end{array}\right.
\end{align}
This procedure ensures that the relative weights of active experts are updated correctly while preserving the stored weights of temporarily inactive experts.
Moreover, the penalty update is also restricted to the active experts:
\begin{equation}\label{eq:penalty_update_general}
    L_{t+1}(j) = L_t(j) + \sum_{i \in K_t} \ind[32 \eta(j)|r_t(i)| > 1] \tilde{p}_t(i, j) r_t(i)
\end{equation}

To accommodate unknown loss scales, we modify the safeguard mechanism.
Unlike the fixed threshold used in the basic setting, this generalized algorithm utilizes a dynamic threshold $U_t$ that tracks the maximum prediction error observed so far.
The learner considers a learning rate candidate $j$ active only if its cumulative penalty $L_t(j)$ remains within this adaptive bound $U_t$.

This generalized architecture preserves the theoretical robustness of the original approach.
Specifically, the following lemma establishes that the core performance guarantees remain valid even with a time-varying set of experts.

\begin{lemma}\label{lem:variable_experts}
Let $K_{total} = \bigcup_{t \in [T]} K_t$ be the union of all expert sets, identified with $[K]$ without loss of generality.
We assume two conditions regarding the expert set dynamics:
(1) any expert $i \in K_t \setminus K_{t-1}$ is strictly new, meaning $i \notin \bigcup_{s \in [t-1]} K_s$,
and (2) the total weight satisfies monotonicity, $W_{t-1} \le W_t$, for all $t$.
Then, for any interval $\{t_1, t_2\} \subseteq [T]$, any comparator $u \in \Delta_{K_{total}}$ such that its support is contained in $\bigcap_{t=t_1}^{t_2} K_t$, and any learning rate index $j^* \in \bigcap_{t=t_1}^{t_2} \mathcal{J}_t$,
\algoref{alg:proposed_general} achieves the following regret bound:
\begin{align*}
    \sum_{t=t_1}^{t_2} \langle \ell_t, p_t - u \rangle
    &\le \frac{1}{\eta(j^*)} \ln (64 W_{t_2+1} T) + 32\eta(j^*) \sum_{t=t_1}^{t_2} \sum_{i \in K_t} u(i) r_t(i)^2 \\
    &\quad+ \sum_{j \in [M]} \frac{q_{p'_{t_1}}(j) - q_{\tilde{p}'_{t_2+1}}(j)}{\eta(j)} \\
    &\quad+ \sum_{t=t_1}^{t_2} \sum_{j \in [M]} \Delta L_t(j) + O\left(\frac{t_2 - t_1}{T}\right),
\end{align*}
where, $q_p(j) = \sum_{i \in K} p(i, j)$ denotes the marginal probability of distribution $p$ for the learning rate layer $j$.
\end{lemma}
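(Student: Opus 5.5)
The plan is to reduce Lemma~\ref{lem:variable_experts} to the single-round analysis behind Lemma~\ref{lem:general_bound}, carried out in the normalized coordinates $p'_t = w'_t/W_t$ on the active subspace $K_t \times [M]$. The proof has four steps: lift to the product space, run the per-round OOMD analysis, telescope the Bregman terms across the rescaling and the arrival of new experts, and collect the error terms.

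\textbf{Step 1: lifting.} For the interval $\{t_1,\dots,t_2\}$, the comparator $u$ is supported on $\bigcap_{t} K_t$. I will define a virtual comparator $\bar u_t \in \Omega_t$ that places mass essentially $u(i)$ on $(i,j^*)$ and the floor $\varepsilon_t$ on every other coordinate, renormalized. Since $j^* \in \mathcal{J}_t$ throughout, $\bar u_t$ is feasible. Because $p_t(i) \propto \sum_{j\in\mathcal{J}_t}\tilde p_t(i,j)$, and the inactive layers carry total mass at most $|K_t| M\varepsilon_t$, we get
\[
\langle \ell_t, p_t-u\rangle \le \langle\tilde\ell_t,\tilde p_t-\bar u_t\rangle + O(|K_t|M\varepsilon_t\|\ell_t\|_\infty).
\]
With $\varepsilon_t = 1/(64W_tT^2)$ and $W_t$ bounded below by the initial weights, this error is $O(1/T)$ per round, which gives the $O((t_2-t_1)/T)$ term.

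\textbf{Step 2: per-round analysis.} Within round $t$ the update is exactly the OOMD step of Algorithm~\ref{alg:proposed} with starting point $p'_t$. I will reuse verbatim the argument in the proof of Lemma~\ref{lem:general_bound}: Lemma 15 of \citet{chen21impossible}, followed by the two-case split on $32\eta(j)|r_t(i)|\le 1$ versus $>1$. This yields
\[
\langle\tilde\ell_t,\tilde p_t-\bar u_t\rangle \le D_{\psi_t}(\bar u_t,p'_t)-D_{\psi_t}(\bar u_t,\tilde p'_{t+1}) + 32\eta(j^*)\sum_i u(i)r_t(i)^2+\sum_j\Delta L_t(j)+\text{small}.
\]
The penalty term is now defined via $\tilde p_t$, matching \eqref{eq:penalty_update_general}. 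Nothing in this step depends on the expert set being fixed.

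\textbf{Step 3: telescoping (the main obstacle).} The Bregman terms do not telescope directly, because $p'_{t+1}$ is obtained from $\tilde p'_{t+1}$ by rescaling by $W_t$, then dividing by $W_{t+1}$, and adding fresh experts. Two issues need handling.
\begin{itemize}
\item \emph{Newly inserted experts.} By condition (1), newly inserted experts $i\notin\bigcup_{s<t}K_s$ are never in the support of $u$. Their only effect is through the linear part $\sum_{i,j} p(i,j)/\eta(j)$ of the unnormalized KL divergence $D_\psi(x,y)=\sum \frac{1}{\eta(j)}(x\ln\frac{x}{y}-x+y)$.
\item \emph{Renormalization.} Condition (2), $W_t\le W_{t+1}$, implies $p'_{t+1}(i,j)\le \tilde p'_{t+1}(i,j)$ on old coordinates. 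Hence the $u\ln(1/p')$ part changes by at most $\frac{1}{\eta(j^*)}\ln(W_{t+1}/W_t)$, which telescopes to a $\ln W$ term.
\end{itemize}
Concretely, I will write $D_\psi(\bar u,y)=\sum\frac{1}{\eta(j)}\bar u\ln\frac{\bar u}{y}+\sum_j \frac{q_y(j)-q_{\bar u}(j)}{\eta(j)}$. The entropy-type pieces telescope up to the $\ln$-ratio of $W$. The linear pieces telescope into $\sum_j (q_{p'_{t_1}}(j)-q_{\tilde p'_{t_2+1}}(j))/\eta(j)$, with the intermediate mismatches having a favourable sign by condition (2).

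\textbf{Step 4: boundary term.} The remaining first-round term is $\frac{1}{\eta(j^*)}\sum_i u(i)\ln\frac{1}{p'_{t_1}(i,j^*)}$. Since $p'_{t_1}\ge\varepsilon$-type floors times $W$ ratios, this is at most $\frac{1}{\eta(j^*)}\ln(64W_{t_2+1}T)$ after absorbing the accumulated $\ln(W_{t+1}/W_t)$ factors. Summing the four contributions gives the claimed bound.
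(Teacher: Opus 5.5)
Your route is the same as the paper's. You lift to $K_t\times[M]$ with a time-dependent floored comparator $\bar u_t$, apply Lemma~\ref{lem:OMD_bound} with the two-case split per round, telescope by separating the renormalization effect (the $\frac{1}{\eta(j^*)}\ln(W_{t+1}/W_t)$ factors on continuing coordinates) from the linear part of the divergence, and bound the initial divergence via the floor.

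However, the claim in Step~3 that all intermediate linear mismatches have a favourable sign by condition (2) is false. At each linking step the mismatch is
\[
\sum_{j\in[M]}\frac{1}{\eta(j)}\Bigl(\sum_{i\in K_{t+1}}p'_{t+1}(i,j)-\sum_{i\in K_t}\tilde{p}'_{t+1}(i,j)\Bigr).
\]
Only two groups contribute non-positively here: the continuing experts $K_t\cap K_{t+1}$ (factor $W_t/W_{t+1}-1\le 0$ by condition (2)) and the removed experts $K_t\setminus K_{t+1}$. The newly inserted experts $N_t=K_{t+1}\setminus K_t$ add the strictly positive amount $\sum_{j}\frac{1}{\eta(j)}\sum_{i\in N_t}p'_{t+1}(i,j)$. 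You note that this term exists but never bound it.

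The paper bounds it explicitly. Fresh initialization gives $p'_{t+1}(i,j)=\eta(j)^2/W_{t+1}$, so the term equals $|N_t|\sum_j\eta(j)/W_{t+1}=O(|N_t|/T)$. This uses $\sum_j\eta(j)=O(T)$ and $W_{t+1}\ge W_1\ge\sum_j\eta(j)^2=\Omega(T^2)$, the latter by condition (2). Summed over the interval, this term, together with your lifting error, is where the $O((t_2-t_1)/T)$ term comes from.

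This is also where condition (1) is genuinely needed. That inserted experts lie outside the support of $u$ already follows from $\mathrm{supp}(u)\subseteq\bigcap_{t=t_1}^{t_2}K_t$. Condition (1) is what guarantees that every expert entering $K_{t+1}\setminus K_t$ starts from the fresh weight $\eta(j)^2$, because Algorithm~\ref{alg:proposed_general} resets weights only for never-seen experts. Without it, a returning expert would carry a stored weight that this estimate does not control.

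Two smaller points. First, $\bar u_t$ moves with $t$ through $\varepsilon_t$, its normalization, and the support $K_t$. So the cross terms $\sum_{i,j}\frac{\bar u_t(i,j)}{\eta(j)}\ln(\cdot)$ do not telescope exactly. The paper isolates this as a separate comparator-drift term and checks that it is negligible; your ``telescope up to'' should include that estimate.

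Second, Algorithm~\ref{alg:proposed_general} uses the floor $\varepsilon_t=1/(64W_tT^2)$, and experts that are fresh at $t_1$ have $\eta(1)^2=1/(64T^2)$. With these, your Step~4 yields $\frac{1}{\eta(j^*)}\ln(64W_{t_2+1}T^2)$ rather than $\frac{1}{\eta(j^*)}\ln(64W_{t_2+1}T)$. The paper's proof is equally loose here: it uses $\varepsilon_t=1/(64W_tT)$, which is inconsistent with the algorithm. The discrepancy costs at most a factor $2$ in the leading term.
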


This lemma implies that \algoref{alg:proposed_general} achieves dynamic regret bounds consistent with those established in Theorems \ref{thm:switching_regret}, \ref{thm:switching_regret_benign}, and \ref{thm:path_length}.
Crucially, these guarantees hold for dynamic expert pools, with the bounds properly scaled by the magnitude of the losses.

The assumptions (1) and (2) on expert dynamics are grounded in practical implementation strategies common in data stream mining.
Assumption (1) treats each added expert as a unique instance, which reflects the standard practice of frequent model retraining.
Even if a previously removed model is reintroduced, assigning it a new unique identifier ensures it starts with a fresh weight and learning rate.
Assumption (2) is naturally satisfied by our pruning mechanism, which is motivated by computational efficiency.
In our experiments, we limit the expert pool to 100 members to maintain a constant computational overhead.
When this limit is reached, we remove experts with the smallest current weights.
Since these removed experts account for only a negligible fraction of the total weight, while incoming experts contribute a fixed initial weight, the total weight $W_t$ remains non-decreasing.
Thus, this common engineering practice for resource management simultaneously ensures the validity of our theoretical framework.


\section{Experimental Setup}\label{app:experiments}

\subsection{Algorithm Configurations}
In this subsection, we provide the detailed configurations for the proposed algorithm and the baseline methods used in our experiments.

\subsubsection{Common Settings}
To accommodate regression tasks where the range of loss values is unknown a priori, we applied the loss clipping technique (\algoref{alg:loss_clipping}) to all algorithms.
This ensures that the loss values processed by the algorithms lie in [0, 1], satisfying the requirements for the theoretical guarantees.

\subsubsection{Proposed Algorithm}
For the proposed algorithm, we used the following settings.

\paragraph{Learning Rate Grid and Threshold for Active Learning Rates}
We set the number of candidate learning rates to $M = 40$, and set the threshold for active learning rates to $U_1 = 1$.
This grid size satisfies the theoretically required condition: $M \ge \log_2(T^2)$.

\paragraph{Optimistic Prediction}
Recall that $K_t$ is the set of experts available in round $t$.
We used the following optimism:
\begin{align*}
    m_t(i) = m'_t(i) + \langle \ell_t - m'_t, p_t \rangle
\end{align*}
for all $i \in K_t$, where
\begin{align*}
    m'_t(i) &= \left\{
    \begin{array}{ll}
        \frac{1}{2}m'_{t-1}(i) + \frac{1}{2}\ell_{t-1}(i) & \mathrm{if}\ i \in K_{t-1} \\
        \tilde{m}_t & \mathrm{if}\ i \in K_t \setminus \bigcup_{s \in [t-1]} K_s \\
        m'_{t-1}(i) & \mathrm{otherwise}
    \end{array}\right.
\end{align*}
and
\begin{align*}
    \tilde{m}_t &= \left\{
    \begin{array}{ll}
        \frac{1}{2}\tilde{m}_{t-1} + \frac{1}{2}\langle \ell_{t-1}, p_{t-1} \rangle & \mathrm{if}\ t > 1 \\
        0 & \mathrm{otherwise}
    \end{array}\right..
\end{align*}
Note that we do not need to know $\langle \ell_t-m'_t, p_t \rangle$ for the decision of OOMD since adding a constant loss vector does not affect our decision.

The update rule of surrogate optimism $m'_t$ is based on algorithms for tracking the best linear predictor~\citep{cesa2006prediction,Herbster2001tracking} and the sleeping experts framework~\citep{Adamaskiy12,Koolen12}.
More precisely,
(1) $\{m'_t(i)\}_t$ is a sequence of the online gradient descent to minimize the squared error for $\ell_t(i)$ in round $t$, and
(2) the initial value of $m'_t(i)$ is derived from the losses by the algorithm.

\paragraph{Weight Usage for Expert Removal}
In the model management phase in the experiments using real-world datasets, the decision to remove degraded experts is based on the weights $w'_t$.
These correspond to the distribution prior to the optimistic update step, rather than the final weights $w_t$ used for prediction.

\subsubsection{Baselines}
We compared our approach against four state-of-the-art tuning-free algorithms.

\paragraph{MsMwC~\citep{chen21impossible}}
To facilitate a clearer ablation study, we employ a single-layer version of MsMwC for unknown loss range (see footnote 5 in \citet{chen21impossible}).
To ensure a fair comparison with the proposed algorithm, we set $T = 2^{20}$.
Moreover, MsMwC employed the same optimism and expert removal process as the proposed algorithm.
While the original MsMwC cannot be directly applied to scenarios where the total number of experts is unknown,
it is compatible with the weight scaling mechanism introduced in \appref{app:general_alg}.
Therefore, we integrated this weight scaling into MsMwC for our experiments on real-world datasets to accommodate dynamic expert pools.

\paragraph{ATV~\citep{luo2015achieving}, Squint~\citep{koolen15second}, and CBCE~\citep{jun17a}}
These algorithms are theoretically tuning-free.
Therefore, we utilized their standard implementations without explicit hyperparameter tuning.
Note that while Squint is originally designed for stationary environments, it was converted for non-stationary environments using the data-streaming technique~\citep{Hazan2007AdaptiveAF} in our experiments, aligning its setup with those of ATV and CBCE.

\subsection{Setup for Synthetic Datasets}
\subsubsection{Datasets and Experts Construction}
To construct a diverse pool of experts capable of handling rotational concept drift, we generated a set of predictive models trained on rotated images.
We employed logistic regression as the base model for all experts.
We prepared 100 distinct experts by training each model on the MNIST training data rotated by a specific angle.
These angles were uniformly distributed from 0 to 360 degrees with an increment of 3.6 degrees, thereby ensuring comprehensive coverage of all possible orientations.

\subsubsection{Protocol}
The experimental protocol involved a sequential processing of the data stream.
We first shuffled the original dataset to eliminate any inherent ordering.
The first 10,000 samples were designated for the pre-training phase to initialize the 100 expert models described above.
Following this initialization, the online learning process was executed over a horizon of 400 rounds.
In each round, the algorithms received a mini-batch consisting of 10 images.
The learner predicted the class labels for these images, and the performance was evaluated based on the zero-one loss averaged over the predictions within the current batch.
We conducted 500 independent trials for each drift scenario described below.

\subsubsection{Drift Scenarios}
To rigorously assess the algorithms under different non-stationary conditions, we simulated three distinct drift scenarios:
abrupt drift, incremental drift, and data corruption.

In the abrupt drift scenario, the distribution of the incoming data changes suddenly at fixed intervals.
We configured the target rotation angle of the stream to switch to a new, randomly selected angle every 100 rounds.
This setup tests the ability of the learner to quickly adapt to sudden shifts in the optimal expert.

In the incremental drift scenario, the data distribution evolves continuously over time.
We implemented this by rotating the incoming images by an additional 1 degree at every round, requiring the learner to track a smoothly changing target.

Finally, the corruption scenario evaluates robustness against noise.
In this setting, the data stream followed a fixed rotation, but with a 10\% probability, the incoming images were rotated by a random angle rather than the target angle.
This simulates the presence of impulsive noise or outliers in the data stream.

\subsection{Setup for Real-World Datasets}

\subsubsection{Dataset Descriptions}
In this section, we provide a comprehensive description of the eleven real-world datasets employed to evaluate the versatility and robustness of the proposed algorithm in non-stationary environments.
These datasets encompass a diverse range of domains, including document classification, image recognition, and sensor data analysis, spanning both classification and regression tasks.
We summarize the statistical specifications of each dataset in \tabref{tab:datasets}.
The batch size is chosen such that the total data consumption throughout the 400 rounds of OMS does not exceed the available dataset size.

\begin{table*}[h]
    \centering
    \caption{Summary of Real-World Datasets. For the text-based datasets arXiv and HuffPost, the number of dimensions indicates the size of the feature vectors (768) extracted using DistilBERT.}
    \label{tab:datasets}
    \begin{tabular}{l c c c c c}
        \toprule
        \textbf{Dataset} & \textbf{Samples} & \textbf{Dimensions} & \textbf{Task Type} & \textbf{Classes} & \textbf{Batch Size} \\
        \midrule
        Airlines & 539,383 & 7 & Classification & 2 & 500 \\
        arXiv & 2,057,952 & 768 & Classification & 172 & 3,000 \\
        Bikesharing & 17,379 & 12 & Regression & N/A & 24 \\
        Electricity & 45,312 & 8 & Classification & 2 & 48 \\
        Forest & 581,012 & 54 & Classification & 7 & 500 \\
        HuffPost & 63,907 & 768 & Classification & 41 & 100 \\
        Powersupply & 29,928 & 2 & Classification & 24 & 24 \\
        Rialto & 82,250 & 27 & Classification & 10 & 100 \\
        Temperature & 18,158 & 8 & Regression & N/A & 20 \\
        Weather & 18,159 & 8 & Classification & 2 & 20 \\
        Yearbook & 37,921 & 1024 & Classification & 2 & 50 \\
        \bottomrule
    \end{tabular}
\end{table*}

\paragraph{Airlines~\citep{ikonomovska2011learning}}
This dataset functions as a binary classification benchmark for identifying flight delays.
The task involves predicting whether a given flight will be delayed based on route information, schedule details, and carrier data.
This stream exhibits concept drift due to seasonal travel patterns, changing airline operations, and fluctuating airport traffic conditions over time.

\paragraph{arXiv~\citep{yao2022wild}}
This dataset represents a document classification task where the objective is to categorize research papers into their primary subject areas, such as physics, computer science, or mathematics.
We utilize the DistilBERT model to extract 768-dimensional feature vectors from the raw text, following protocols established in \citet{yao2022wild}.
The data stream naturally contains semantic shifts as research trends and terminology evolve within the scientific community.

\paragraph{Bikesharing~\citep{Hadi13}}
This dataset is derived from the Capital Bikesharing system in Washington, D.C., encompassing a two-year usage log.
This regression task requires the prediction of the hourly count of rented bicycles based on environmental and seasonal settings, including temperature, humidity, wind speed, and holiday status.
The data exhibits strong periodicity and concept drift driven by changing user behaviors and weather patterns.

\paragraph{Electricity~\citep{harries1999splice2}}
This dataset captures the fluctuation of electricity prices in the New South Wales (Australia) market.
The classification goal is to predict whether the NSW price will be higher or lower than the VIC (Victoria) price over a 24-hour period.
This dataset is characterized by non-stationary behavior influenced by demand dynamics, market supply changes, and time-dependent consumption patterns.

\paragraph{Forest~\citep{blackard1999comparative}}
The Forest Covertype dataset involves the classification of forest cover types, such as Spruce-Fir or Lodgepole Pine, based on cartographic variables including elevation, slope, and soil type.
While often treated as a static benchmark, we process the data sequentially to simulate a streaming environment where spatial dependencies translate into temporal distribution shifts as the sampling location moves across different geological regions.

\paragraph{HuffPost~\citep{yao2022wild}}
This dataset comprises news headlines from the years 2012 to 2018.
Similar to the arXiv setup, this is a multiclass classification problem where we predict the news category (e.g., politics, sports, entertainment).
We employ DistilBERT to generate high-dimensional embeddings for the text.
The stream reflects cultural and political shifts, introducing significant concept drift as the distribution of news topics changes over the years.

\paragraph{Powersupply~\citep{StreamMiningRepo}}
This dataset contains hourly power supply records from an Italian electricity company recorded over three years.
The learning objective is to classify the current power supply record into one of twenty-four hours of the day.
Concept drift in this stream arises from seasonal effects, weather variations, and differences between working days and weekends, making the mapping between power levels and time of day non-stationary.

\paragraph{Rialto~\citep{losing2016knn}}
This dataset consists of image features extracted from surveillance cameras monitoring classic buildings and bridges.
The classification task involves identifying specific structures or scene attributes under varying lighting and weather conditions.
The visual concepts drift due to natural illumination changes, seasonal weather effects, and camera degradations over time.

\paragraph{Weather~\citep{ditzler2012incremental}}
This dataset is sourced from the National Oceanic and Atmospheric Administration (NOAA), specifically containing daily measurements from the Offutt Air Force Base over a fifty-year period.
This binary classification task aims to predict the presence of rain precipitation based on features such as temperature, pressure, and wind speed.
The dataset allows for the evaluation of long-term climate changes and seasonal drifts.

\paragraph{Temperature}
This dataset utilizes the same feature set as the Weather dataset but is formulated as a regression problem.
The objective is to predict the maximum temperature for the following day.
This task evaluates the capability of the algorithm to track continuous numerical changes in meteorological patterns, providing a rigorous test for regression performance in the presence of seasonal drift.

\paragraph{Yearbook~\citep{yao2022wild}}
This dataset is an image classification benchmark composed of high school yearbook portraits spanning several decades.
The task is to predict the gender of the individual in the photo.
The dataset exhibits significant covariate shift caused by evolving fashion trends, hairstyles, and photographic styles throughout the 20th century.

\subsubsection{Protocol}
We evaluated the performance of the algorithms over a fixed horizon of 400 rounds for all real-world datasets.
The batch size for each round varies depending on the total size of the dataset:
specific values for each dataset are listed in \tabref{tab:datasets}.
The starting position of the stream was sampled uniformly at random from the available data points, under the constraint that sufficient data remained to complete the full 400 rounds.
To ensure the robustness of the results against the temporal location of the data, we conducted 100 independent trials with different starting positions for each dataset and reported the mean performance.
The loss function used for evaluation depends on the task type.
For classification tasks, we employed the average zero-one loss calculated over the samples in the current batch.
For regression tasks, we utilized the Mean Absolute Error (MAE).

\subsubsection{Dynamic Expert Pool}
The proposed algorithm and baselines operate with a dynamic set of experts.
We defined a sequence of ten rounds as a single time unit, referred to as a chunk.
The mechanism for adding new experts is designed to maintain a diverse set of models trained on different historical lengths.
Specifically, a new expert is trained and added to the pool whenever the number of accumulated chunks reaches a value of $k \times 2^n$, where $k$ is an odd integer.
At this timing, the new expert is trained using the most recent $2^{n+1}$ chunks.
In the special case where $k=1$, the expert is trained using the most recent $2^n$ chunks.
This exponential schedule ensures that the pool contains experts specializing in both short-term recent trends and long-term historical patterns.

To maintain computational efficiency, we imposed a maximum limit on the number of active experts.
Whenever the total number of experts exceeds 100, a pruning process is triggered.
This process identifies and removes the expert with the smallest current weight, thereby maintaining the pool size at or below 100.

\subsubsection{Experts Construction}
The configuration of the base learners constituting the expert pool varies according to the data modality.

For document datasets, specifically arXiv and HuffPost, we utilized DistilBERT for feature extraction as established in \citet{yao2022wild}.
The expert pool for these datasets consisted of Logistic Regression, Support Vector Machines (SVM), and Ridge Regression-based classifiers trained on the extracted features.
For the Ridge Regression-based classifiers, we determined the class prediction based on the sign of the continuous output value.

For the Yearbook image dataset, we employed the final layer vectors of a Convolutional Neural Network (CNN) as feature representations.
The architecture of this CNN was adopted from \citet{yao2022wild}.
Unlike standard pre-trained models, we trained this CNN from scratch during the model construction phase.
The expert pool included the direct output of this original CNN, as well as Logistic Regression, SVM, and Ridge Regression-based classifiers trained on the CNN features.

For other datasets, the expert configurations differed between classification and regression tasks.
For classification tasks, we employed Decision Trees with maximum depths of 2, 4, 6, and 8, in addition to Logistic Regression.
For regression tasks, the expert pool consisted of Decision Trees with the same depth settings (2, 4, 6, and 8) and Ridge Regression.


\section{Additional Experimental Results}

\begin{figure*}[tb]
    \centering
    \includegraphics[width=0.9\textwidth]{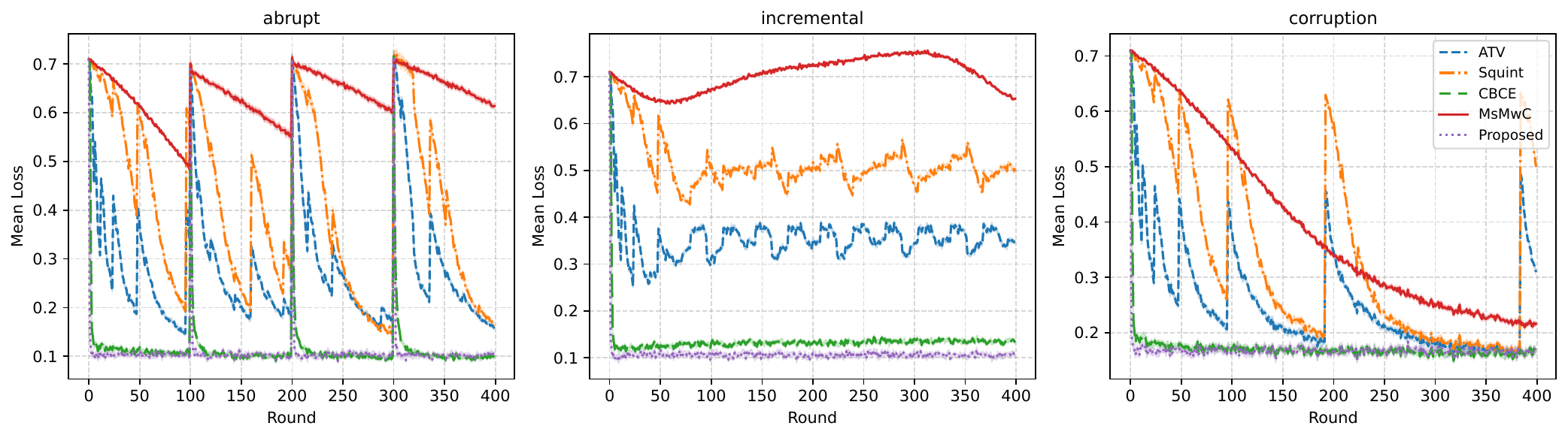}
    \Description{Three side-by-side line plots of mean loss versus round (0 to 400) on the Rotated MNIST dataset, comparing five algorithms: ATV, Squint, CBCE, MsMwC, and Proposed. Shaded bands show standard error. Left panel (abrupt drift): the Proposed and CBCE curves stay near 0.1 with brief spikes at the drift points around rounds 100, 200, and 300, while ATV, Squint, and MsMwC show large recovery transients each time. Middle panel (incremental drift): Proposed and CBCE stay flat near 0.1, ATV and Squint oscillate in the 0.3 to 0.5 band, and MsMwC drifts upward from 0.7 to 0.75. Right panel (corruption): Proposed and CBCE remain low near 0.15 to 0.2, ATV shows sharp upward spikes at intervals, Squint and MsMwC decay slowly from 0.7. The Proposed algorithm tracks CBCE closely across all three scenarios.}
    \caption{Trajectories of the mean loss on the Rotated MNIST dataset under abrupt, incremental, and corruption drift scenarios, comparing the proposed algorithm with four tuning-free baselines (ATV, Squint, CBCE, and MsMwC). The shaded regions represent the standard error. The proposed algorithm achieves the fastest recovery after each distribution shift, consistently attaining the lowest loss across all three drift types.}
    \label{fig:synthetic_comparison}
\end{figure*}

\begin{figure*}[tb]
    \centering
    \includegraphics[width=0.9\textwidth]{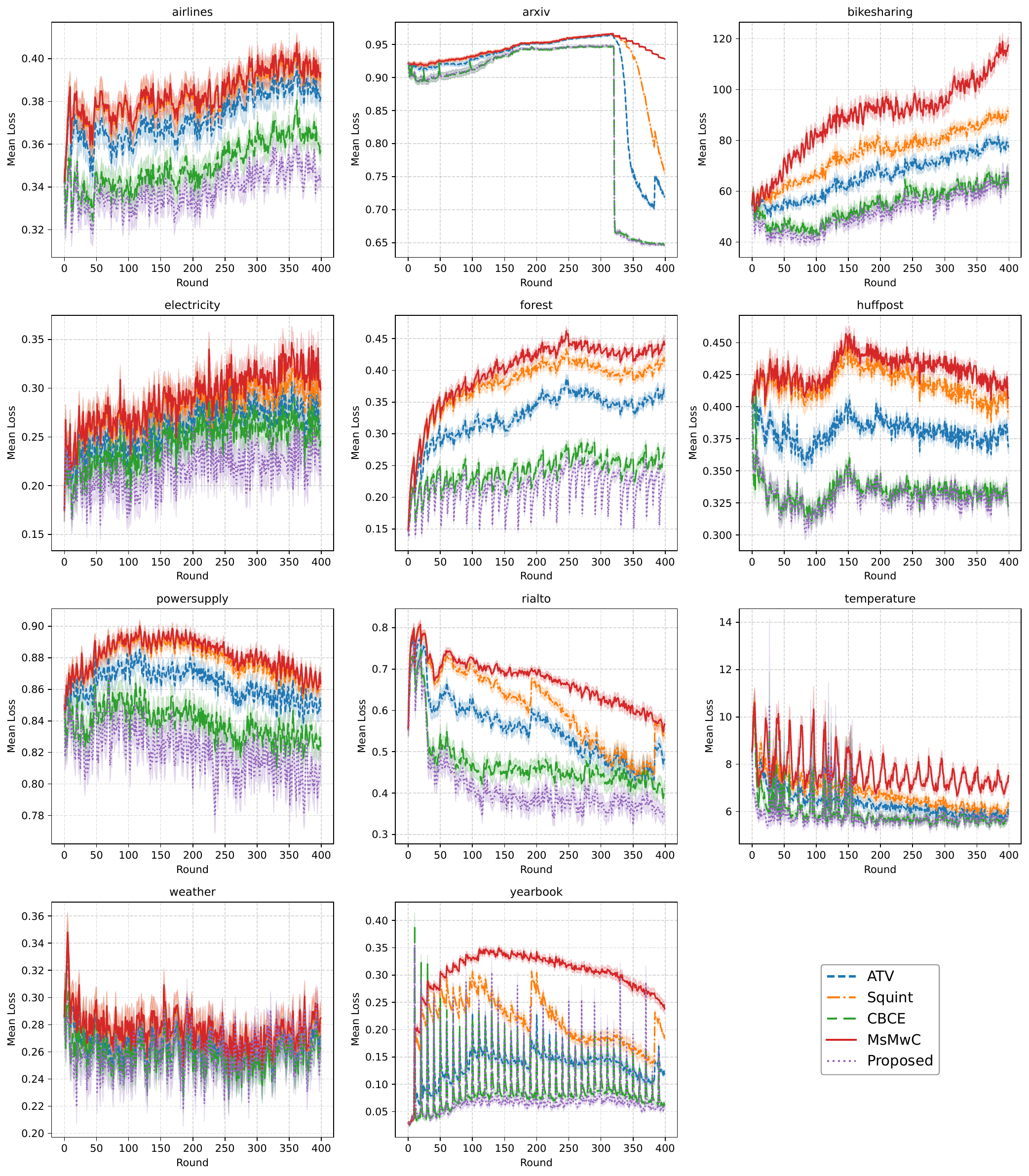}
    \Description{A four-row by three-column grid of line plots showing mean loss versus round (0 to 400) for eleven real-world datasets: airlines, arxiv, bikesharing, electricity, forest, huffpost, powersupply, rialto, temperature, weather, and yearbook. Each subplot compares five algorithms (ATV, Squint, CBCE, MsMwC, Proposed) with shaded standard error bands. Across nearly every dataset the Proposed algorithm (purple dotted) attains the lowest or near-lowest mean loss, typically tied with or slightly below CBCE (green dashed), while MsMwC (red) is the worst on most datasets and ATV and Squint sit in between. Notable patterns: on arxiv all methods rise to about 0.95 until around round 320, then ATV, Squint, CBCE, and Proposed drop sharply to 0.65 to 0.75 while MsMwC stays high; on bikesharing all curves drift upward over time with Proposed and CBCE growing most slowly; on yearbook the curves show periodic vertical spikes with Proposed attaining the lowest baseline near 0.05.}
    \caption{Trajectories of the mean loss on the eleven real-world datasets, comparing the proposed algorithm with four tuning-free baselines (ATV, Squint, CBCE, and MsMwC). The shaded regions represent the standard error. The proposed algorithm outperforms or matches all baselines across diverse classification and regression tasks, demonstrating its robustness and agility in practical non-stationary environments.}
    \label{fig:all_dataset_comparison}
\end{figure*}

This appendix presents the detailed temporal evolution of the cumulative loss for the experiments discussed in \secref{sec:experiments}.
While the main text reports the final cumulative loss in Tables \ref{tab:synthetic_results} and \ref{tab:real_world_results}, these supplementary figures illustrate the trajectory of the learner's performance throughout the entire horizon.
These visualizations clarify how the proposed algorithm adapts to distribution shifts over time compared to the baselines.

\figref{fig:synthetic_comparison} displays the learning curves for the Rotated MNIST dataset under three distinct drift scenarios.
In the abrupt drift setting (left), the proposed algorithm exhibits a sharp decrease in loss immediately after the distribution shifts.
This trajectory contrasts with MsMwC, which suffers from a prolonged adaptation lag due to its conservative learning rate constraints.
Additionally, the results for incremental drift and corruption confirm that the proposed algorithm consistently outperforms the baselines.

\figref{fig:all_dataset_comparison} plots the cumulative loss over time for the eleven real-world datasets.
The proposed algorithm consistently maintains the lowest loss trajectory across diverse domains, ranging from classification to regression tasks.
These trajectories demonstrate that the proposed approach effectively balances agility and robustness in practical applications.


\section{Missing proofs}\label{app:proof}

\subsection{Useful Lemmas}

\begin{lemma}[Lemma 15 of \citet{chen21impossible}]
    \label{lem:OMD_bound}
    Let $w_t \in \argmin_{w \in \mathcal{K}} \langle m_t, w \rangle + D_{\psi}(w, w'_t)$ and $w'_{t+1} \in \argmin_{w \in \mathcal{K}} \langle \ell_t, w \rangle + D_{\psi}(w, w')$ for some compact convex set $\mathcal{K} \subset \R{d}$, convex function $\psi_t$, arbitrary points $\ell_t, m_t \in \R{d}$, and a point $w' \in \R{K}$ such that $D_{\psi}(x, w')$ can be defined for all $x \in \mathcal{K}$.
    Then, for any $u \in \mathcal{K}$, we have
    \begin{align*}
        \langle \ell_t, w_t - u \rangle
        &\le \langle w_t - w'_{t+1}, \ell_t - m_t \rangle
        + D_{\psi}(u, w'_t) - D_{\psi}(u, w'_{t+1}) \\
        &\quad - D_{\psi}(w'_{t+1}, w_t) - D_{\psi}(w_t, w'_t).
    \end{align*}
\end{lemma}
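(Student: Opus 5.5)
The plan is the standard two-step argument for optimistic mirror descent: write the first-order optimality condition of each of the two $\argmin$ problems as a variational inequality, and convert each one into Bregman divergences with the three-point identity. I read the reference point $w'$ in the definition of $w'_{t+1}$ as $w'_t$, which is how the lemma is used in \algoref{alg:proposed}. The three-point identity is
\begin{equation*}
\langle \nabla\psi(x) - \nabla\psi(y), z - x \rangle = D_{\psi}(z, y) - D_{\psi}(z, x) - D_{\psi}(x, y).
\end{equation*}
It follows by expanding the definition of $D_{\psi}$, and it is the only algebraic tool needed.

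First, I split the instantaneous regret as
\begin{equation*}
\langle \ell_t, w_t - u \rangle = \langle \ell_t, w'_{t+1} - u \rangle + \langle \ell_t - m_t, w_t - w'_{t+1} \rangle + \langle m_t, w_t - w'_{t+1} \rangle .
\end{equation*}
The middle term already appears in the claimed bound, so only the first and third terms need work.

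For the first term, optimality of $w'_{t+1}$ over the convex set $\mathcal{K}$ gives $\langle \ell_t + \nabla\psi(w'_{t+1}) - \nabla\psi(w'_t), u - w'_{t+1} \rangle \ge 0$ for every $u \in \mathcal{K}$. Rearranging and applying the three-point identity with $x = w'_{t+1}$, $y = w'_t$, $z = u$ gives
\begin{equation*}
\langle \ell_t, w'_{t+1} - u \rangle \le D_{\psi}(u, w'_t) - D_{\psi}(u, w'_{t+1}) - D_{\psi}(w'_{t+1}, w'_t).
\end{equation*}
For the third term, optimality of $w_t$, tested against the feasible point $w'_{t+1} \in \mathcal{K}$, gives $\langle m_t + \nabla\psi(w_t) - \nabla\psi(w'_t), w'_{t+1} - w_t \rangle \ge 0$. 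The three-point identity with $x = w_t$, $y = w'_t$, $z = w'_{t+1}$ then yields
\begin{equation*}
\langle m_t, w_t - w'_{t+1} \rangle \le D_{\psi}(w'_{t+1}, w'_t) - D_{\psi}(w'_{t+1}, w_t) - D_{\psi}(w_t, w'_t).
\end{equation*}
Adding the two inequalities, the $D_{\psi}(w'_{t+1}, w'_t)$ terms cancel, which gives exactly the claimed bound.

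The only delicate point is justifying the gradient-based optimality conditions. For the negative-entropy regularizer, $\nabla\psi$ blows up at the boundary of the positive orthant. In our application this is harmless: $\Omega_t$ forces every coordinate to be at least $\varepsilon > 0$, so all iterates lie where $\psi$ is differentiable. For a general convex $\psi_t$, I would state the lemma with $\nabla\psi$ understood at points of differentiability, or replace it by a subgradient. The subgradient must be chosen consistently in both the optimality condition and the Bregman divergence; the three-point identity then still holds verbatim.
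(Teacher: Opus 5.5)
Your proposal is correct and is essentially the standard argument the paper relies on. The paper gives no proof of its own: it cites Lemma 15 of Chen et al.\ and remarks that the existing proof goes through unchanged. That proof is the same two first-order optimality conditions plus the three-point identity, with the $D_{\psi}(w'_{t+1}, w'_t)$ terms cancelling. Your version also makes the paper's relaxation visible: you test the optimality conditions only at $u$ and $w'_{t+1}$, both in $\mathcal{K}$, and $w'_t$ enters only as the reference point of the divergences. So the argument never needs $w'_t \in \mathcal{K}$.
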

\begin{remark}
    \lemref{lem:OMD_bound} relaxes the condition of $w'$ from the original version.
    Specifically, the original version assumes $w' \in \mathcal{K}$.
    We can prove the relaxed version without any modification of the existing proof.
    This relaxation is essential not only our analysis but also the existing proof of Theorem 8 of \citet{chen21impossible}.
\end{remark}

\begin{lemma}[Lemma 7 of \citet{luo2015achieving}]\label{lem:decomposition}
For any sequence of non-negative numbers $v_1, \dots, v_T$ with $v_0=v_{T+1}=0$, there exists a collection of intervals $\mathcal{U} = \{U_k\}_{k=1}^N$ and positive weights $\{a_k\}_{k=1}^N$ such that
\begin{equation}
    v_t = \sum_{k=1}^N a_k \ind[t \in U_k], \quad \forall t \in [T],
\end{equation}
and $\sum_{k=1}^N a_k = \sum_{t=1}^{T+1} [v_t - v_{t-1}]_+$,
where $[x]_+ = \max(x, 0)$.
\end{lemma}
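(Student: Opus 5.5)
The plan is a direct constructive proof that sweeps $t$ from $1$ to $T+1$ and maintains a stack of \emph{open} intervals, each carrying a positive weight. The invariant is that after processing index $t$, the weights of the open intervals sum to $v_t$, and every open interval contains $t$. Each interval is contiguous, of the form $\{s,\dots,e\}$: its start $s$ is fixed when it is opened, and its end is set to $e=t-1$ when it is closed at index $t$.

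The sweep works as follows. At $t=0$ the stack is empty, consistent with $v_0=0$. At each index $t\in[T+1]$ we compare $v_t$ with $v_{t-1}$.
\begin{itemize}
\item If $v_t>v_{t-1}$, we push a new interval starting at $t$ with weight $v_t-v_{t-1}>0$.
\item If $v_t<v_{t-1}$, we must remove weight $d=v_{t-1}-v_t$. We pop intervals from the top of the stack (last in, first out) and close them at $t-1$ until the total closed weight reaches $d$.
\item If the last popped interval has weight $a$ exceeding the remaining amount $d'$, we split it. One copy, with weight $d'$, is closed at $t-1$. The other copy, with weight $a-d'>0$ and the same start, stays on the stack.
\item If $v_t=v_{t-1}$, nothing changes.
\end{itemize}
The stack always holds total weight $v_{t-1}\ge d$, so there is always enough weight to remove. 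Hence the invariant persists, and all weights stay positive.

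Correctness follows from the invariant. For a fixed $t\in[T]$, the intervals $U_k$ that contain $t$ are exactly those open after processing $t$. This holds because intervals are contiguous, are opened at their start, and are closed only strictly after their last index. Therefore $\sum_k a_k\ind[t\in U_k]=v_t$. Since $v_{T+1}=0$, the stack is empty after processing index $T+1$, so every interval is closed and lies within $[T]$. Weight is created only at increases, in the amount $[v_t-v_{t-1}]_+$. Splitting only divides existing weight and never adds to it. So the total weight of the final collection is $\sum_{k}a_k=\sum_{t=1}^{T+1}[v_t-v_{t-1}]_+$. Finally, each index pushes at most one interval and splits at most one, so $N\le 2(T+1)$ is finite.

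The only delicate step is the bookkeeping for the split in the decrease case. We need the split piece that remains open to keep the same start, so that contiguity and the membership characterization are preserved. We also need it to be a genuinely new interval in the collection, so that the two pieces count separately toward the total weight. Beyond this, the argument is routine. Any closing order, not only LIFO, would also work; LIFO merely yields the natural nested structure.
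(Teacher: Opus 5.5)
Your proof is correct. The invariant is maintained at every index because $d=v_{t-1}-v_t\le v_{t-1}$ by non-negativity. Since $v_{T+1}=0$, the stack is empty after the last step, so every interval lies in $[T]$. Weight is created only at up-steps and is conserved by splits, which gives $\sum_k a_k=\sum_{t=1}^{T+1}[v_t-v_{t-1}]_+$.

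The paper does not prove this lemma; it imports it verbatim from Luo and Schapire. So there is no in-paper argument to compare against, and your sweep serves as a valid self-contained proof. It also gives exactly the properties the path-length proof relies on: contiguous intervals inside $[T]$, and the identity $u_t(i)-u_{t-1}(i)=\sum_{k:s_{k,i}=t}a_{k,i}-\sum_{k:e_{k,i}=t-1}a_{k,i}$.

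Your LIFO rule makes the construction coincide with the layer-cake (level-set) decomposition. Each stack element occupies a height band. Popping from the top removes the bands above the new level $v_t$. A final piece with band $(\alpha,\beta]$ is then, for every $\theta\in(\alpha,\beta]$, a maximal run of $\{t : v_t\ge\theta\}$. From that viewpoint the weight identity is one line: because $v_0=0$, the number of runs at level $\theta$ equals the number of up-crossings $\#\{t : v_{t-1}<\theta\le v_t\}$, and integrating over $\theta$ gives $\sum_t[v_t-v_{t-1}]_+$. The level-set view is shorter. Your sweep is more explicit and shows the laminar structure directly.

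One phrasing point needs tightening. An element open after processing $t$ may later be split, so the final intervals containing $t$ are not literally ``those open after processing $t$.'' They are the pieces descending from those elements. Splitting preserves the start and the total weight, and every descendant closes at an index strictly greater than $t$. Hence $\sum_k a_k\ind[t\in U_k]=v_t$ still follows, and you should state it in that form.
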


\subsection{Proof of \lemref{lem:general_bound}}

\begin{proof}[Proof of \lemref{lem:general_bound}]
    We fix $j^* \in \bigcap_{t = t_1}^{t_2} \mathcal{J}_t$ arbitrarily.
    We define $\tilde{u}$ and $\bar{u}$ as follows:
    \begin{align*}
        \tilde{u}(i,j) &= \left\{
            \begin{array}{ll}
                u(i) & \mathrm{if}\ j = j^* \\
                0 & \mathrm{otherwise}
            \end{array}
        \right.\quad\mathrm{and} \\
        \bar{u}(i,j) &= \left\{
            \begin{array}{ll}
                (1 - \frac{1}{T^3})u(i) + \varepsilon & \mathrm{if}\ j = j^* \\
                \varepsilon & \mathrm{otherwise}
            \end{array}
        \right.
    \end{align*}
    Note that $\bar{u} \in \Omega_t$ for all $t$.

    By a basic calculation, we have $\langle \hat{\ell}_t, w_t \rangle = \langle \ell_t, p_t \rangle$ for all $t$.
    Thus, we have
    \begin{align*}
        \sum_{t = t_1}^{t_2} \langle \ell_t, p_t - u \rangle
        &= \langle \tilde{\ell}_t, w_t - \tilde{u} \rangle \\
        &= \sum_{t = t_1}^{t_2} \langle \tilde{\ell}_t, w_t - \bar{u} \rangle 
        + \sum_{t = t_1}^{t_2} \langle \tilde{\ell}_t, \tilde{u} - \bar{u} \rangle \\
        &\le \sum_{t = t_1}^{t_2} \langle \tilde{\ell}_t, w_t - \bar{u} \rangle 
        + \frac{2(t_2 - t_1)}{T^3} \|\ell_t\|_\infty.
    \end{align*}
    We will consider the first term in the right-hand side.
    
    Using \lemref{lem:OMD_bound}, we have
    \begin{align*}
        \sum_{t = t_1}^{t_2}  \langle \tilde{\ell}_t, w_t - \bar{u} \rangle
        &\le \sum_{t = t_1}^{t_2}  (D_{\psi}(\bar{u}, w'_t) - D_{\psi}(\bar{u}, w'_{t+1})) \\
        &\quad + \sum_{t = t_1}^{t_2} A_t,
    \end{align*}
    where $A_t = \langle \tilde{\ell}_t - \tilde{m}_t + a_t, w_t - w'_{t+1} \rangle - D_{\psi}(w'_{t+1}, w_t) - \langle a_t, w_t - \bar{u} \rangle$.
    We will bound the terms on right-hand side separately.

    We consider the first term.
    Recall that $w'_t(i,j) \ge \varepsilon$ and $q_t(j) = \sum_{i \in [K]} w'_t(i,j)$.
    Then, we have
    \begin{align*}
        &\quad \sum_{t = t_1}^{t_2} (D_{\psi}(\bar{u}, w'_t) - D_{\psi}(\bar{u}, w'_{t+1})) \\
        &= D_{\psi}(\bar{u}, w'_{t_1}) - D_{\psi}(\bar{u}, w'_{t_2+1}) \\
        &= \sum_{i \in [K],\, j \in [M]} \frac{\bar{u}(i,j)}{\eta(j)} \log\frac{w'_{t_2+1}(i,j)}{w'_{t_1}(i,j)}
        + \sum_{j \in [M]} \frac{q_{t_1}(j) - q_{t_2+1}(j)}{\eta(j)} \\
        &\le \frac{4\log(KMT)}{\eta(j^*)}
        + \sum_{j \in [M]} \frac{4\log(KMT)}{\eta(j)MT^3}
        + \sum_{j \in [M]} \frac{q_{t_1}(j) - q_{t_2+1}(j)}{\eta(j)} \\
        &\le \frac{4\log(KMT)}{\eta(j^*)}
        + \frac{32\log(KMT)}{T^2}
        + \sum_{j \in [M]} \frac{q_{t_1}(j) - q_{t_2+1}(j)}{\eta(j)},
    \end{align*}
    where the second inequality is derived from $\sum_{j \in [M]} 1/\eta(j) \le 8MT$.

    Next, we consider the second term.
    We fix $t \in [T]$ arbitrarily.
    Let $v_t = \tilde{\ell}_t - \tilde{m}_t + a_t$ and $w^* \in \argmax_{w \in \R{K \times M}_+} \langle v_t, w_t - w \rangle - D_{\psi}(w, w_t)$,
    where $\R{d}_+ = \{ x \in \R{K \times M} \mid \forall i \in [d],\ x(i) \ge 0 \}$.
    Then, we have
    \begin{align*}
        &\quad A_t + \langle a_t, w_t - \bar{u} \rangle \\
        &= \langle v_t, w_t - w'_{t+1} \rangle - D_{\psi}(w'_{t+1}, w_t) \\
        &\le \langle v_t, w_t - w^* \rangle - D_{\psi}(w^*, w_t) \\
        &= \langle \nabla\psi(w_t) - \nabla\psi(w^*), w_t - w^* \rangle - D_{\psi}(w^*, w_t) \\
        &= \sum_{i \in [K],\, j \in [M]} \frac{w_t(i,j)}{\eta(j)} \left( \eta(j) v_t(i,j) - 1 + e^{-\eta(j) v_t(i,j)} \right).
    \end{align*}
    We will fix $(i,j)$ arbitrarily and discuss two cases based on the magnitude of the update.

    \paragraph{Case 1: small learning rate ($32\eta(j)|r_t(i)| \le 1$)}
    In this regime, the existing analysis holds.
    Specifically, we have
    \begin{align*}
        \eta(i)|v_t(i)| \le (1 + 32\eta(i)|r_t(i)|)\eta(i)|r_t(i)| \le 1.
    \end{align*}
    Therefore, since $e^{-x} - 1 + x \le x^2$ for $x \ge -1$, we obtain
    \begin{align*}
        &\quad \frac{w_t(i,j)}{\eta(j)} \left( \eta(j) v_t(i,j) - 1 + e^{-\eta(j) v_t(i,j)} \right) \\
        &\le \eta(j) w_t(i,j) v_t(i,j)^2
        \le 4 \eta(j) w_t(i,j) r_t(i)^2.
    \end{align*}

    \paragraph{Case 2: large learning rate ($32\eta(j)|r_t(i)| > 1$)}
    In this case, the condition implies
    \begin{align*}
        v_t(i,j) \ge (32\eta(j)|r_t(i)| - 1)|r_t(i)| > 0.
    \end{align*}
    Thus, we have
    \begin{align*}
        &\quad \frac{w_t(i,j)}{\eta(j)} \left( \eta(j) v_t(i,j) - 1 + e^{-\eta(j) v_t(i,j)} \right) 
        \le \frac{w_t(i,j)}{\eta(j)} \eta(j) v_t(i,j) \\
        &= 32 \eta(j) w_t(i,j) r_t(i)^2 + w_t(i,j) r_t(i).
    \end{align*}

    Summing over all $(i,j)$ and combining these two cases, we obtain
    \begin{align*}
        A_t
        &\le \sum_{i \in [K],\, j \in [M]} 4 \eta(j) w_t(i,j) r_t(i)^2
        - \langle a_t, w_t - \bar{u} \rangle \\
        &\quad + \sum_{i \in [K],\, j \in [M]} \ind[32\eta(j)|r_t(i)| > 1] w_t(i,j) r_t(i) \\
        &\le \langle a_t, \bar{u} \rangle
        + \sum_{i \in [K],\, j \in [M]} \ind[32\eta(j)|r_t(i)| > 1] w_t(i,j) r_t(i) \\
        &\le 32\eta(j^*) \sum_{i \in [K]} u(i)r_t(i)^2 + \frac{32\max_{t \in [T]} \|\ell_t - m_t\|_\infty^2}{MT^3} \sum_{j \in [M]} \eta(j) \\
        &\quad + \sum_{i \in [K],\, j \in [M]} \ind[32\eta(j)|r_t(i)| > 1] w_t(i,j) r_t(i) \\
        &\le 32\eta(j^*) \sum_{i \in [K]} u(i)r_t(i)^2 + \frac{256\max_{t \in [T]} \|\ell_t - m_t\|_\infty^2}{T^2}  \\
        &\quad + \sum_{i \in [K],\, j \in [M]} \ind[32\eta(j)|r_t(i)| > 1] w_t(i,j) r_t(i).
    \end{align*}
\end{proof}

\subsection{Switching Regret Bound}

\subsubsection{Proof of \thmref{thm:switching_regret}}
We first decomposes the switching regret into the sum over intervals $\mathcal{I}_s = [t_s, t_{s+1}-1]$.
Applying \lemref{lem:general_bound} to each interval $s$ with a specific choice of $j^*_s$, we have
\begin{align*}
    &\quad \sum_{s \in [S]} \sum_{t \in \mathcal{I}_s} \langle \ell_t, p_t - u_s \rangle \\
    &\le \sum_{s \in [S]} \left( \frac{4 \log(KMT)}{\eta(j^*_s)} + 32\eta(j^*_s) Q_s \right) \nonumber \\
    &\quad + \sum_{s \in [S]} \sum_{j \in [M]} \frac{q_{t_s}(j) - q_{t_{s+1}}(j)}{\eta(j)}
    + \sum_{j \in [M]} L_{T+1}(j)
    + \sum_{s=1}^S \xi(\mathcal{I}_s),
\end{align*}
where $\xi(\mathcal{I}) \in O(T^{-1})$ is a negligible error term.

For the second term in the right-hand side, we have
\begin{align*}
    \sum_{s \in [S]} \sum_{j \in [M]} \frac{q_{t_s}(j) - q_{t_{s+1}}(j)}{\eta(j)} &= \sum_{j \in [M]} \frac{1}{\eta(j)} (q_{t_1}(j) - q_{t_{S+1}}(j)) \nonumber \\
    &\le \sum_{j \in [M]} \frac{q_{1}(j)}{\eta(j)}.
\end{align*}
Using the initialization $w'_1(i,j) \propto \eta(j)^2$, we have $q_1(j) = \frac{K \eta(j)^2}{\sum_{j'} K \eta(j')^2}$.
Thus, we obtain
\begin{align*}
    \sum_{j \in [M]} \frac{q_{1}(j)}{\eta(j)}
    = \frac{\sum_{j \in [M]} \eta(j)}{\sum_{j \in [M]} \eta(j)^2}
    = O(T^{-1}).
\end{align*}

Then, we discuss the third term.
We fix $j \in [M]$ arbitrarily.
Let $t(j)$ be the last round such that $j \in \mathcal{J}_t$.
Then, we have
\begin{align*}
    L_{T+1}(j)
    &\le L_{t(j)}(j) + \sum_{i \in [K]} w_{t(j)}(i,j) \|\ell_t - m_t\|_\infty + \sum_{t = t(j)+1}^T \varepsilon \\
    &\le 1 + 1 + O(1/T^2)
    = O(1).
\end{align*}
Thus, we have $\sum_{j \in [M]} L_{T+1}(j) = O(\log T)$.

Then, we consider the optimal choice of learning rates.
We define the objective function for interval $s$ as $f_s(\eta) = \frac{4 \log(KMT)}{\eta} + 32 \eta Q_s$.
The optimal unconstrained learning rate is $\eta^*_s = \sqrt{\frac{\log(KMT)}{8 Q_s}}$.
We analyze the choice of $j^*_k$ from the discrete grid $\mathcal{H} = \{ \eta(j) \mid j \in [M] \}$.
Since $\|\ell_t - m_t\|_\infty \le 1$ and the grid has a geometric ratio of 2 (i.e., $\eta(j+1) / \eta(j) = 2$),
there exists an index $j^*_s$ such that
\begin{align*}
    \eta(j^*_s) \le \min\left( \frac{1}{32}, \eta^*_s \right) \le 2\eta(j^*_s).
\end{align*}
Note that $j^*_s \in \mathcal{J}_t$ for all $t \in [T]$.
Substituting this learning rate, we have
\begin{align*}
    \frac{4 \log(KMT)}{\eta(j^*_s)} + 32 \eta(j^*_s) Q_s
    &\le O\left( \sqrt{\log(KT) Q_s} + \log(KT) \right).
\end{align*}
\qed

\subsubsection{Proof of \thmref{thm:switching_regret_benign}}
The difference from the proof of \thmref{thm:switching_regret} is only in the choice of learning rates.
We fix $s \in [S]$ arbitrarily.
Choosing $\eta(j^*_s)$ for our regret bound, we obtain
\begin{align*}
    \frac{4 \log(KMT)}{\eta(j^*_s)} + 32 \eta(j^*_s) Q_s
    &\le O\left( \sqrt{\log(KT) Q_s} \right).
\end{align*}
\qed

\subsection{Path-Length Regret Bound}

The following theorem generalizes the path-length regret bounds to both worst-case and benign scenarios.
Note that \thmref{thm:path_length} can be recovered as a direct consequence of this result.

\begin{theorem}
Suppose that $\|\ell_t - m_t\|_\infty \le 1$ for all $t \in [T]$.
Let $u_{1:T} = (u_1, \dots, u_T)$ be any sequence of comparators where $u_t \in \Delta_K$.
Let $V(u_{1:T})$ denote the path-length of the comparator sequence.
Let $Q(u_{1:T})$ denote the cumulative squared prediction error of the comparator sequence.
Then, \algoref{alg:proposed} ensures the following regret bound:
\begin{equation*}
    R_T(u_{1:T}) \le O\left( \sqrt{V(u_{1:T}) Q(u_{1:T}) \log(K T)} + V(u_{1:T}) \log(K T) \right).
\end{equation*}
More specifically, the regret satisfies:
\begin{align*}
    R_T(u_{1:T}) \le \inf_{j^* \in \mathcal{J}^*} \left\{ \frac{V(u_{1:T}) \log(1/\varepsilon)}{\eta(j^*)} + 32 \eta(j^*) Q(u_{1:T}) \right\} + O(\log T),
\end{align*}
where $\mathcal{J}^* = \bigcap_{t=1}^T \mathcal{J}_t$ is the set of learning rates valid for all rounds.
\end{theorem}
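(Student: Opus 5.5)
We plan to prove the refined inequality for a fixed $j^* \in \mathcal{J}^*$ and then tune $\eta(j^*)$ over the grid to obtain the first bound. The key difference from the switching-regret analysis is that the comparator now changes every round. So instead of the interval-wise \lemref{lem:general_bound}, we will rerun its proof round by round with a time-varying virtual comparator $\bar{u}_t$. This $\bar{u}_t$ is defined from $u_t$ exactly as $\bar{u}$ was defined from $u$: mass $(1-T^{-3})u_t(i)+\varepsilon$ on layer $j^*$ and $\varepsilon$ elsewhere. It lies in $\Omega_t$ for every $t$ because $j^* \in \mathcal{J}_t$ throughout.

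For each round we will apply \lemref{lem:OMD_bound} with comparator $\bar{u}_t$. The stability part $A_t$ is handled verbatim by the two-case argument in the proof of \lemref{lem:general_bound}. It gives
\[
A_t \le 32\eta(j^*)\sum_i u_t(i) r_t(i)^2 + \sum_j \Delta L_t(j) + O(T^{-2}).
\]
Summing over $t$ produces $32\eta(j^*)Q(u_{1:T})$ plus $\sum_j L_{T+1}(j)$. As in the proof of \thmref{thm:switching_regret}, the latter is $O(M)=O(\log T)$: each layer accumulates at most $1$ while active, one further step of size at most $1$ at exclusion, and only $\varepsilon$-weight contributions afterwards.

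The main obstacle is the telescoping term $\sum_t \bigl(D_\psi(\bar{u}_t,w'_t)-D_\psi(\bar{u}_t,w'_{t+1})\bigr)$, which no longer telescopes. We will rewrite it as
\[
D_\psi(\bar{u}_1,w'_1) + \sum_{t\ge 2}\bigl(D_\psi(\bar{u}_t,w'_t)-D_\psi(\bar{u}_{t-1},w'_t)\bigr) - D_\psi(\bar{u}_T,w'_{T+1}).
\]
Expanding the unnormalized entropy Bregman divergence, each difference equals $\psi(\bar{u}_t)-\psi(\bar{u}_{t-1}) - \langle \nabla\psi(w'_t), \bar{u}_t-\bar{u}_{t-1}\rangle$. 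The $\psi$ differences telescope, and their boundary values are $O(\log(1/\varepsilon)/\eta(j^*))$ plus $\varepsilon$-level terms on the other layers. The remaining inner product is
\[
\sum_i \frac{1}{\eta(j^*)}\bigl(\bar{u}_t(i,j^*)-\bar{u}_{t-1}(i,j^*)\bigr)\bigl(1+\log w'_t(i,j^*)\bigr).
\]
Since $w'_t(i,j^*)\in[\varepsilon,1]$, we have $|\log w'_t| \le \log(1/\varepsilon)$. The difference $\bar{u}_t-\bar{u}_{t-1}$ has $\ell_1$ norm at most $\|u_t-u_{t-1}\|_1$ and sums to zero within layer $j^*$, so the constant $1$ cancels. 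Hence this term is at most $\|u_t-u_{t-1}\|_1\log(1/\varepsilon)/\eta(j^*)$. The convention $u_0=0$ makes the initial divergence $D_\psi(\bar{u}_1,w'_1)$ fit the same pattern: it costs $\log(1/\varepsilon)/\eta(j^*)$ plus the $q_1(j)/\eta(j)$ marginal terms. These marginal terms are $O(T^{-1})$ by the $\eta(j)^2$ initialization computation already done for \thmref{thm:switching_regret}.

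Nonpositivity of the final divergence lets us drop it. Because $V\ge 1$ under the convention $u_0=0$, this yields the displayed bound
\[
\frac{V\log(1/\varepsilon)}{\eta(j^*)} + 32\eta(j^*)Q + O(\log T).
\]
Finally, $\log(1/\varepsilon)=O(\log(KT))$. As in the worst-case theorem, the grid contains some $\eta(j^*)\le 1/32$ within a factor $2$ of $\min\bigl(1/32,\sqrt{V\log(KT)/Q}\bigr)$. No penalty can ever accrue for such a $j^*$, since $32\eta(j^*)|r_t(i)|\le 1$, so $j^*\in\mathcal{J}^*$. Substituting this choice gives $O\bigl(\sqrt{VQ\log(KT)}+V\log(KT)\bigr)$, and the $O(\log T)$ term is absorbed since $V\ge1$.
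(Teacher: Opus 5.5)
Your argument is correct, but it takes a different route from the paper's.

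\textbf{The paper's route.} It first applies \lemref{lem:decomposition} to each coordinate sequence $u_{1:T}(i)$, writing it as a positively weighted sum of interval indicators. It then rewrites the dynamic regret as a weighted sum of static regrets against $e_{i,j^*}$ on those intervals. It applies the per-interval inequality from the proof of \lemref{lem:general_bound} to each interval and regroups by time:
\begin{itemize}
\item a coefficient-counting argument ($C_1=1$, $C_t=0$ for $2\le t\le T$, $C_{T+1}=-1$) collapses the linear potential terms to $W(w'_1)-W(w'_{T+1})$;
\item the logarithmic terms become $\frac{1}{\eta(j^*)}\sum_{t}\sum_i (u_{t-1}(i)-u_t(i))\log w'_t(i,j^*)$;
\item the penalties collapse to $\sum_j L_{T+1}(j)$ because $\sum_i u_t(i)=1$.
\end{itemize}

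\textbf{Your route.} You apply \lemref{lem:OMD_bound} round by round against a feasible comparator $\bar u_t\in\Omega_t$. You then use the standard dynamic-regret rearrangement into differences $D_\psi(\bar u_t,w'_t)-D_\psi(\bar u_{t-1},w'_t)$. Expanded fully, your telescoped sum is exactly the paper's $\mathcal{S}_{\text{log}}+\mathcal{S}_{\text{lin}}$, so the two arguments compute the same quantity.

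\textbf{Comparison.} Your version is more self-contained:
\begin{itemize}
\item it needs no decomposition lemma;
\item the linear part of $\nabla\psi$ cancels automatically because $\bar u_t-\bar u_{t-1}$ sums to zero on layer $j^*$;
\item feasibility of the comparator is explicit, whereas the paper compares against $e_{i,j^*}\notin\Omega_t$ and leaves the $\varepsilon$-mixing implicit.
\end{itemize}
The paper's route reuses the interval machinery of the switching bound. It also makes the general reduction from path-length regret to interval regret transparent.

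\textbf{Three small bookkeeping points.} None of these affects the $O(\cdot)$ bound.
\begin{enumerate}
\item To get coefficient exactly $1$ on $V(u_{1:T})\log(1/\varepsilon)/\eta(j^*)$, do not bound the boundary values of $\psi$ by $O(\log(1/\varepsilon)/\eta(j^*))$. Instead, note that $\psi(\bar u_1)$ cancels exactly against the copy inside $D_\psi(\bar u_1,w'_1)$, and that $\psi(\bar u_T)\le 0$. Bounding them separately leaves an extra $\log K/\eta(j^*)$, which $V\ge 1$ absorbs only at the cost of a constant.
\item $w'_1\notin\Omega_1$ in general. Since $w'_1(i,j)\propto\eta(j)^2$, it falls below $\varepsilon$ once $\eta(j)$ is below order $1/\sqrt{MT}$. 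So the initial divergence costs $\log(1/\varepsilon)/\eta(j^*)$ only up to a constant factor for very small $\eta(j^*)$. The paper's proof has the same looseness. For your tuned choice, $\eta(j^*)=\Omega(\sqrt{\log(KT)/T})$, and one checks that $w'_1(i,j^*)\ge\varepsilon$ anyway.
\item $\Delta L_t(j)$ can be negative, so an excluded layer can in principle re-enter $\mathcal{J}_t$. The bound $L_{T+1}(j)\le 2+O(T^{-2})$ should therefore be anchored at the last round in which $j$ is active, as the paper does, rather than at the first exclusion.
\end{enumerate}
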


\subsubsection{Proof of \thmref{thm:path_length}}

The proof relies on decomposing the comparator sequence into weighted intervals and analyzing the regret using the properties of OOMD.

\paragraph{Decomposition of the Comparator Sequence}

We first decompose the sequence $u_{1:T}$ into a linear combination of static interval comparators.
We utilize \lemref{lem:decomposition}.

Applying this lemma to the weight sequence $u_{1:T}(i)$ for each expert $i \in [K]$:
\begin{equation*}
    u_t(i) = \sum_{k=1}^{N_i} a_{k,i} \ind[t \in U_{k,i}],
\end{equation*}
where $U_{k,i} = \{s_{k,i}, \dots, e_{k,i}\}$. Summing the weights yields the path-length relationship:
\begin{align*}
    \sum_{i=1}^K \sum_{k=1}^{N_i} a_{k,i}
    &= \sum_{i=1}^K \sum_{t=1}^{T+1} [u_t(i) - u_{t-1}(i)]_+ \\
    &= \frac{1}{2} \sum_{t=1}^{T+1} \|u_t - u_{t-1}\|_1
    = \frac{1}{2} V(u_{1:T}).
\end{align*}

\paragraph{Regret Decomposition}
Fix an optimal learning rate index $j^* \in \bigcap_t \mathcal{J}_t$. We implicitly compare against the extended comparator sequence $\tilde{u}_t$ concentrated on layer $j^*$. Using the decomposition, we rewrite the total regret:
\begin{align}\label{eq:regret_decomp}
    R_T(u_{1:T}) &= \sum_{t=1}^T \langle \ell_t, p_t \rangle - \sum_{t=1}^T \sum_{i=1}^K u_t(i) \ell_t(i) \nonumber \\
    &= \sum_{i=1}^K \sum_{k=1}^{N_i} a_{k,i} \sum_{t \in U_{k,i}} (\langle \ell_t, p_t \rangle - \ell_t(i)) \nonumber \\
    &= \sum_{i=1}^K \sum_{k=1}^{N_i} a_{k,i} \sum_{t=s_{k,i}}^{e_{k,i}} \langle \tilde{\ell}_t, w_t - e_{i, j^*} \rangle.
\end{align}
This effectively decomposes the regret into a weighted sum of regrets against static experts $e_{i, j^*}$ on intervals $U_{k,i}$.

\paragraph{Potential Analysis and Telescoping Sums}
Recall that $r_t = \ell_t - m_t$ and $\Delta L_t(j) = \sum_{i \in [K]} \ind[32\eta(j)|r_t(i)| > 1] w_t(i,j) r_t(i)$.
For any interval $\{s, \dots, e\}$ and comparator $v$, we use \lemref{lem:general_bound}.
The lemma ensures that for any valid learning rate $j^* \in \bigcap_{t=s}^e \mathcal{J}_t$, the regret is bounded by:
\begin{align*}
    \sum_{t=s}^e \langle \tilde{\ell}_t, w_t - v \rangle
    &\le D_\psi(v, w'_s) - D_\psi(v, w'_{e+1}) + \sum_{t=s}^e 32 \eta(j^*) r_t(i)^2 \\
    &+ \sum_{t=s}^{e} \sum_{j \in [M]} \Delta L_t(j)
    + O(1/T^2).
\end{align*}

Substituting $v = e_{i, j^*}$ and using the property of the potential function $\psi(w) = \sum_{i,j} \frac{w(i,j)}{\eta(j)} \log w(i,j)$, the Bregman divergence decomposes into a logarithmic term and a linear term:
\begin{equation*}
    D_\psi(e_{i, j^*}, w) = \frac{1}{\eta(j^*)} \log \frac{1}{w(i, j^*)} + W(w) - \frac{1}{\eta(j^*)},
\end{equation*}
where $W(w) = \sum_{x,y} \frac{w(x,y)}{\eta(y)}$. Thus, the term $D_\psi(v, w'_s) - D_\psi(v, w'_{e+1})$ becomes:
\begin{equation*}
    \frac{1}{\eta(j^*)} \log \frac{w'_{e+1}(i, j^*)}{w'_s(i, j^*)} + (W(w'_s) - W(w'_{e+1})).
\end{equation*}
Summing this bound over all experts $i$ and all decomposition intervals $k=1,\dots,N_i$ with weights $a_{k,i}$, we analyze the two components separately.

First, consider the sum of linear potential terms, denoted by $\mathcal{S}_{\text{lin}}$:
\begin{equation*}
    \mathcal{S}_{\text{lin}} = \sum_{i=1}^K \sum_{k=1}^{N_i} a_{k,i} (W(w'_{s_{k,i}}) - W(w'_{e_{k,i}+1})).
\end{equation*}
We rearrange this sum by grouping terms according to the time index $t$. The term $W(w'_t)$ appears with a coefficient $+a_{k,i}$ if an interval starts at $t$ (i.e., $s_{k,i}=t$) and $-a_{k,i}$ if an interval ends at $t-1$ (i.e., $e_{k,i}=t-1$). Let $C_t$ denote the aggregate coefficient for $W(w'_t)$. Then:
\begin{equation*}
    C_t = \sum_{i=1}^K \left( \sum_{k: s_{k,i}=t} a_{k,i} - \sum_{k: e_{k,i}=t-1} a_{k,i} \right).
\end{equation*}
From the decomposition property $u_t(i) = \sum_k a_{k,i} \ind[t \in U_{k,i}]$, we have the relation $u_t(i) - u_{t-1}(i) = \sum_{k: s_{k,i}=t} a_{k,i} - \sum_{k: e_{k,i}=t-1} a_{k,i}$. Summing over all experts $i$, we get:
\begin{equation*}
    C_t = \sum_{i=1}^K (u_t(i) - u_{t-1}(i)) = \sum_{i=1}^K u_t(i) - \sum_{i=1}^K u_{t-1}(i).
\end{equation*}
Since $u_t \in \Delta_K$ is a probability distribution, $\sum_{i=1}^K u_t(i) = 1$ for all $1 \le t \le T$. With the convention $u_0 = u_{T+1} = \mathbf{0}_K$, we have $\sum_{i=1}^K u_0(i) = 0$ and $\sum_{i=1}^K u_{T+1}(i) = 0$. Therefore:
\begin{align*}
    C_1 &= 1 - 0 = 1, \\
    C_t &= 1 - 1 = 0 \quad \text{for } 2 \le t \le T, \\
    C_{T+1} &= 0 - 1 = -1.
\end{align*}
Consequently, the intermediate terms cancel out, yielding
\begin{equation*}
    \mathcal{S}_{\text{lin}} = 1 \cdot W(w'_1) - 1 \cdot W(w'_{T+1}) \le W(w'_1).
\end{equation*}
The initial potential $W(w'_1) = \sum_{i,j} \frac{w'_1(i,j)}{\eta(j)}$ is $O(1)$ by the initialization of $w'_1$ (as discussed in our proof of \thmref{thm:switching_regret}).

Next, we analyze the sum of logarithmic terms, denoted by $\mathcal{S}_{\text{log}}$:
\begin{equation*}
    \mathcal{S}_{\text{log}} = \frac{1}{\eta(j^*)} \sum_{i=1}^K \sum_{k=1}^{N_i} a_{k,i} \log \frac{w'_{e_{k,i}+1}(i, j^*)}{w'_{s_{k,i}}(i, j^*)}.
\end{equation*}
Similarly rearranging the sum by time $t$, the coefficient for $\log w'_t(i, j^*)$ is
\begin{equation*}
    \sum_{k: e_{k,i}=t-1} a_{k,i} - \sum_{k: s_{k,i}=t} a_{k,i} = -(u_t(i) - u_{t-1}(i)).
\end{equation*}
Thus, we have
\begin{equation*}
    \mathcal{S}_{\text{log}} = \frac{1}{\eta(j^*)} \sum_{t=1}^{T+1} \sum_{i=1}^K (u_{t-1}(i) - u_t(i)) \log w'_t(i, j^*).
\end{equation*}
Since the weights are lower-bounded by $\varepsilon$, $|\log w'_t(i, j^*)| \le \log(1/\varepsilon)$. We bound the sum using the $L_1$ norm:
\begin{equation}
    \mathcal{S}_{\text{log}} \le \frac{\log(1/\varepsilon)}{\eta(j^*)} \sum_{t=1}^{T+1} \sum_{i=1}^K |u_{t-1}(i) - u_t(i)| = \frac{V(u_{1:T}) \log(1/\varepsilon)}{\eta(j^*)}.
\end{equation}

Then, we discuss the penalty due to choosing large learning rates:
\begin{equation*}
    \mathcal{S}_{pen}
    = \sum_{i=1}^K \sum_{k=1}^{N_i} a_{k,i} \left( \sum_{t=s_{k,i}}^{e_{k,i}} \sum_{j \in [M]} \Delta L_t(j) \right).
\end{equation*}
We swap the order of summation.
We sum over $j$ and $t$ first:
\begin{equation*}
    \mathcal{S}_{pen}
    = \sum_{j \in [M]} \sum_{t=1}^T \Delta L_t(j) \left( \sum_{i=1}^K \sum_{k=1}^{N_i} a_{k,i} \ind[t \in U_{k,i}] \right).
\end{equation*}
Recall that $u_t(i) = \sum_{k=1}^{N_i} a_{k,i} \ind[t \in U_{k,i}]$. Therefore, the term in the parenthesis is simply $\sum_{i=1}^K u_t(i)$.
Since $u_t$ is a valid probability distribution in $\Delta_K$, $\sum_{i=1}^K u_t(i) = 1$.
Therefore, we have
\begin{equation*}
    \mathcal{S}_{pen}
    = \sum_{j \in [M]} \sum_{t \in [T]} \Delta L_t(j) \cdot 1
    = \sum_{j \in [M]} L_{T+1}(j).
\end{equation*}
From the proof of \thmref{thm:switching_regret}, we can show that for any $j$, $L_{T+1}(j) = O(1)$.
Summing over $M$ layers, we obtain
\begin{equation*}
    \mathcal{S}_{pen} = O(M) = O(\log T).
\end{equation*}

\paragraph{Final Bound}
Combining the terms, we have
\begin{equation*}
    R_T(u_{1:T}) \le \frac{V(u_{1:T}) \log(1/\varepsilon)}{\eta(j^*)} + 32 \eta(j^*) Q(u_{1:T}) + O(\log T).
\end{equation*}
We select the optimal index $j^*$ from the grid $\mathcal{H} = \{ \eta(j) = \frac{2^j}{16T} \}$.
Let $\eta^* = \sqrt{\frac{V \log(1/\varepsilon)}{32 Q}}$.
Since the grid is geometric, there exists a valid $j^*$ such that $\eta(j^*) \approx \min\{1/32, \eta^*\}$.
Substituting this yields the bound:
\begin{equation*}
    R_T(u_{1:T}) = O\left( \sqrt{V(u_{1:T}) Q(u_{1:T}) \log(KT)} + V(u_{1:T}) \log(KT) \right).
\end{equation*}
This concludes the proof. \qed

\subsection{Proof of \thmref{thm:safeguard_necessity}}

We construct a specific adversarial scenario to demonstrate that without the safeguard mechanism, the algorithm suffers linear regret.
We consider a setting with $K=2$ experts:
Expert 1 (suboptimal) and Expert 2 (optimal).

\paragraph{Setup}
We define the sequence of loss vectors $\ell_t$ and prediction vectors $m_t$ for all rounds $t \in [T]$.
The loss vector is fixed as $\ell_t = (1, 0.5)$.
Expert 1 consistently incurs a loss of $1$, while Expert 2 incurs a loss of $0.5$.
Thus, Expert 2 is the optimal expert.
The prediction vector is fixed as $m_t = (0, 0.5)$.
This prediction is misleading: it incorrectly predicts zero loss for the suboptimal Expert 1 while correctly predicting the loss for Expert 2.

\paragraph{KKT Conditions}
The update rule for $w_t$ minimizes the linearized objective subject to the simplex constraint and the lower bound constraint $w_t(i,j) \ge \varepsilon$.
By the KKT conditions, the updated weights take the form
\begin{align*}
    w_t(i,j) = w'_t(i,j) \exp\left(\eta(j)(-m_t(i) + \lambda + \mu(i,j))\right),
\end{align*}
where $\lambda$ is the Lagrange multiplier for the sum-to-one constraint $\sum_{i,j} w_t(i,j) = 1$,
and $\mu(i,j) \ge 0$ is the multiplier for the lower bound constraint.
The complementary condition implies $\mu(i,j) = 0$ whenever $w_t(i,j) > \varepsilon$.

\paragraph{Upper Bound on $\lambda$}
We first establish that $\lambda$ must be small.
Consider the update for Expert 1. Since $m_t(1) = 0$, the weight update becomes
\begin{align*}
    w_t(1,j) = w'_t(1,j) \exp(\eta(j)(\lambda + \mu(1,j))).
\end{align*}
Since $w_t(1,j) \le 1$ holds for any $j$, we specifically look at the index $\hat{j}$ with the largest learning rate.
We have:
\begin{align*}
    w'_t(1,\hat{j}) \exp(\eta(\hat{j})\lambda) \le w_t(1,\hat{j}) \le 1.
\end{align*}
Taking the logarithm yields $\lambda \le \frac{1}{\eta(\hat{j})} \log \frac{1}{w'_t(1,\hat{j})}$.
Since $\eta(\hat{j}) = \Theta(T)$ and $w'_t(1,\hat{j}) \ge \varepsilon$, for sufficiently large $T$, $\lambda$ approaches zero from above.
In particular, we can strictly bound $\lambda < 0.5$.

\paragraph{Decrease in Weight for the Optimal Expert}
Next, we analyze the weight update for the optimal Expert 2.
Substituting $m_t(2) = 0.5$, we obtain
\begin{align*}
    w_t(2,j) = w'_t(2,j) \exp(\eta(j)(-0.5 + \lambda + \mu(2,j))).
\end{align*}
We consider two cases for the value of $w_t(2,j)$.

Case 1: $w_t(2,j) > \varepsilon$. Here, $\mu(2,j) = 0$.
Since $\lambda < 0.5$, the exponent $\eta(j)(-0.5 + \lambda)$ is strictly negative.
Thus, the exponential term is less than 1, implying $w_t(2,j) < w'_t(2,j)$.

Case 2: $w_t(2,j) = \varepsilon$.
Since Expert 2 is the optimal expert, its accumulated weight $w'_t(2,j)$ increases over time and satisfies $w'_t(2,j) > \varepsilon$ for all $t$.
Thus, $w_t(2,j) = \varepsilon < w'_t(2,j)$ holds.

In both cases, the optimistic update strictly decreases the weight of the optimal expert for all $j \in [M]$.

\paragraph{Mass Collapse at Large Learning Rates}
We now quantify this decrease for the largest learning rate $\hat{j}$.
Since $\eta(\hat{j}) = \Theta(T)$ and $\lambda \approx 0$, the exponent for Expert 2 is dominated by the negative term:
\begin{align*}
    \eta(\hat{j})(-0.5 + \lambda) = -\Theta(T).
\end{align*}
This large negative exponent drives the weight $w_t(2, \hat{j})$ to its lower bound $\varepsilon$.
Since $w'_1(2, \hat{j}) = \Omega(1)$ by the construction and $w'(2,j)$ increases over time,
we have $w'_t(2, \hat{j}) = \Omega(1)$.
Consequently, the mass reduction is significant:
\begin{align*}
    w'_t(2, \hat{j}) - w_t(2, \hat{j}) = \Omega(1) - \varepsilon = \Omega(1).
\end{align*}

\paragraph{Regret Analysis}
Finally, we use the conservation of total probability mass to derive the regret bound.
The sum of weights must equal 1 before and after the update:
\begin{align*}
    \sum_{j \in [M]} w_t(1,j) + \sum_{j \in [M]} w_t(2,j) = \sum_{j \in [M]} w'_t(1,j) + \sum_{j \in [M]} w'_t(2,j).
\end{align*}
Rearranging the terms, the increase in Expert 1's weight equals the decrease in Expert 2's weight:
\begin{align*}
    \sum_{j \in [M]} w_t(1,j) - \sum_{j \in [M]} w'_t(1,j) = \sum_{j \in [M]} (w'_t(2,j) - w_t(2,j)).
\end{align*}
As shown above, $w'_t(2,j) - w_t(2,j) > 0$ for all $j$, and specifically for $\hat{j}$, the difference is $\Omega(1)$.
Therefore, we have
\begin{align*}
    \sum_{j \in [M]} w_t(1,j) \ge \sum_{j \in [M]} w'_t(1,j) + \Omega(1) = \Omega(1).
\end{align*}
This implies that in every round, the algorithm assigns a constant probability mass $\Omega(1)$ to the suboptimal Expert 1 based on the misleading prediction $m_t$.
Consequently, the instantaneous regret is $\Omega(1)$, and the cumulative regret over $T$ rounds is $\Omega(T)$.

\paragraph{Contrast with \algoref{alg:proposed}}
In contrast, \algoref{alg:proposed} achieves sublinear regret by dynamically excluding unstable learning rates.
In the adversarial scenario described above, the product of the large learning rate and the prediction error is large: $32\eta(\hat{j})|l_t(1) - m_t(1)| \gg 1$.
Furthermore, the algorithm assigns a significant weight $w_t(1, \hat{j}) = \Omega(1)$ to the suboptimal expert.
These conditions trigger the penalty update mechanism, causing the cumulative penalty $L_t(\hat{j})$ to increase rapidly.
Once $L_t(\hat{j})$ exceeds the threshold, the algorithm removes the index $\hat{j}$ from the active set $\mathcal{J}_t$.
With the dangerous learning rate excluded, the algorithm relies on conservative learning rates or other safe candidates.
Therefore, as established in \thmref{thm:switching_regret}, \algoref{alg:proposed} maintains a worst-case regret bound of $O(\sqrt{T})$, successfully avoiding the linear regret suffered by the safeguard-free counterpart. 
\qed

\subsection{Proof of \lemref{lem:variable_experts}}

\paragraph{Setup and Definitions}
We define the comparators used in the regret analysis.
Consider a fixed comparator vector $u \in \Delta_K$.
We restrict the support of $u$ to the intersection of the expert sets over the analysis interval $\{t_1, \dots, t_2\}$.
Specifically, if $i \notin \bigcap_{t=t_1}^{t_2} K_t$, then $u(i) = 0$.
For an arbitrary optimal learning rate index $j^*$, we define the extended comparator $\tilde{u} \in \Delta_{K \times M}$ such that $\tilde{u}(i, j) = u(i)$ if $j=j^*$ and $\tilde{u}(i, j) = 0$ otherwise.
To ensure the comparator lies within the decision set $\Omega_t$, we define the time-dependent constrained comparator $\bar{u}_t$ as a mixture of $\tilde{u}$ and the uniform distribution.
Specifically, we define $\bar{u}_t(i, j) = (1 - \gamma_t) \tilde{u}(i, j) + \varepsilon_t$ for all $i \in K_t$ and $j \in [M]$, where $\gamma_t$ is a coefficient that ensures $\bar{u}_t$ sums to 1.

\paragraph{Regret Decomposition and Approximation Error Evaluation}
We decompose the cumulative regret against the fixed comparator $u$ into two components:
the algorithmic regret against the time-dependent constrained comparator $\bar{u}_t$ and the approximation error.
Specifically, we express the regret over the interval $\{t_1, \dots, t_2\}$ as follows:
$$\sum_{t=t_1}^{t_2} \langle \ell_t, p_t - u \rangle = \sum_{t=t_1}^{t_2} \langle \ell_t, p_t - \bar{u}_t \rangle + \sum_{t=t_1}^{t_2} \langle \ell_t, \bar{u}_t - u \rangle.$$
The first term represents the regret minimized by the algorithm under the constraint $\Omega_t$.
The second term quantifies the bias introduced by projecting the unconstrained comparator $u$ onto the feasible set defined by the lower bound $\varepsilon_t$.

We bound the approximation error term $\sum_{t=t_1}^{t_2} \langle \ell_t, \bar{u}_t - u \rangle$.
By the definition of $\bar{u}_t$, the $L_1$-norm of the difference vector satisfies $||\bar{u}_t - \tilde{u}||_1 \le |K_t| M \varepsilon_t$.
Since the total weight $W_t$ scales with $\Theta(T^2)$ due to the initialization of learning rates, the threshold $\varepsilon_t = 1/(64 W_t T)$ is of the order $O(T^{-3})$.
Consequently, the bias per round is bounded by $O(T^{-2})$.
Summing this error over the horizon $T$, the cumulative approximation error remains $O(T^{-1})$.
This magnitude is negligible.
Therefore, we focus our subsequent analysis on bounding the first term, the algorithmic regret against $\bar{u}_t$.

\paragraph{Telescoping Decomposition of the Stability Term}
We analyze the algorithmic regret using \lemref{lem:OMD_bound}.
Let $\psi_t$ denote the potential function defined on the active expert set at round $t$.
Applying \lemref{lem:OMD_bound}, we have
\begin{align*}
    \langle \tilde{\ell}_t, \tilde{p}_t - \bar{u}_t \rangle
    &\le D_{\psi_t}(\bar{u}_t, p'_t) - D_{\psi_t}(\bar{u}_t, \tilde{p}'_{t+1}) + A'_t,
\end{align*}
where $A'_t = \langle \tilde{\ell}_t - \tilde{m}_t + a_t, \tilde{p}_t - \tilde{p}'_{t+1} \rangle - D_{\psi_t}(\tilde{p}'_{t+1}, \tilde{p}_t) - \langle a_t, \tilde{p}_t - \bar{u}_t \rangle$.
Summing over the interval $\{t_1, \dots, t_2\}$, we focus on the stability term:
\begin{align*}
    \sum_{t=t_1}^{t_2} \left( D_{\psi_t}(\bar{u}_t, p'_t) - D_{\psi_t}(\bar{u}_t, \tilde{p}'_{t+1}) \right).
\end{align*}
Unlike the standard setting with a fixed comparator, the time-varying nature of $\bar{u}_t$ prevents a direct telescoping cancellation.
Therefore, we decompose this sum into the initial divergence, the final divergence, and the cumulative linking terms:
\begin{align*}
    &\quad D_{\psi_{t_1}}(\bar{u}_{t_1}, p'_{t_1}) - D_{\psi_{t_2}}(\bar{u}_{t_2}, \tilde{p}'_{t_2+1}) \\
    &+ \sum_{t=t_1}^{t_2-1} \left( D_{\psi_{t+1}}(\bar{u}_{t+1}, p'_{t+1}) - D_{\psi_t}(\bar{u}_t, \tilde{p}'_{t+1}) \right).
\end{align*}
We further decompose the linking term to isolate the effects of the weight update and the comparator change. Specifically, we split the term inside the summation into two components:
\begin{align*}
    &\quad D_{\psi_{t+1}}(\bar{u}_{t+1}, p'_{t+1}) - D_{\psi_t}(\bar{u}_t, \tilde{p}'_{t+1}) \\
    &= \underbrace{\left( D_{\psi_{t+1}}(\bar{u}_{t+1}, p'_{t+1}) - D_{\psi_t}(\bar{u}_{t+1}, \tilde{p}'_{t+1}) \right)}_{\text{Distribution Shift}} \\
    &\quad + \underbrace{\left( D_{\psi_t}(\bar{u}_{t+1}, \tilde{p}'_{t+1}) - D_{\psi_t}(\bar{u}_t, \tilde{p}'_{t+1}) \right)}_{\text{Comparator Drift}}.
\end{align*}
The first component, the distribution shift, captures the change in the potential due to the update of the expert set and weights while keeping the comparator fixed at $\bar{u}_{t+1}$.
The second component, the comparator drift, quantifies the divergence change caused solely by the variation in the comparator from $\bar{u}_t$ to $\bar{u}_{t+1}$.

\paragraph{Analysis of Distribution Shift}
We analyze the distribution shift component of the linking term, specifically focusing on the logarithmic contribution arising from the weight update.
The difference in the potential function due to the distribution update is governed by the ratio of the normalization constants.
For any expert $i$ in the support of $\tilde{u}_{t+1}$, which by definition belongs to the intersection of $K_t$ and $K_{t+1}$, the update rule implies that the probability satisfies $p'_{t+1}(i, j) = (W_t / W_{t+1}) \tilde{p}'_{t+1}(i, j)$.
Substituting this relation into the Bregman divergence, the logarithmic difference becomes
\begin{align*}
    &\quad \sum_{i \in K_t \cap K_{t+1}} \sum_{j \in [M]} \frac{\bar{u}_{t+1}(i, j)}{\eta(j)} \log \left( \frac{\tilde{p}'_{t+1}(i, j)}{p'_{t+1}(i, j)} \right) \\
    &= \sum_{i, j} \frac{\bar{u}_{t+1}(i, j)}{\eta(j)} \log \left( \frac{W_{t+1}}{W_t} \right).
\end{align*}
We approximate this weighted sum by the dominant term
\begin{align*}
    \frac{1}{\eta(j^*)} \log (W_{t+1} / W_t),
\end{align*}
which corresponds to the weight concentrated on the optimal learning rate $j^*$.
The approximation error arises from the residual probability mass of $\bar{u}_{t+1}$ distributed over indices other than $j^*$, which is proportional to the threshold $\varepsilon_{t+1}$.
Given the definition $\varepsilon_t = 1/(64 W_t T)$, this error term is of the order $O(T^{-1} \log(W_{t+1}/W_t))$ per round.
Summing over the full horizon $T$, the cumulative error is bounded by $O(T^{-1} \log T)$, which is negligible relative to the main terms.

Next, we evaluate the linear term of the Bregman divergence difference to show that the cost of weight redistribution is bounded.
The linear component is expressed as the difference between the marginal sums of the distributions
\begin{align*}
    &\quad \sum_{j \in [M]} \frac{1}{\eta(j)} \left( q_{p'_{t+1}}(j) - q_{\tilde{p}'_{t+1}}(j) \right) \\
    &= \sum_{j \in [M]} \frac{1}{\eta(j)} \left( \sum_{i \in K_{t+1}} p'_{t+1}(i, j) - \sum_{i \in K_t} \tilde{p}'_{t+1}(i, j) \right).
\end{align*}
We partition the set of experts into three disjoint categories: continuing experts ($S_t = K_t \cap K_{t+1}$), newly added experts ($N_t = K_{t+1} \setminus K_t$), and removed experts ($R_t = K_t \setminus K_{t+1}$).
For continuing experts, the relationship $p'_{t+1} = (W_t / W_{t+1}) \tilde{p}'_{t+1}$ and the monotonicity assumption $W_t \le W_{t+1}$ imply that the contribution is non-positive:
$(W_t/W_{t+1} - 1) \sum_{i \in S_t} \tilde{p}'_{t+1}(i, j) \le 0$.
Similarly, the term corresponding to removed experts, $-\sum_{i \in R_t} \tilde{p}'_{t+1}(i, j)$, is strictly non-positive.
Consequently, we discard these non-positive terms and bound the linear difference solely by the contribution from newly added experts:
\begin{align*}
    \sum_{j \in [M]} \frac{1}{\eta(j)} \sum_{i \in N_t} p'_{t+1}(i, j).
\end{align*}
This remaining term represents the initial weight assigned to new experts, which amounts to $O(1/T)$ per round and accumulates to a negligible error over the horizon.

\paragraph{Analysis of Comparator Drift}
We explicitly evaluate the comparator drift component of the linking term.
This term arises because the constrained comparator $\bar{u}_t$ varies over time to satisfy the condition $\bar{u}_t \in \Omega_t$.
Specifically, we bound the difference in Bregman divergence with respect to the fixed distribution $\tilde{p}'_{t+1}$:
\begin{align*}
    \Delta_{\text{drift}} = D_{\psi_t}(\bar{u}_{t+1}, \tilde{p}'_{t+1}) - D_{\psi_t}(\bar{u}_t, \tilde{p}'_{t+1}).
\end{align*}
Expanding the definition of the potential function, we decompose this difference into the change in entropy and the change in the linear cross-entropy term:
\begin{align*}
    \Delta_{\text{drift}}
    &= \sum_{i, j} \frac{\log \tilde{p}'_{t+1}(i, j)}{\eta(j)} ( f(\bar{u}_{t+1}(i, j)) - f(\bar{u}_t(i, j)) \\
    &\quad\quad - (\bar{u}_{t+1}(i, j) - \bar{u}_t(i, j)) ),
\end{align*}
where $f(x) = x \log x$.

We bound the magnitude of this drift by analyzing the variation in $\bar{u}_t$.
The value of $\bar{u}_t$ changes primarily due to the update of the threshold $\varepsilon_t$ and the normalization coefficient $\gamma_t$.
Since $\varepsilon_t = O(T^{-3})$ and $W_t$ increases monotonically, the element-wise difference $|\bar{u}_{t+1}(i, j) - \bar{u}_t(i, j)|$ is bounded by $O(T^{-4})$.
Both the derivative of the entropy function $f'(x) = 1 + \log x$ and the logarithmic term $\log \tilde{p}'_{t+1}(i, j)$ are bounded by $O(\log T)$ within the domain defined by $\varepsilon_t$.
Consequently, even after scaling by the inverse learning rate $1/\eta(j) = O(T)$, the total drift error per round is bounded by $O(T^{-3} \log T)$.
Summing this error over the horizon $T$, the cumulative comparator drift is $O(T^{-2} \log T)$.
This term is negligible compared to the main terms.

\paragraph{Analysis of the Initial Term}
We evaluate the initial Bregman divergence term $D_{\psi_{t_1}}(\bar{u}_{t_1}, p'_{t_1})$,
distinguishing between the intermediate case ($t_1 > 1$) and the initialization case ($t_1 = 1$).
For any intermediate round $t_1 > 1$, the distribution $p'_{t_1}$ results from a projection or update in the previous round that enforces the lower bound constraint.
Specifically, $p'_{t_1}(i, j) \ge \varepsilon_{t_1}$ holds for all active experts.
Consequently, the logarithmic barrier term in the divergence is bounded by $\frac{1}{\eta(j^*)} \log (1/\varepsilon_{t_1})$.
Substituting the definition $\varepsilon_{t_1} = 1/(64 W_{t_1} T)$, this bound simplifies to $\frac{1}{\eta(j^*)} (\log W_{t_1} + \log (64T))$.
This term naturally captures the complexity of the model space at the start of the analysis interval.

For the specific case where the analysis begins at the first round ($t_1 = 1$),
the algorithm initializes weights as $w'_1(i, j) = \eta(j)^2$ without explicitly enforcing the threshold $\varepsilon_1$.
The normalized probability is given by $p'_1(i, j) = \eta(j)^2 / W_1$.
Since the constrained comparator $\bar{u}_1$ places a probability mass of approximately $1 - O(T^{-3})$ on the optimal expert pair $(i^*, j^*)$,
the divergence is dominated by the term $\frac{1}{\eta(j^*)} \log (1/p'_1(i^*, j^*))$.
Substituting the initialization values and noting that $W_1 = \Theta(T^2)$ and $\eta(j) \ge \Omega(T^{-1})$, this term evaluates to $\frac{1}{\eta(j^*)} \log (W_1 / \eta(j^*)^2)$.
This logarithmic term scales as $O(\log T)$, which confirms that the initial divergence remains bounded within the standard logarithmic order even without the explicit enforcement of $\Omega_1$ at the initialization step.

\balance
\paragraph{Summation of Error Terms and Final Bound Derivation}
We combine the results from the previous steps to derive the final regret bound.
The total regret decomposes into the stability term, the penalty term, and the approximation error.
We first address the penalty term, which arises from the interaction between the loss vector and the weight update.
Following the analysis in Lemma 1, we bound the interaction term $A'_t$ associated with the OOMD update.

We analyze the interaction term $A'_t$ by considering two distinct regimes based on the magnitude of the learning rate and the prediction error.
Recall that $r_t(i) = \ell_t(i) - m_t(i)$ denote the prediction error.
For the regime where the learning rate is small, specifically when $32\eta(j)|r_t(i)| \le 1$,
we utilize the inequality $e^x - 1 - x \le x^2$ for $x \ge -1$.
In this case, the interaction term is bounded by a quadratic term of the prediction error, $32\eta(j) w_t(i, j) r_t(i)^2$.
This term captures the standard regret bound for bounded losses.
Conversely, for the large learning rate regime where $32\eta(j)|r_t(i)| > 1$, we cannot rely on the quadratic bound.
Instead, we bound the interaction term by the penalty $\Delta L_t(j)$, which explicitly measures the deviation caused by aggressive updates.
Summing over all experts and learning rates, the cumulative interaction term satisfies:
\begin{align*}
    &\sum_{t=t_1}^{t_2} A'_t \\
    &\le 32\eta(j^*) \sum_{t=t_1}^{t_2} \sum_{i \in K_t} \bar{u}_t(i, j^*) r_t(i)^2 + \sum_{t=t_1}^{t_2} \sum_{j \in [M]} \Delta L_t(j) + O(T^{-1}).
\end{align*}
The $O(T^{-1})$ term accounts for the residual errors from the correction term approximation.

We now sum the stability term, the penalty term, and the approximation errors derived in the preceding steps.
The stability term, including the initial divergence and the costs associated with distribution shift, is bounded by $\frac{1}{\eta(j^*)} \log (W_{t_2+1} M T)$.
The comparator drift and bias errors contribute an additional $O(T^{-2} \log T)$ term, which vanishes asymptotically.
Combining these components yields the final regret bound for the interval $\{t_1, t_2\}$.
\qed

\end{document}